\documentclass{article} 
\usepackage{iclr2026_conference,times}

\usepackage{amsmath,amsfonts,bm}

\def\eqref#1{equation~\ref{#1}}

\def\1{\bm{1}}

\DeclareMathAlphabet{\mathsfit}{\encodingdefault}{\sfdefault}{m}{sl}
\SetMathAlphabet{\mathsfit}{bold}{\encodingdefault}{\sfdefault}{bx}{n}

\usepackage{hyperref}
\usepackage{url}
\usepackage{booktabs}
\usepackage{multirow}
\usepackage{graphicx}
\usepackage{subcaption}
\usepackage{amssymb}
\usepackage[table,dvipsnames]{xcolor}
\usepackage{float}

\usepackage{algorithm}
\usepackage{algorithmic}
\usepackage{setspace}
\usepackage{xspace}

\usepackage{amsthm}
\newtheorem{theorem}{Theorem}
\newtheorem{lemma}{Lemma}

\newtheorem*{theorem*}{Theorem}

\title{FlyPrompt: Brain-Inspired Random-Expanded Routing with Temporal-Ensemble Experts for General Continual Learning}

\author{Hongwei Yan\textsuperscript{\rm 1} \hspace{3pt} Guanglong Sun\textsuperscript{\rm 1} \hspace{3pt} Kanglei Zhou\textsuperscript{\rm 2} \hspace{3pt} Qian Li\textsuperscript{\rm 3} \hspace{3pt} Liyuan Wang\textsuperscript{\rm 2}\footnotemark[1] \hspace{3pt} Yi Zhong\textsuperscript{\rm 1}\thanks{Corresponding Authors: L. Wang and Y. Zhong.} \\
\textsuperscript{\rm 1}School of Life Sciences, IDG/McGovern Institute for Brain Research, Tsinghua University \\
\textsuperscript{\rm 2}Department of Psychological and Cognitive Sciences, Tsinghua University \\
\textsuperscript{\rm 3}School of Medicine, Shenzhen Campus of Sun Yat-Sen University \\
\texttt{\{yanhw22,sgl23\}@mails.tsinghua.edu.cn, liqian255@mail.sysu.edu.cn}\\
\texttt{\{zhoukanglei, liyuanwang, zhongyithu\}@tsinghua.edu.cn} \\
\vspace{-.8cm}
}

\newcommand{\TEsquare}{TE\textsuperscript{2}\xspace}

\definecolor{rebuttal}{RGB}{30, 144, 255} 
\newcommand{\rebt}[1]{#1}

\iclrfinalcopy 
\begin{document}

\maketitle

\begin{abstract}
General continual learning (GCL) challenges intelligent systems to learn from single-pass, non-stationary data streams without clear task boundaries. While recent advances in continual parameter-efficient tuning (PET) of pretrained models show promise, they typically rely on multiple training epochs and explicit task cues, limiting their effectiveness in GCL scenarios. Moreover, existing methods often lack targeted design and fail to address two fundamental challenges in continual PET: how to allocate expert parameters to evolving data distributions, and how to improve their representational capacity under limited supervision. Inspired by the fruit fly's hierarchical memory system characterized by sparse expansion and modular ensembles, we propose FlyPrompt, a brain-inspired framework that decomposes GCL into two subproblems: \emph{expert routing} and \emph{expert competence improvement}. FlyPrompt introduces a randomly expanded analytic router for instance-level expert activation and a temporal ensemble of output heads to dynamically adapt decision boundaries over time. Extensive theoretical and empirical evaluations demonstrate FlyPrompt's superior performance, achieving up to 11.23\%, 12.43\%, and 7.62\% gains over state-of-the-art baselines on CIFAR-100, ImageNet-R, and CUB-200, respectively. Our source code is available at \href{https://github.com/AnAppleCore/FlyGCL}{FlyGCL}.
\end{abstract}


\section{Introduction}\label{sec:intro}

General Continual Learning (GCL)~\citep{buzzega2020dark, de2021continual}, aims to equip intelligent systems with the ability to learn continuously from non-stationary, single-pass data streams, where tasks may not have clear boundaries and can evolve over time. Unlike traditional Continual Learning (CL)~\citep{wang2024comprehensive, parisi2019continual}, which assumes well-defined task boundaries and multiple training epochs, GCL presents a much more challenging problem, as it requires rapid adaptation, robust knowledge retention, and efficient resource usage under conditions of limited supervision and task ambiguity (Fig.~\ref{fig:running_acc}). The ability to effectively tackle GCL has profound implications for real-world applications such as autonomous agents and personal assistants, where systems must learn from dynamic environments without clear task definitions.

Recent advances\footnote{Due to the page limit, we present a comprehensive summary of related work in Appendix~\ref{sec:related}.} in parameter-efficient tuning (PET) of pretrained models (PTMs) have shown promise in CL~\citep{wang2022learning, wang2022dualprompt, smith2023coda}, but they still face fundamental limitations under GCL conditions. Such methods introduce task-specific prompt experts to adapt PTMs incrementally, and typically rely on clear task cues and sufficient gradient updates to allocate and train expert modules~\citep{wang2024hide, wang2022sprompt}. However, those assumptions no longer hold in GCL~\citep{koh2021online, moon2023online}. We therefore identify two fundamental challenges that remain unresolved: (1) how to dynamically route inputs to appropriate experts without task labels or iterative training, and (2) how to ensure that each expert maintains strong and adaptive representations under sparse and imbalanced supervision. Both remain non-trivial and underexplored.

The complexity of GCL has also been extensively studied in biological systems, where organisms have evolved efficient strategies for lifelong learning in dynamic environments. The fruit fly \textit{Drosophila} provides a compelling model: despite having fewer than 100{,}000 neurons, it exhibits robust memory consolidation, context-aware behavior, and stable learning under minimal supervision~\citep{davis2023learning, li2020connectome, modi2020drosophila, owald2015olfactory}. These capabilities are largely attributed to the mushroom body, a central brain structure that encodes sensory inputs via sparse random projections and organizes learning into modular, hierarchical compartments~\citep{aso2014mushroom, dasgupta2017neural,wang2021afec,wang2022coscl,wang2023incorporating,wang2025convergent} (see Fig.~\ref{fig:method}{\em (Left)}). Projection neurons (PNs) from the antennal lobe connect randomly to Kenyon cells (KCs) in mushroom body, yielding high-dimensional sparse codes that support input separation and routing even under noisy or overlapping conditions~\citep{turner2008olfactory, honegger2011cellular}. Furthermore, different KC subregions exhibit plasticity on distinct timescales~\citep{aso2014neuronal, aso2016dopaminergic}, enabling both rapid adaptation and long-term consolidation~\citep{cervantes2013system, bouzaiane2015two,wang2025convergent}. These mechanisms closely mirror the goals of GCL, offering principled inspiration for tackling its core challenges.

Building upon these neurobiological principles and our preliminary analysis in Sec.~\ref{sec:pre_analysis}, we propose to decompose the GCL challenges into two essential subproblems: (1) \textit{expert routing}, which aims to assign each input to an appropriate subnetwork (expert) under unknown and shifting task boundaries; and (2) \textit{expert competence improvement}, which seeks to enhance the robustness and adaptability of each expert given limited training and imbalanced class exposure. To address these challenges, we introduce \textbf{FlyPrompt}, a brain-inspired framework that integrates two key components: (i) a \textit{Random Expanded Analytic Router} (REAR) that mimics the fruit fly's sparse expansion circuit to rapidly assign inputs to experts in a forward-only, closed-form manner; and (ii) a \textit{Task-wise Experts with Temporal Ensemble (\TEsquare)} that captures knowledge across multiple time scales using exponential moving averages, mirroring the compartmental consolidation observed in the mushroom body.

FlyPrompt is supported by both theoretical analysis and empirical validation. Across diverse GCL benchmarks, including CIFAR-100, ImageNet-R, and CUB-200, it consistently outperforms state-of-the-art CL and GCL methods, achieving accuracy improvements of up to 11.23\%, 12.43\%, and 7.62\%, respectively. By integrating biologically grounded design with principled algorithmic structure, FlyPrompt offers an interdisciplinary perspective on addressing the core challenges of GCL and also exemplifies the potential of the emerging field of NeuroAI~\citep{zador2023catalyzing}.

\begin{figure}[t]
    \centering
    \vspace{-0.4cm}
    \includegraphics[width=1.0\linewidth]{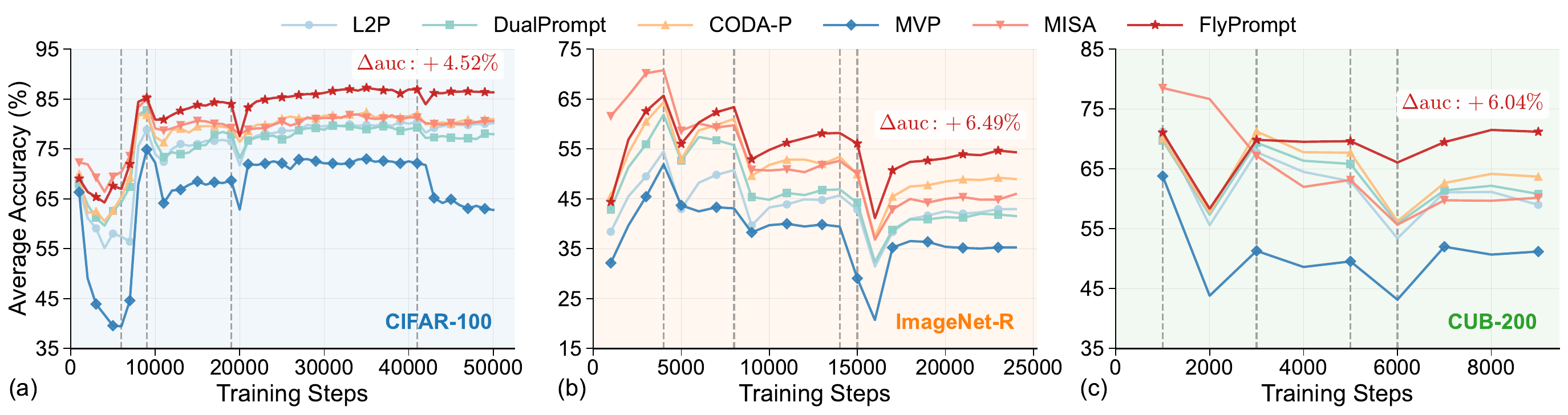}
    \vspace{-0.5cm}
    \caption{Any-time average accuracy of GCL methods over three datasets using Sup-21K. Dashed lines indicate task transition. $\Delta\rm{auc}$, the improvement of area-under-curve score by FlyPrompt.}
    \label{fig:running_acc}
    \vspace{-0.4cm}
\end{figure}


\section{Preliminaries}\label{sec:preliminary}

In this section, we formulate GCL, and then evaluate PET methods in an instantiated GCL scenario.

\subsection{Problem Formulation}\label{sec:problem_setup}

In CL, a model learns sequential tasks $t \in \{1, \cdots, T\}$, each associated with a dataset $\mathcal{D}_t = {(\bm{x}_t, y_t)}$ where $\bm{x}_t \in \mathcal{X}_t$ and $y_t \in \mathcal{Y}_t$. The model comprises a backbone $f_{\bm{\theta}}(\cdot)$ and an output head $g_{\bm{\psi}}(\cdot)$, which together produce predictions $\hat{y} = g_{\bm{\psi}}(f_{\bm{\theta}}(\bm{x}))$. The objective is to learn a unified mapping from input domains $\mathcal{X} = \bigcup_t \mathcal{X}_t$ to label spaces $\mathcal{Y} = \bigcup_t \mathcal{Y}_t$.
Classical CL settings impose structural assumptions on the input or label space. Domain-incremental learning (DIL) assumes disjoint input domains with a shared label space ($\mathcal{X}_i \cap \mathcal{X}_j = \emptyset$, $\mathcal{Y}_i = \mathcal{Y}_j$), while task-incremental and class-incremental learning (TIL, CIL) assume disjoint label spaces ($\mathcal{Y}_i \cap \mathcal{Y}_j = \emptyset$), with TIL additionally providing task identity at test time~\citep{van2019three}. Under these assumptions, the learning objective can be decomposed into two orthogonal subproblems: \emph{task identity prediction} (TIP), which selects an appropriate task-specific module, and \emph{within-task prediction} (WTP), which performs classification under the selected module~\citep{kim2022theoretical, wang2023hierarchical}.

GCL, however, lifts these assumptions and operates under substantially more challenging conditions. Tasks arrive as a one-pass data stream, and their label spaces may overlap with non-negligible probability: $\forall i \neq j, P(\mathcal{Y}_i \cap \mathcal{Y}_j \neq \emptyset) > 0$~\citep{koh2021online}. This entangles inter-task and intra-task interference, undermining the TIP–WTP orthogonality. Especially when using pretrained backbones, the strong priors encoded in PTMs already bias prediction before adaptation, causing task identity and class discrimination to co-evolve~\citep{wang2023hierarchical, wang2024hide}. Moreover, GCL introduces additional difficulties such as severe intra-task class imbalance (e.g., long-tailed distributions) and limited training iterations. These problems are compounded by memory constraints~\citep{buzzega2020dark, de2021continual}, where storing past data is restricted or disallowed.

A representative instantiation of GCL is Si-Blurry~\citep{moon2023online}, which explicitly partitions the global label space $\mathcal{Y}$ into a disjoint subset $\mathcal{Y}^{\rm{D}}$ and a blurry subset $\mathcal{Y}^{\rm{B}}$, with $\mathcal{Y} = \mathcal{Y}^{\rm{D}} \cup \mathcal{Y}^{\rm{B}}$ and $\mathcal{Y}^{\rm{D}} \cap \mathcal{Y}^{\rm{B}} = \emptyset$. The \emph{disjoint class ratio} $r_{\rm{D}} = |\mathcal{Y}^{\rm{D}}| / |\mathcal{Y}|$ controls the proportion of task-specific classes, while the \emph{blurry sample ratio} $r_{\rm{B}}$ determines how frequently classes in $\mathcal{Y}^{\rm{B}}$ reappear across tasks. This flexible design captures the stochasticity and heterogeneity of GCL, which has been validated in recent theoretical and empirical work~\citep{mi2020generalized, zhuang2024gacl, kang2025advancing}. We therefore adopt Si-Blurry as the default GCL benchmark
\rebt{(see Appendix~\ref{sec:task_boundary} for discussion about the task/session boundary information and our empirical rationality of using Si-Blurry)}.

\begin{figure}[t]
\vspace{-0.4cm}
    \centering
    \begin{subfigure}[b]{0.33\textwidth}
        \centering
        \includegraphics[width=\textwidth]{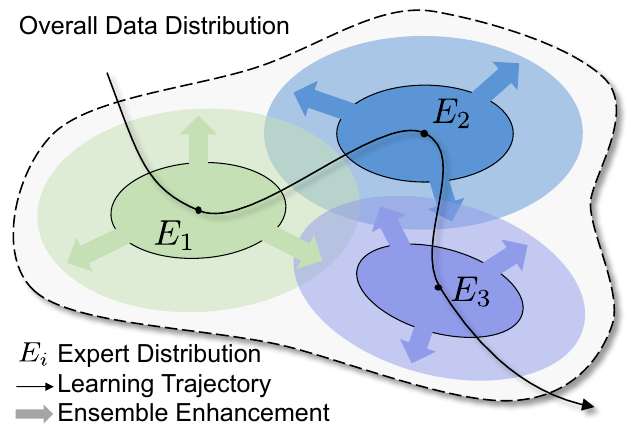}
        \caption{GCL as Multi-Expert Learning.}
        \label{fig:gcl_diagram}
    \end{subfigure}
    \hfill
    \begin{subfigure}[b]{0.32\textwidth}
        \centering
        \includegraphics[width=\textwidth]{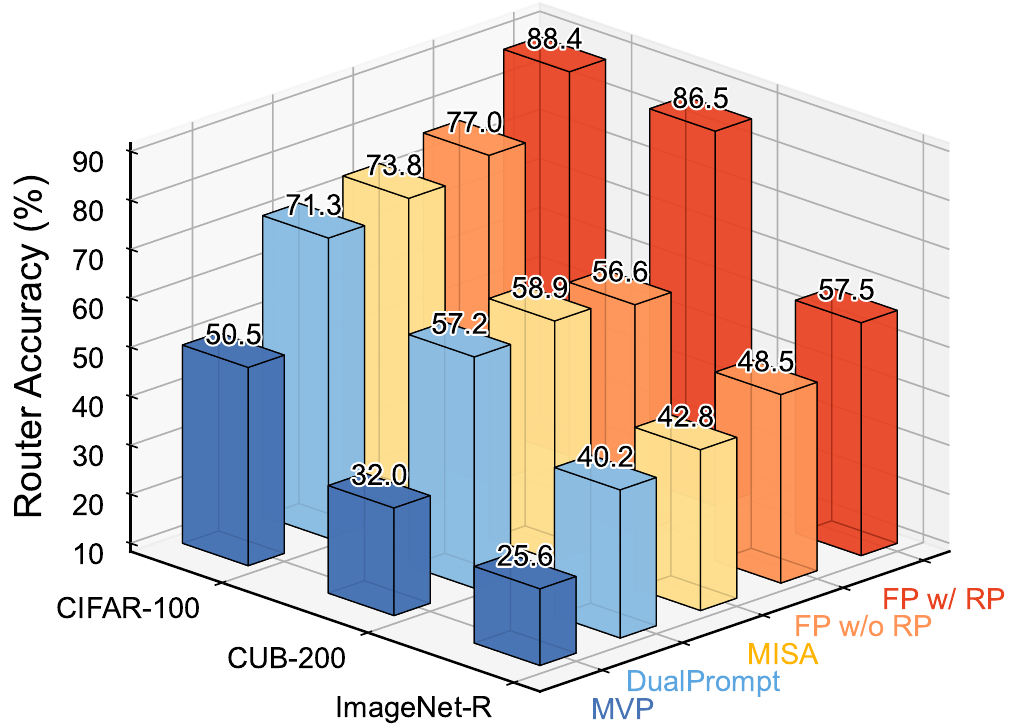}
        \caption{Prompt router accuracy.}
        \label{fig:router_acc}
    \end{subfigure}
    \hfill
    \begin{subfigure}[b]{0.32\textwidth}
        \centering
        \includegraphics[width=\textwidth]{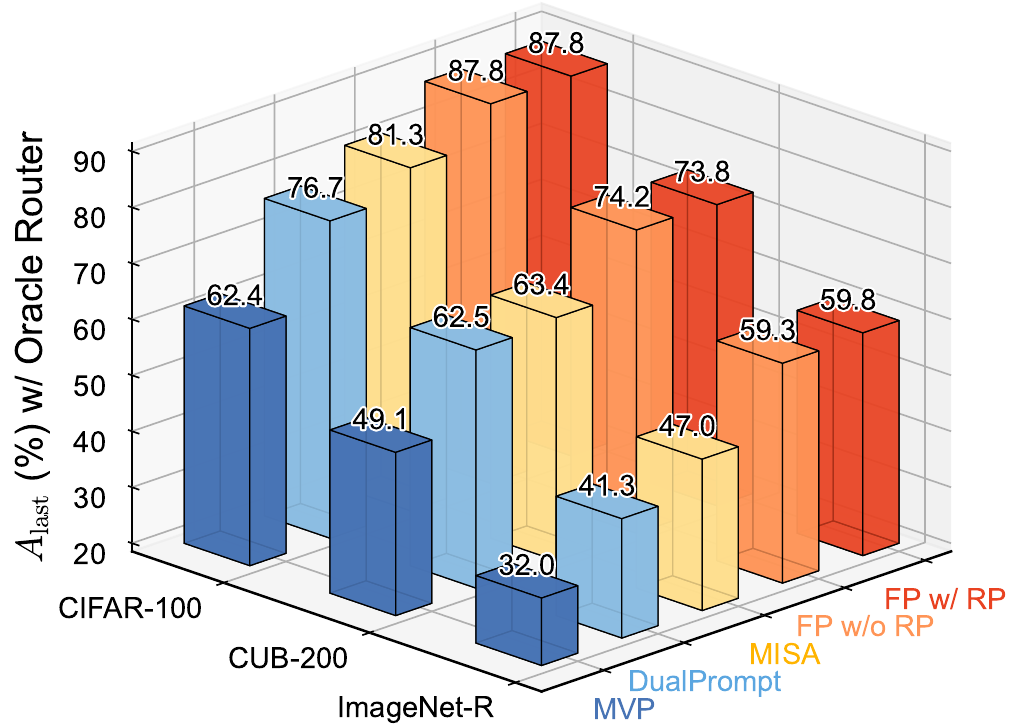}
        \caption{Accuracy with oracle router.}
        \label{fig:oracle_acc}
    \end{subfigure}
    \vspace{-0.1cm}
    \caption{Empirical analysis of GCL.
    (a) A schematic illustration of GCL viewed as multi-expert collaboration.
    (b) Prompt selection accuracy for methods with explicit expert routing designs.
    (c) Final average accuracy ($A_{\rm{last}},\uparrow$) when using a test-time oracle to provide the correct prompt identity.
    Results evaluated across three benchmarks with Sup-21K.
    FP, FlyPrompt. RP, Random Projection.}
    \label{fig:gcl_analysis}
    \vspace{-0.1cm}
\end{figure}

\begin{figure}[t]
    \centering
    \vspace{-0.2cm}
    \includegraphics[width=1.0\linewidth]{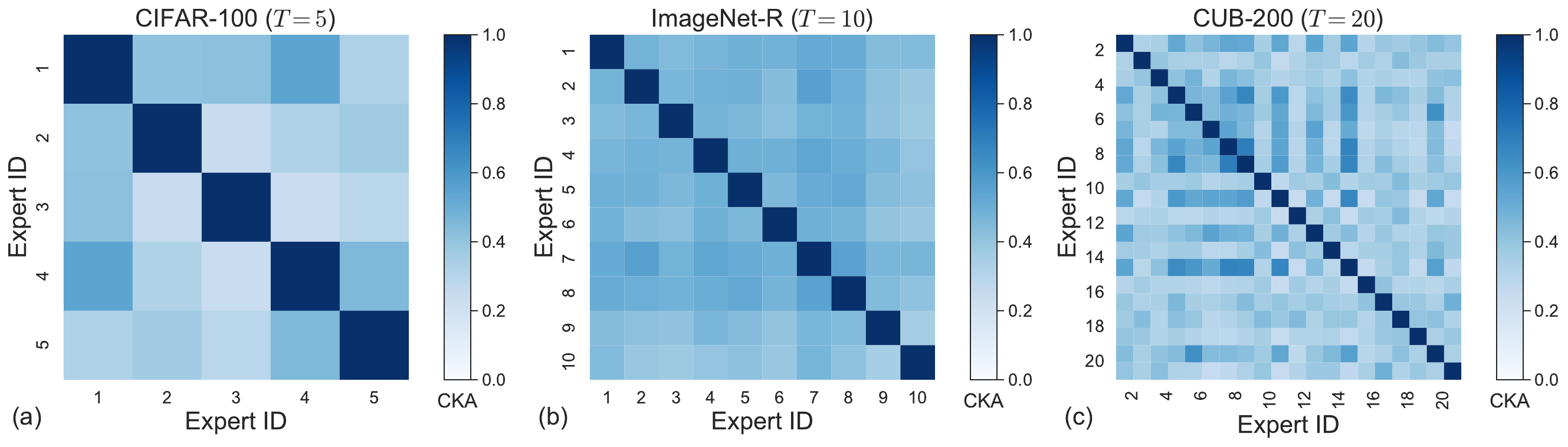}
    \vspace{-0.5cm}
    \caption{\rebt{CKA similarity of feature representations between experts of MVP on three datasets.}}
    \label{fig:cka_mvp}
    \vspace{-0.5cm}
\end{figure}

\subsection{Analysis of GCL Methods with Experts}\label{sec:pre_analysis}

Recent CL and GCL methods increasingly adopt PET techniques on top of PTMs. These methods can be seen as lightweight extensions of architecture-based CL~\citep{zhu2021prototype, wang2023incorporating}, where instead of expanding full networks, they introduce trainable modules $\bm{p}$ (e.g., adapters, prompts, and LoRA) that act as semantic-aware adaptation experts to give instructed outputs $f_{\bm{\theta}}(\bm{x};\bm{p})$. A common strategy is to maintain a pool of such experts and design a router to assign inputs to the appropriate ones. However, most existing methods, such as L2P~\citep{wang2022learning}, DualPrompt~\citep{wang2022dualprompt}, MVP~\citep{moon2023online}, CODA-P~\citep{smith2023coda}, and MISA~\citep{kang2025advancing}, train these routing functions synchronously with the stream of incoming data, making them vulnerable to distributional shifts and limited iterations. These issues are especially pronounced in GCL, where blurry task boundaries and online constraints prohibit iterative tuning, and class imbalance further weakens the quality of per-task representations.

Based on this observation, we propose to decompose the GCL problem into two critical subproblems: \textbf{expert routing}, which determines which expert to assign to each input; and \textbf{expert competence}, which enhances the quality and robustness of each expert's representation. Compared to the classical TIP-WTP formulation for CL with clearly segmented tasks, GCL's blurry structure and overlapping class distributions make semantic-level experts a more suitable abstraction for adaptation (see Fig.~\ref{fig:gcl_diagram}). Moreover, since the same class may appear in multiple tasks, the correspondence between classes and experts is inherently one-to-many, which further complicates routing.

To better understand these challenges, we conduct a preliminary empirical study from a multi-expert collaboration (details in Sec.~\ref{sec:exp_setup}). We first evaluate the routing accuracy of methods that explicitly predict expert identity (e.g., DualPrompt, MVP, MISA, and our FlyPrompt) after all GCL training tasks. A prediction is considered correct if the selected expert belongs to the set of experts previously trained on the true label $y_t$, acknowledging the overlap across tasks. As shown in Fig.~\ref{fig:router_acc}, existing routers based on similarity or contrastive losses still exhibit considerable limitations in expert selection. Next, we evaluate the final average accuracy under an oracle router that always selects a correct expert for each input (Fig.~\ref{fig:oracle_acc}). The results reveal that, even with perfect routing, previous methods still exhibit inferior performance, highlighting a second bottleneck: the limited competence of individual experts. 
\rebt{In a PTM-based context, such competence depends not only on the representation space shaped by each expert module, but also on how well the output head can maintain consistent decision boundaries over time; even when an early expert's encoder is frozen, a single head that keeps adapting to later data can gradually become misaligned with its fixed representation. To verify this point, our analysis of expert-specific representations using centered kernel alignment (CKA; Appendix~\ref{sec:cka_analysis}) in Fig.~\ref{fig:cka_mvp} confirms that experts indeed specialize in distinct feature subspaces, underscoring the need for accurate expert assignment. Together with the observed degradation under an oracle router, these results support decomposing GCL into the interacting subproblems above, and clarify that expert competence must account for both representation quality and decoding robustness.}


\section{FlyPrompt: A Brain-Inspired GCL Approach}\label{sec:method}

\begin{figure}[t]
\vspace{-0.4cm}
    \centering
    \includegraphics[width=1.00\linewidth]{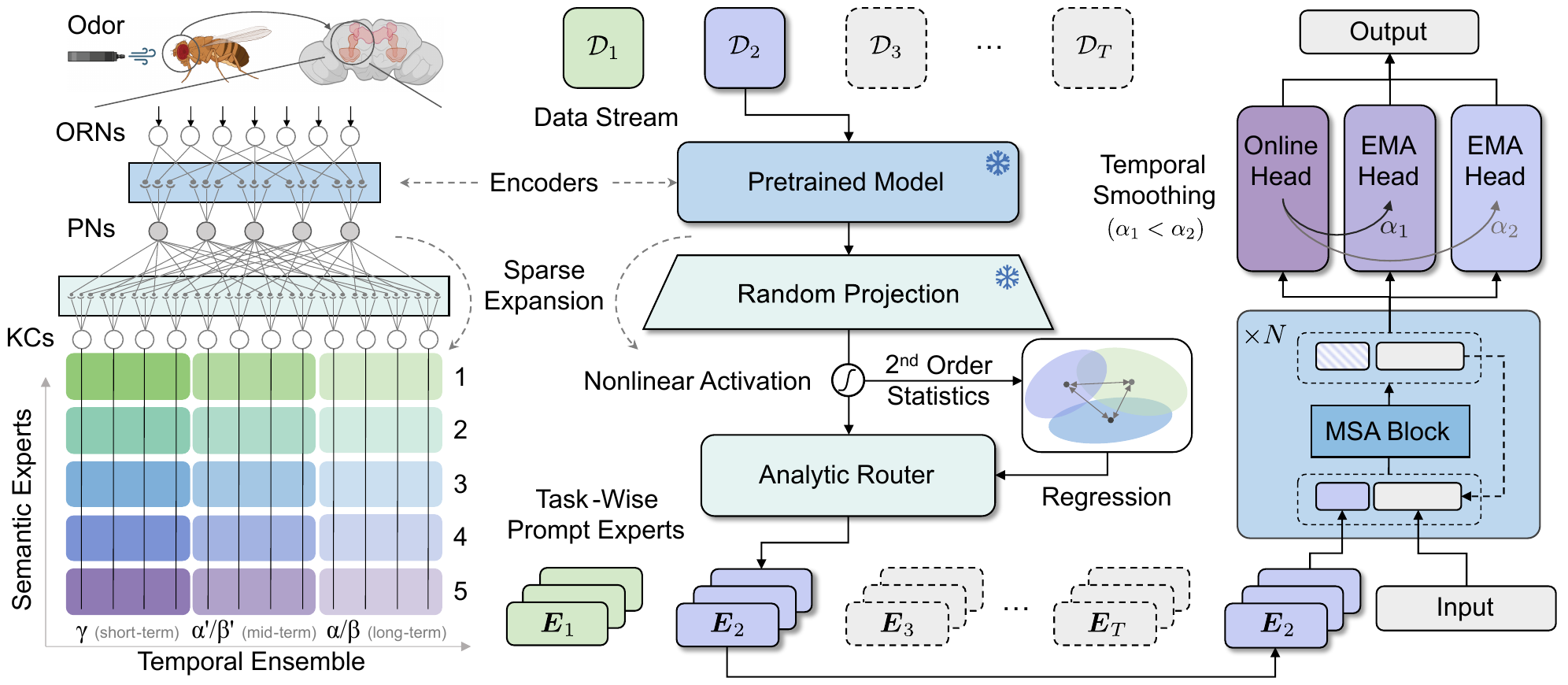}
    \vspace{-0.4cm}
    \caption{Method overview. Inspired by the fruit fly's olfactory memory system {\em (Left)}, FlyPrompt incorporates a random-expanded analytic router {\em (Middle)} and temporal ensemble-based experts {\em (Right)}. ORNs, olfactory receptor neurons. PNs, projection neurons. KCs, Kenyon cells.
    }
    \label{fig:method}
    \vspace{-0.5cm}
\end{figure}

In this section, we propose \textbf{FlyPrompt}, an innovative brain-inspired approach designed to tackle the key challenges of GCL by explicitly improving \textbf{expert routing} and \textbf{expert competence}. As shown in Fig.~\ref{fig:method}, FlyPrompt consists of two core components: (i) a \textit{Random Expanded Analytic Router} (REAR) that employs fixed random projections and closed-form updates to assign inputs to experts, inspired by the sparse expansion circuits in fruit flies, and (ii) \textit{Task-wise Experts with Temporal Ensemble (\TEsquare)} that adaptively refine class boundaries over time to improve expert-level performance, reflecting modularized ensembles architecture in fruit flies' neural systems.

\subsection{Random Expanded Analytic Router}\label{sec:REAR}

Recent studies in CL have explored the use of Random Projection (RP) to construct forward-only learners with desirable properties such as rapid adaptation and immunity to catastrophic forgetting~\citep{zhuang2022acil, zhuang2024gacl, mcdonnell2024ranpac}. When combined with nonlinear activation, RP can significantly improve the linear separability of input features by capturing high-order interactions without backpropagation. Notably, this mechanism aligns closely with the olfactory system of the fruit fly, where projection neurons (PNs) connect sparsely and randomly to a high-dimensional population of Kenyon cells (KCs) in the mushroom body, achieving a nearly 40-fold expansion. Global inhibition then enforces sparsity, allowing robust and efficient pattern separation~\citep{dasgupta2017neural, honegger2011cellular}. This neurobiological design inspires REAR, which mimics the biologically grounded random expansion to enable efficient, non-iterative expert selection in GCL.

Concretely, given a pretrained backbone $f_{\bm{\theta}}(\cdot)$ that maps input $\bm{x}$ to an embedding $\bm{h} = f_{\bm{\theta}}(\bm{x}) \in \mathbb{R}^{d}$ of dimension $d$, we apply a fixed RP followed by a nonlinear activation:
\begin{equation}
\label{eq:RP}
\bm{\varphi}(\bm{x}) = \sigma\left(f_{\bm{\theta}}(\bm{x}) \bm{R}\right) = \sigma(\bm{h} \bm{R})\in \mathbb{R}^M,
\end{equation}
where $\bm{R} \in \mathbb{R}^{d \times M}$ is a random matrix with $R_{i,j} \sim \mathcal{N}(0,1)$, $M > d$, and $\sigma(\cdot)$ is an element-wise activation function (e.g., ReLU). The resulting feature $\bm{\varphi}(\bm{x})$ is sparse and high-dimensional.

During online training, we associate each task $t$ with a corresponding expert $E_t$.
For each incoming batch $\mathcal{B}_i \subset \mathcal{D}_t$ of size $B$, we compute the projected features $\bm{\Phi}_i \in \mathbb{R}^{B \times M}$, \rebt{ whose row vectors are $\{\varphi(\bm{x})^\top| (\bm{x},y)\in \mathcal{B}_i\}$,} and update two statistics: the \textit{Gram matrix} $\bm{G} \in \mathbb{R}^{M \times M}$ capturing second-order feature correlations; and the \textit{prototype matrix} $\bm{Q} \in \mathbb{R}^{M \times T}$ storing expert-wise feature sums:
\begin{equation}\label{eq:update_G_and_Q}
\bm{G} \leftarrow \bm{G} + \bm{\Phi}_i^\top \bm{\Phi}_i, \qquad
\bm{Q} \leftarrow \bm{Q} + \bm{\Phi}_i^\top \bm{C}_t,
\end{equation}
where $\bm{C}_t \in \mathbb{R}^{B \times T}$ whose row vectors are the same one-hot embedding $\bm{c}_t\in\{0,1\}^T$ for expert $E_t$. We then construct a router matrix $\bm{U} \in \mathbb{R}^{T \times M}$ by minimizing the following objective:
\begin{equation}\label{eq:REAR_objective}
\mathcal{L}(\bm{U}) = \sum_{t=1}^T \sum_{(\bm{x}_t,y) \in \mathcal{D}_t} \left\| \bm{\varphi}(\bm{x}_t) \bm{U}^\top - \bm{c}_t \right\|^2_2 + \lambda \|\bm{U}\|^2_F,
\end{equation}
where $\lambda > 0$ is the regularization parameter. This objective encourages the router to map samples from task $t$ to the corresponding expert $E_t$ while maintaining numerical stability. Using the accumulated statistics $\bm{G}$ and $\bm{Q}$, the closed-form solution to this optimization problem is given by:
\begin{equation}\label{eq:REAR_ridge}
\widehat{\bm{U}}^\top = (\bm{G} + \lambda \bm{I})^{-1} \bm{Q}.
\end{equation}
\rebt{The calculation of $\widehat{\bm{U}}$ is only needed once upon evaluation, therefore this optimization process is efficient and lightweight compared to gradient-based routing mechanisms.} At inference time, the routing score $\bm{s}$ and selected expert $\hat{E}$ for an input $\bm{x}$ given router $\widehat{\bm{U}}$ is computed as:
\begin{equation}\label{eq:routing_score}
\bm{s}(\bm{x}) = \bm{\varphi}(\bm{x}) \widehat{\bm{U}}^\top \in \mathbb{R}^T, \qquad
\hat{E}(\bm{x}) = \arg\max_{t \leq T} s_t(\bm{x}).
\end{equation}
This routing mechanism is efficient, biologically motivated, and requires no gradient updates, making it well-suited for GCL's online, single-pass constraints.
\rebt{Unlike prior methods based on random expanded features, such as RanPAC~\citep{mcdonnell2024ranpac} or ACIL~\citep{zhuang2022acil}, which apply closed-form ridge regression directly for final classification on fixed representations, REAR uses random projections solely for instance-level expert routing while keeping each expert's prompts and heads fully trainable. Empirical comparison between the analytic router (REAR) and analytic classifier (RanPAC) under GCL benchmarks is shown in Appendix Tab.~\ref{tab:ranpac_vs_flyprompt}. And we further demonstrate the superiority of REAR upon alternative routing strategies in Appendix Tab.~\ref{tab:routing_algorithms}.}
We then summarize the core theoretical guarantee that explains why REAR yields reliable routing in the expanded sparse feature space. Full assumptions and proofs are included in Appendix~\ref{sec:rear_proof}.

\begin{theorem}[REAR, informal]\label{thm:rear_main}
With high probability over the random expansion and the data stream, the population excess risk of the ridge router learned from online statistics admits the following decomposition:
\[
\mathcal R(\widehat{\bm U})-\mathcal R(U^\star)
\;\lesssim\;
\rebt{\sqrt{\log (N)/M}}
+({\sqrt N\,\lambda})^{-1}
+\lambda,
\]
\rebt{for suitable universal constants. Therefore, by increasing the expansion dimension $M$ and the number of samples $N$, and choosing the regularization parameter $\lambda$ to balance estimation error and bias, the population excess risk (and, under a fixed margin assumption on expert scores; see Appendix~\ref{sec:rear_proof}, the misrouting probability) can be made arbitrarily small.}
\end{theorem}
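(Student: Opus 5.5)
The plan is to split the excess risk into three parts, each matching one term of the bound, via the triangle inequality. The comparison chain is $\widehat{\bm U}\to \bm U_\lambda\to \bm U^\star_M\to U^\star$. Here $U^\star$ is the population-optimal router in the kernel (infinite-width) limit of the random-feature map $\bm\varphi$. $\bm U^\star_M$ is the population ridge-free optimum restricted to the $M$ random features. $\bm U_\lambda=\Sigma_M(\Sigma_M+\lambda \bm I)^{-1}$-type is the population ridge solution $\bm U_\lambda^\top=(\Sigma+\lambda \bm I)^{-1}\bm\mu$, with $\Sigma=\mathbb E[\bm\varphi^\top\bm\varphi]$ and $\bm\mu=\mathbb E[\bm\varphi^\top\bm c]$. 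Finally, $\widehat{\bm U}$ is given by equation~\ref{eq:REAR_ridge} with $\bm G/N,\bm Q/N$ (rescaling $\lambda$ accordingly).

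Since the squared loss is quadratic, I will write $\mathcal R(\bm U)-\mathcal R(\bm U')=\|(\bm U-\bm U')\Sigma^{1/2}\|_F^2+2\langle\cdot,\cdot\rangle$. I will bound each difference using boundedness of $\|\bm\varphi\|$, under assumptions I will state explicitly: bounded embeddings $\|\bm h\|\le 1$, a Lipschitz $\sigma$, and normalization of $\bm\varphi$ by $1/\sqrt M$.

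The three terms are handled as follows.

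\emph{Approximation term.} Using standard random-features results (Rahimi--Recht type), I will show that with probability $1-\delta$ over $\bm R$, the $M$-dimensional class approximates the kernel class uniformly with error $O(\sqrt{\log(N/\delta)/M})$. The $\log N$ arises from a union bound over the $N$ stream points, or equivalently a covering of the data support, combined with Hoeffding/Bernstein concentration of $\frac1M\sum_j\sigma(\bm h\bm r_j)\sigma(\bm h'\bm r_j)$ around the kernel $k(\bm h,\bm h')$. Gaussian $\bm R$ with Lipschitz $\sigma$ gives sub-Gaussian features, so concentration applies after truncation.

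\emph{Bias term.} A direct resolvent identity gives $\bm U_\lambda-\bm U^\star_M=-\lambda\,\bm U^\star_M(\Sigma+\lambda \bm I)^{-1}$. Provided $\|\bm U^\star_M\|$ is bounded (a source/norm assumption), the risk gap is $O(\lambda)$.

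\emph{Estimation term.} Because equation~\ref{eq:update_G_and_Q} is merely an accumulation, the online statistics equal the batch statistics exactly, so the streaming aspect introduces no extra error. Writing $\widehat\Sigma=\bm G/N$ and $\widehat{\bm\mu}=\bm Q/N$, I will use
\[
\widehat{\bm U}^\top-\bm U_\lambda^\top=(\widehat\Sigma+\lambda \bm I)^{-1}\big[(\widehat{\bm\mu}-\bm\mu)-(\widehat\Sigma-\Sigma)\bm U_\lambda^\top\big].
\]
Matrix Bernstein gives $\|\widehat\Sigma-\Sigma\|,\|\widehat{\bm\mu}-\bm\mu\|\lesssim \sqrt{\log(1/\delta)/N}$, and $\|(\widehat\Sigma+\lambda \bm I)^{-1}\|\le 1/\lambda$. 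Together these yield a parameter error of order $(\sqrt N\lambda)^{-1}$, and hence the risk contribution stated in Theorem~\ref{thm:rear_main}, after absorbing cross terms and constants into $\lesssim$.

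The main obstacle is that the data are a non-i.i.d.\ stream: task distributions shift over time, so matrix Bernstein does not apply directly. My first approach will be to treat the population risk as the mixture over tasks weighted by their stream frequencies and to assume independence of samples given task. I will then use a matrix Freedman inequality for the martingale differences $\bm\varphi_n^\top\bm\varphi_n-\mathbb E[\cdot\mid\text{task}_n]$, which recovers the same $\sqrt{1/N}$ rate.

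A secondary subtlety is that $U^\star$ should be read in the kernel-limit sense, so that the $\sqrt{\log N/M}$ term has meaning. I will state this explicitly in the formal assumptions.

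Finally, for the misrouting claim, I will assume a margin $\gamma$ between the correct expert's score and the others under $U^\star$. Markov's inequality applied to $\|\bm s(\bm x)-\bm s^\star(\bm x)\|^2$ then gives $\Pr(\text{misroute})\le 4(\mathcal R(\widehat{\bm U})-\mathcal R(U^\star))/\gamma^2$, which can be made arbitrarily small by taking $\lambda\asymp N^{-1/4}$ and $M$ large.
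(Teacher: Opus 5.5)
Your skeleton matches the paper's appendix proof: random-feature concentration for the $\sqrt{\log N/M}$ term, a resolvent perturbation of the ridge solution plus matrix Bernstein for the $(\sqrt N\lambda)^{-1}$ term, the online-equals-batch identity (Lemma~\ref{lem:online_equals_batch}), an $O(\lambda)$ bias under a norm assumption, and a margin conversion for misrouting. Where you differ, your choices repair the paper's argument rather than just restyle it.

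The paper uses the \emph{regularized} risk $\mathcal R(U)=\mathbb E\|s_U(X)-C\|_2^2+\lambda\|U\|_F^2$, splits only along $\widehat{\bm U}\to U^\star_\lambda\to U^\star$, and appends the feature term at the end. Its Lemma~\ref{lem:ridge_perturb} effectively bounds $\|(\Sigma+\lambda I)^{-1}b\|_F$ by $\|b\|_F/\lambda$. By itself that produces a $\|b\|_F\,N^{-1/2}\lambda^{-2}$ term, not the claimed $(\sqrt N\lambda)^{-1}$. Your identity instead keeps $\bm U_\lambda^\top$ intact and writes $\bm U_\lambda^\top=(\Sigma+\lambda\bm I)^{-1}\Sigma\,\bm U_M^{\star\top}$, so $\|\bm U_\lambda\|_F\le\|\bm U^\star_M\|_F$. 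That is what genuinely gives $(\sqrt N\lambda)^{-1}$ at the parameter level. The explicit intermediate $\bm U^\star_M$ also cleanly separates bias from approximation.

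For misrouting, the paper's strong-convexity step gives $\mathbb P(\hat t\neq t^\star)\le\frac{8C_\varphi^2}{\lambda\gamma^2}\big(\mathcal R(\widehat{\bm U})-\mathcal R(U^\star)\big)$. The $1/\lambda$ multiplying the $C_3\lambda\|U^\star\|_F^2$ bias leaves a constant, so this does not deliver ``arbitrarily small''. Your Markov bound on the unregularized risk carries only $4/\gamma^2$ and does vanish with $\lambda\asymp N^{-1/4}$ and $M\to\infty$.

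The paper simply assumes i.i.d.\ samples. Your task-mixture model with a fixed schedule gives independent, non-identically distributed summands whose means average exactly to $\Sigma$ and $\bm\mu$. Plain matrix Bernstein already covers that case, so Freedman is only needed if the schedule is adaptive.

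Two steps in your sketch should be made explicit.
\begin{enumerate}
\item The Markov bound uses $\mathbb E\|s_{\widehat{\bm U}}(X)-s_{U^\star}(X)\|_2^2\le\mathcal R(\widehat{\bm U})-\mathcal R(U^\star)$. This Pythagorean inequality holds only if the random-feature predictor lies in the closed convex class over which $s_{U^\star}$ is risk-optimal. So define $U^\star$ via the $L^2$ projection onto the closure of the kernel class. That closure contains every feature function $\bm h\mapsto\sigma(\bm h^\top\bm r_j)$ when $\sigma$ is continuous and the Gaussian density is positive.
\item The gap $\mathcal R(\bm U^\star_M)-\mathcal R(U^\star)$ is a population quantity. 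A union bound over the $N$ stream points controls only the empirical Gram matrix, so it does not bound this gap; the paper's Lemma~\ref{lem:rf_conc} has the same disconnect. The Rahimi--Recht argument you cite does bound it: compare with the Monte Carlo approximant $\frac1M\sum_j\alpha(\bm r_j)\sigma(\bm h^\top\bm r_j)$ of $s_{U^\star}=\mathbb E_{\bm r}[\alpha(\bm r)\sigma(\bm h^\top\bm r)]$ under a bounded-$\alpha$ assumption. This gives $O(\sqrt{\log(1/\delta)/M})$, so the $\log N$ in the statement is mere slack.
\end{enumerate}
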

\textbf{Interpretation.}
The first term reflects the approximation error due to finite random features; increasing $M$ improves the expressive power of the expansion. The second term captures the variance arising from finite data, which diminishes as $N$ grows or $\lambda$ increases. The third term represents the bias introduced by ridge regularization, which stabilizes learning but limits expressiveness if it is too large. In practice, this decomposition implies that robust and forward-only routing can be achieved by employing sufficiently rich random expansions and moderate regularization, without requiring task-level or iterative refinement.

\subsection{Task Experts as Temporal Ensembles}\label{sec:TE2}

To improve the competence of each expert under dynamic distributions, we draw inspiration from the fruit fly's KCs in the mushroom body and their connections to output neurons. This brain structure integrates multi-timescale plasticity and hierarchical processing across subregions, where $\gamma$ KCs mediate short-term memory, $\alpha^{\prime}$/$\beta^{\prime}$ KCs support intermediate memory, and $\alpha$/$\beta$ KCs are critical for long-term memory consolidation~\citep{krashes2007sequential, cervantes2013system, bouzaiane2015two}. These KC subtypes are sequentially recruited during learning and exhibit compartment-specific modulation by dopamine neurons~\citep{aso2014neuronal, owald2015olfactory, aso2014mushroom, aso2016dopaminergic}, enabling temporally staged memory formation and retrieval. Inspired by this biological design, we equip each expert in FlyPrompt with a \textit{temporal ensemble of output heads}, implemented using exponential moving averages (EMA) with varying decay rates.

Concretely, instead of using only one shadow head in naïve EMA~\citep{sun2025right, sun2025spacing}, each expert $E_t$ in FlyPrompt maintains a set of $n$ EMA heads with decay rates $\{\alpha_j\}_{j=1}^n$, where $\alpha_j \neq \alpha_k$ for all $j \ne k$. Let the online head be parameterized as $\bm{\psi} = (\bm{W}, \bm{b}) \in \mathbb{R}^{|\mathcal{Y}| \times d} \times \mathbb{R}^{|\mathcal{Y}|}$, and the $j$-th EMA head of expert $E_t$ as $(\bm{W}_t^{(j)}, \bm{b}_t^{(j)})$. When a new task $t$ begins, its prompt $\bm{p}_t$ is initialized as the average of previously learned prompts as a warm start:
\[
\bm{p}_t = \frac{1}{t-1} \sum_{i=1}^{t-1} \bm{p}_i \quad \text{for } t > 1,
\]
with random initialization for $t = 1$.
\rebt{This average-prompt warm start provides a more informed initialization under single-pass GCL streams, where each expert only observes limited data, and empirically accelerates convergence and more compatible with blurry boundaries in which classes can reoccur across sessions.}
The EMA heads of $E_t$ are initialized as clones of the current online head. During training, only online head $\bm{\psi}$ and prompt $\bm{p}_t$ are updated using the cross-entropy:
\begin{equation}\label{eq:ce_loss}
    \mathcal{L}_t(\bm{x}, y) = \operatorname{CE}\left(f_{\bm{\theta}}(\bm{x}; \bm{p}_t)\bm{W}^\top + \bm{b} + \bm{m},\, y\right),
\end{equation}
\rebt{where $\bm{m} \in \mathbb{R}^{|\mathcal{Y}|}$ is a non-parametric logit mask initialized for each data batch $(\bm{X},\bm{y})$. We set $m_c = 0$ and for any class $c\in \bm{y}$ encountered in the current batch , and set $m_{c^\prime} = -\infty$ to suppress predictions on unseen labels $c^{\prime}\notin \bm{y}$. This masking strategy mitigates interference from class imbalance both across and within tasks~\citep{moon2023online, kang2025advancing}, evaluated in Tab.~\ref{tab:mask_types}.}

After each update step, the EMA heads are updated as:
\begin{align}\label{eq:ema_update}
\bm{W}_t^{(j)} \leftarrow \alpha_j\,\bm{W}_t^{(j)} + (1 - \alpha_j)\,\bm{W}, \quad
\bm{b}_t^{(j)} \leftarrow \alpha_j\,\bm{b}_t^{(j)} + (1 - \alpha_j)\,\bm{b}.
\end{align}

At inference, the REAR module first selects an expert $e = \hat{E}(\bm{x})$. Using the associated prompt $\bm{p}_e$, we compute logits from the online and EMA heads:
\begin{align}\label{eq:head_logits}
    \bm{z}^{(0)} &= f_{\bm{\theta}}(\bm{x}; \bm{p}_e)\bm{W}^\top + \bm{b}, \\
    \bm{z}^{(j)} &= f_{\bm{\theta}}(\bm{x}; \bm{p}_e)\bm{W}_e^{(j)\top} + \bm{b}_e^{(j)}, \quad \forall j \in \{1, \cdots, n\}.
\end{align}
We then ensemble all $n+1$ heads by computing the SoftMax of each and taking their element-wise maximum, followed by logit masking:
\begin{equation}\label{eq:ema_output}
    \hat{\bm{z}}(\bm{x}) = \max_{j \in \{0, \dots, n\}} \operatorname{softmax}(\bm{z}^{(j)} + \bm{m}), \qquad
    \hat{y}(\bm{x}) = \arg\max_c \hat{z}_c(\bm{x}).
\end{equation}

This temporal ensemble mechanism enables FlyPrompt to integrate stable, long-term information via EMA heads while preserving rapid adaptation through the online head, mirroring biological memory consolidation and facilitating robust inference under non-stationary, imbalanced streams. Here, we also present a theoretical guarantee that supports the use of multiple EMA heads in GCL.

\begin{theorem}[\TEsquare, informal]\label{thm:te2_main}
For an EMA head with decay $\alpha$ and window $L=1/(1-\alpha)$, the parameter error at time $t$ satisfies
\[
\mathbb{E}\,\big\|\widetilde{\bm{W}}_{t}^{(\alpha)}-\bm{W}_t^\star\big\|^2 \,\lesssim\, \zeta^2/L + (L\,P_t)^2,
\]
where $\zeta^2$ bounds the online noise and $P_t$ measures drift. A geometric EMA bank contains, at every time, a head that achieves a near-optimal bias-variance trade-off up to a constant factor.
\end{theorem}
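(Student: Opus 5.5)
I will model the online head as a noisy tracker of a drifting target. Write the online iterate as $\bm W_s = \bm W_s^\star + \bm\xi_s$, where $\bm W_s^\star$ is the (slowly moving) optimal head and the noise satisfies $\mathbb{E}[\bm\xi_s\mid\mathcal F_{s-1}]=0$ and $\mathbb{E}\|\bm\xi_s\|^2\le\zeta^2$. This is a martingale-difference assumption that absorbs the stochasticity of single-pass SGD; in the appendix I would state it explicitly, and if needed I would relax it to weakly dependent noise with a mixing constant. Drift is measured by $P_t=\sup_{s\le t}\|\bm W_s^\star-\bm W_{s-1}^\star\|$. Unrolling the EMA recursion in Eq.~\ref{eq:ema_update} gives
\[
\widetilde{\bm W}_t^{(\alpha)}=\sum_{k\ge0}(1-\alpha)\alpha^k\,\bm W_{t-k}
\]
(plus an initialization term $\alpha^{t}\widetilde{\bm W}_0$, which decays geometrically and which I will either absorb or handle by assuming $t\gtrsim L$). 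The weights sum to one, so the error splits as $\widetilde{\bm W}_t-\bm W_t^\star=\sum_k w_k(\bm W^\star_{t-k}-\bm W^\star_t)+\sum_k w_k\bm\xi_{t-k}$, with $w_k=(1-\alpha)\alpha^k$.

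Next I bound the two pieces separately and combine them with $(a+b)^2\le2a^2+2b^2$. For the variance term, orthogonality of martingale increments gives $\mathbb{E}\|\sum_k w_k\bm\xi_{t-k}\|^2\le\zeta^2\sum_k w_k^2=\zeta^2(1-\alpha)/(1+\alpha)\le\zeta^2/L$. For the bias term, the triangle inequality gives $\|\bm W^\star_{t-k}-\bm W^\star_t\|\le kP_t$. Then $\sum_k w_k kP_t=P_t\,\alpha/(1-\alpha)\le LP_t$, so its square is at most $(LP_t)^2$. Together these give the first claim.

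For the bank statement, define $\Phi(L)=\zeta^2/L+L^2P_t^2$. Its minimizer is $L^\star=(\zeta^2/(2P_t^2))^{1/3}$, attaining $\Phi^\star\asymp\zeta^{4/3}P_t^{2/3}$. A geometric bank uses windows $L_j=L_0\rho^j$, with $\rho>1$, spanning $[L_{\min},L_{\max}]$. If $L^\star$ lies in this range, some $L_j$ satisfies $L^\star/\rho\le L_j\le L^\star$ or lies in the neighbouring interval. Since $\Phi(cL^\star)\le\max(c^{-1},c^2)\Phi(L^\star)$, we get $\Phi(L_j)\le\rho^2\Phi^\star$, a constant depending only on $\rho$, uniformly in $t$. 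If $L^\star$ falls outside the range, the extreme head is optimal within the bank, and I would state that case as a boundary remark.

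The main obstacle is the modelling step, not the algebra. The online head is itself an SGD iterate with masking and the prompt is changing, so "$\bm W_s=\bm W_s^\star+$ martingale noise with bounded variance" is an idealization. I would first try to justify it by treating the tracking error of SGD as bounded-variance noise around a reference trajectory. Failing that, I would state it as an explicit assumption in Appendix~\ref{sec:rear_proof}'s style, noting that the result is about the EMA operator acting on a noisy drifting signal.

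A secondary subtlety is that $P_t$ should be measured over a window of length comparable to $L$, not globally. If drift varies over time, I would replace $kP_t$ by the local path length $\sum_{i<k}\|\Delta\bm W^\star_{t-i}\|$, and the same weighted sum bound goes through.
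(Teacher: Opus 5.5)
Your argument is correct and follows the paper's proof essentially step for step. Both use the same split into an EMA-weighted noise term, bounded via $\sum_k w_k^2=(1-\alpha)/(1+\alpha)\le 1/L$, and a weighted-drift bias term bounded by $LP_t$, combined with $(a+b)^2\le 2a^2+2b^2$, followed by the same geometric-grid comparison against $L^\star=(\zeta^2/(2P_t^2))^{1/3}$. The paper measures drift by a discounted path length, and reordering the sums in your ``local path length'' variant gives exactly its bias bound. Your explicit martingale-difference assumption and your $\alpha$-independent drift measure are, if anything, tidier than the paper's version. The paper's stated assumptions (mean zero, bounded variance) do not by themselves make the cross terms vanish in the variance step. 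Its $P_t$ also depends on $\alpha$ through the discount $\gamma=\alpha$, which the grid minimization quietly treats as fixed.
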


\textbf{Interpretation.}
The bound decomposes the parameter error into a variance term $O(\zeta^2/L)$, controlled by the effective window size, and a drift-induced bias term $O((L P_t)^2)$, which increases with nonstationarity. Larger $L$ reduces variance but increases bias, creating a bias-variance trade-off. A geometric bank of EMA windows ensures that, at any time, one head is near the optimal trade-off for the current drift level. Intuitively, when the input stream contains segments with varying temporal dynamics, such as sudden shifts at session transitions or gradual changes within each task, different EMA heads can align better with different segments, leading to more adaptive predictions. In practice, two EMA heads with windows of 10 and 100 ($\alpha=0.9, 0.99$) are sufficient (see Sec.~\ref{sec:exp_results}).


\begin{table}[t]
\vspace{-0.2cm}
\centering
\caption{Overall performance of representative methods over three GCL benchmarks across PTMs. 
}
\label{tab:overall_results}
\vspace{-0.2cm}
\renewcommand\arraystretch{1.05}
\resizebox{0.98\textwidth}{!}{
\begin{tabular}{ll|ll|ll|ll}
\toprule
\multirow{2}{*}{\textbf{PTM}} & \multirow{2}{*}{\textbf{Method}} & \multicolumn{2}{c|}{\textbf{CIFAR-100}} & \multicolumn{2}{c|}{\textbf{ImageNet-R}} & \multicolumn{2}{c}{\textbf{CUB-200}} \\
& & $A_{\rm{auc}} (\%,\uparrow)$ & $A_{\rm{last}} (\%,\uparrow)$ & $A_{\rm{auc}} (\%,\uparrow)$ & $A_{\rm{last}} (\%,\uparrow)$ & $A_{\rm{auc}} (\%,\uparrow)$ & $A_{\rm{last}} (\%,\uparrow)$ \\
\midrule
\multirow{9}{*}{Sup-21K} & Seq FT & $19.71_{\pm 3.39}$ & $10.42_{\pm 4.92}$ & $\phantom{0}7.51_{\pm 3.94}$ & $\phantom{0}2.29_{\pm 0.85}$ & $\phantom{0}3.47_{\pm 0.41}$ & $\phantom{0}1.49_{\pm 0.42}$ \\
& Linear Probe & $49.69_{\pm 6.09}$ & $23.07_{\pm 7.33}$ & $29.24_{\pm 1.26}$ & $16.87_{\pm 3.14}$ & $28.96_{\pm 2.46}$ & $17.33_{\pm 3.08}$ \\
& Seq FT w/ SL & $64.90_{\pm 7.18}$ & $62.06_{\pm 1.89}$ & $47.20_{\pm 1.47}$ & $39.60_{\pm 2.43}$ & $56.16_{\pm 4.32}$ & $56.50_{\pm 3.08}$ \\
& L2P & $76.23_{\pm 2.73}$ & $79.11_{\pm 1.43}$ & $44.40_{\pm 1.03}$ & $42.03_{\pm 1.72}$ & $64.30_{\pm 2.18}$ & $61.42_{\pm 2.13}$ \\
& DualPrompt & $76.04_{\pm 3.32}$ & $76.62_{\pm 0.74}$ & $46.13_{\pm 1.94}$ & $40.80_{\pm 1.04}$ & $65.03_{\pm 2.24}$ & $62.43_{\pm 1.78}$ \\
& CODA-P & $79.13_{\pm 3.06}$ & $\underline{80.91}_{\pm 0.70}$ & $\underline{51.87}_{\pm 2.81}$ & $\underline{48.09}_{\pm 2.75}$ & $\underline{66.01}_{\pm 2.20}$ & $\underline{62.90}_{\pm 2.46}$ \\
& MVP & $67.74_{\pm 4.96}$ & $63.22_{\pm 0.69}$ & $39.50_{\pm 1.41}$ & $32.63_{\pm 3.95}$ & $54.69_{\pm 3.14}$ & $50.07_{\pm 3.86}$ \\
& MISA & $\underline{80.35}_{\pm 2.39}$ & $80.75_{\pm 1.24}$ & $51.52_{\pm 2.09}$ & $45.08_{\pm 1.43}$ & $65.40_{\pm 3.01}$ & $60.20_{\pm 1.82}$ \\
& FlyPrompt (Ours) & $\textbf{83.24}_{\pm 2.23}$ & $\textbf{86.76}_{\pm 0.73}$ & $\textbf{56.58}_{\pm 1.47}$ & $\textbf{55.27}_{\pm 0.91}$ & $\textbf{70.64}_{\pm 2.85}$ & $\textbf{73.40}_{\pm 1.88}$ \\
\midrule
\multirow{6}{*}{Sup-21K/1K} & L2P & $63.88_{\pm 7.79}$ & $68.96_{\pm 7.63}$ & $47.10_{\pm 1.21}$ & $42.22_{\pm 1.94}$ & $42.96_{\pm 4.13}$ & $45.00_{\pm 3.83}$ \\
& DualPrompt & $68.02_{\pm 2.08}$ & $67.04_{\pm 5.84}$ & $\underline{52.80}_{\pm 1.21}$ & $47.39_{\pm 1.60}$ & $\underline{46.80}_{\pm 2.89}$ & $\underline{46.39}_{\pm 2.76}$ \\
& CODA-P & $\underline{69.29}_{\pm 2.52}$ & $\underline{69.47}_{\pm 7.19}$ & $51.20_{\pm 1.76}$ & $44.30_{\pm 1.50}$ & $44.66_{\pm 2.73}$ & $45.18_{\pm 4.50}$ \\
& MVP & $64.69_{\pm 3.77}$ & $51.29_{\pm 7.56}$ & $48.99_{\pm 2.01}$ & $38.12_{\pm 5.20}$ & $44.10_{\pm 2.81}$ & $33.97_{\pm 9.62}$ \\
& MISA & $62.91_{\pm 7.96}$ & $67.99_{\pm 7.41}$ & $50.87_{\pm 1.69}$ & $\underline{47.75}_{\pm 2.87}$ & $42.76_{\pm 2.33}$ & $44.05_{\pm 1.94}$ \\
& FlyPrompt (Ours) & $\textbf{78.48}_{\pm 1.31}$ & $\textbf{80.39}_{\pm 3.54}$ & $\textbf{62.01}_{\pm 2.32}$ & $\textbf{56.55}_{\pm 3.94}$ & $\textbf{54.42}_{\pm 4.67}$ & $\textbf{55.50}_{\pm 3.55}$ \\
\midrule
\multirow{6}{*}{iBOT-21K} & L2P & $56.82_{\pm 8.42}$ & $\underline{67.61}_{\pm 8.76}$ & $35.97_{\pm 1.62}$ & $36.95_{\pm 2.44}$ & $14.76_{\pm 1.53}$ & $\underline{24.51}_{\pm 4.82}$ \\
& DualPrompt & $\underline{66.06}_{\pm 4.52}$ & $67.14_{\pm 8.60}$ & $42.48_{\pm 1.62}$ & $35.91_{\pm 0.88}$ & $19.90_{\pm 3.68}$ & $21.84_{\pm 2.35}$ \\
& CODA-P & $62.13_{\pm 7.17}$ & $63.38_{\pm 7.98}$ & $\underline{45.50}_{\pm 1.66}$ & $\underline{39.44}_{\pm 1.35}$ & $17.72_{\pm 5.33}$ & $20.82_{\pm 7.66}$ \\
& MVP & $62.33_{\pm 3.06}$ & $48.32_{\pm 11.42}$ & $41.55_{\pm 1.98}$ & $29.29_{\pm 5.03}$ & $\underline{28.73}_{\pm 3.18}$ & $23.62_{\pm 9.51}$ \\
& MISA & $65.30_{\pm 2.28}$ & $67.43_{\pm 6.75}$ & $40.94_{\pm 1.22}$ & $36.16_{\pm 1.58}$ & $18.62_{\pm 3.36}$ & $23.66_{\pm 2.21}$ \\
& FlyPrompt (Ours) & $\textbf{75.58}_{\pm 1.70}$ & $\textbf{79.36}_{\pm 3.47}$ & $\textbf{57.75}_{\pm 2.12}$ & $\textbf{54.39}_{\pm 1.29}$ & $\textbf{28.86}_{\pm 5.84}$ & $\textbf{36.79}_{\pm 7.58}$ \\
\midrule
\multirow{6}{*}{iBOT-1K} & L2P & $53.17_{\pm 7.08}$ & $\underline{62.28}_{\pm 8.19}$ & $38.29_{\pm 2.65}$ & $39.86_{\pm 0.95}$ & $19.20_{\pm 2.21}$ & $31.21_{\pm 5.24}$ \\
& DualPrompt & $52.39_{\pm 3.21}$ & $53.56_{\pm 6.10}$ & $45.76_{\pm 1.63}$ & $39.19_{\pm 0.65}$ & $29.32_{\pm 3.15}$ & $30.53_{\pm 5.33}$ \\
& CODA-P & $\underline{59.29}_{\pm 4.03}$ & $61.30_{\pm 6.73}$ & $\underline{49.56}_{\pm 1.57}$ & $\underline{42.64}_{\pm 2.78}$ & $27.57_{\pm 2.83}$ & $33.61_{\pm 4.52}$ \\
& MVP & $57.52_{\pm 3.62}$ & $44.08_{\pm 12.42}$ & $44.76_{\pm 2.23}$ & $34.93_{\pm 4.48}$ & $\underline{33.81}_{\pm 3.50}$ & $26.32_{\pm 9.97}$ \\
& MISA & $54.31_{\pm 2.91}$ & $55.89_{\pm 5.10}$ & $43.91_{\pm 3.95}$ & $40.09_{\pm 1.24}$ & $27.76_{\pm 2.69}$ & $\underline{33.74}_{\pm 2.11}$ \\
& FlyPrompt (Ours) & $\textbf{70.14}_{\pm 1.76}$ & $\textbf{74.84}_{\pm 4.26}$ & $\textbf{61.50}_{\pm 1.66}$ & $\textbf{57.18}_{\pm 1.36}$ & $\textbf{38.75}_{\pm 5.72}$ & $\textbf{45.00}_{\pm 4.19}$ \\
\midrule
\multirow{6}{*}{DINO-1K} & L2P & $47.98_{\pm 7.38}$ & $\underline{59.13}_{\pm 6.32}$ & $35.81_{\pm 1.37}$ & $36.58_{\pm 1.31}$ & $21.18_{\pm 2.01}$ & $32.47_{\pm 6.10}$ \\
& DualPrompt & $52.12_{\pm 4.01}$ & $55.71_{\pm 6.11}$ & $43.03_{\pm 1.12}$ & $35.40_{\pm 1.40}$ & $27.80_{\pm 4.21}$ & $29.49_{\pm 4.24}$ \\
& CODA-P & $\underline{54.69}_{\pm 4.49}$ & $58.91_{\pm 5.43}$ & $\underline{45.16}_{\pm 2.05}$ & $\underline{38.23}_{\pm 2.02}$ & $29.22_{\pm 2.97}$ & $31.85_{\pm 7.47}$ \\
& MVP & $53.64_{\pm 3.91}$ & $41.02_{\pm 12.09}$ & $41.78_{\pm 2.15}$ & $32.00_{\pm 4.22}$ & $\underline{33.44}_{\pm 3.43}$ & $26.02_{\pm 10.29}$ \\
& MISA & $52.03_{\pm 3.07}$ & $55.98_{\pm 4.26}$ & $41.26_{\pm 3.25}$ & $37.50_{\pm 1.62}$ & $27.13_{\pm 3.31}$ & $\underline{33.08}_{\pm 4.10}$ \\
& FlyPrompt (Ours) & $\textbf{65.92}_{\pm 2.74}$ & $\textbf{72.66}_{\pm 4.52}$ & $\textbf{57.29}_{\pm 2.40}$ & $\textbf{54.72}_{\pm 1.89}$ & $\textbf{37.38}_{\pm 5.86}$ & $\textbf{44.66}_{\pm 2.35}$ \\
\midrule
\multirow{6}{*}{MoCo v3-1K} & L2P & $28.17_{\pm 7.08}$ & $39.07_{\pm 11.31}$ & $17.43_{\pm 1.71}$ & $16.27_{\pm 5.43}$ & $12.42_{\pm 2.31}$ & $20.00_{\pm 7.36}$ \\
& DualPrompt & $53.33_{\pm 4.65}$ & $58.20_{\pm 7.73}$ & $36.69_{\pm 1.74}$ & $30.24_{\pm 1.94}$ & $19.88_{\pm 3.35}$ & $21.93_{\pm 4.30}$ \\
& CODA-P & $53.47_{\pm 3.42}$ & $58.55_{\pm 7.19}$ & $\underline{39.89}_{\pm 2.71}$ & $31.72_{\pm 4.86}$ & $20.09_{\pm 2.52}$ & $24.10_{\pm 6.48}$ \\
& MVP & $54.33_{\pm 4.56}$ & $40.84_{\pm 14.21}$ & $36.45_{\pm 2.35}$ & $26.37_{\pm 6.04}$ & $\textbf{28.48}_{\pm 3.34}$ & $23.56_{\pm 9.78}$ \\
& MISA & $\underline{57.00}_{\pm 6.06}$ & $\underline{62.18}_{\pm 3.94}$ & $38.85_{\pm 4.27}$ & $\underline{33.47}_{\pm 0.95}$ & $25.02_{\pm 4.39}$ & $\underline{27.68}_{\pm 4.35}$ \\
& FlyPrompt (Ours) & $\textbf{64.12}_{\pm 5.18}$ & $\textbf{71.51}_{\pm 8.48}$ & $\textbf{52.32}_{\pm 1.50}$ & $\textbf{49.06}_{\pm 1.35}$ & $\underline{27.92}_{\pm 4.53}$ & $\textbf{33.32}_{\pm 3.58}$ \\
\bottomrule
\end{tabular}
}
\vspace{-0.5cm}
\end{table}

\begin{table}[t]
\centering
\vspace{-0.2cm}
\caption{\rebt{Comparison with prominent offline methods on three GCL benchmarks under Sup-21K.}}
\vspace{-0.2cm}
\label{tab:offline_methods}
\renewcommand\arraystretch{1.05}
\resizebox{0.98\textwidth}{!}{
\begin{tabular}{l|ll|ll|ll}
\toprule
\multirow{2}{*}{\textbf{Method}} & \multicolumn{2}{c|}{\textbf{CIFAR-100}} & \multicolumn{2}{c|}{\textbf{ImageNet-R}} & \multicolumn{2}{c}{\textbf{CUB-200}} \\
& $A_{\rm{auc}} (\%,\uparrow)$ & $A_{\rm{last}} (\%,\uparrow)$ & $A_{\rm{auc}} (\%,\uparrow)$ & $A_{\rm{last}} (\%,\uparrow)$ & $A_{\rm{auc}} (\%,\uparrow)$ & $A_{\rm{last}} (\%,\uparrow)$ \\ \midrule
S-Prompt++ & $\underline{80.21}_{\pm 2.55}$ & $\underline{83.48}_{\pm 1.20}$ & $52.14_{\pm 1.65}$ & $49.13_{\pm 1.60}$ & $66.61_{\pm 2.21}$ & $64.73_{\pm 2.25}$ \\
HiDe-Prompt & $77.10_{\pm 3.81}$ & $81.77_{\pm 2.00}$ & $53.77_{\pm 1.09}$ & $49.87_{\pm 3.01}$ & $67.05_{\pm 2.37}$ & $67.12_{\pm 0.50}$ \\
HiDe-LoRA & $80.07_{\pm 2.41}$ & $82.00_{\pm 1.25}$ & $55.09_{\pm 1.45}$ & $51.29_{\pm 6.29}$ & $\underline{67.26}_{\pm 1.76}$ & $\underline{67.28}_{\pm 1.45}$ \\
HiDe-Adapter & $79.52_{\pm 2.81}$ & $81.41_{\pm 0.95}$ & $53.92_{\pm 1.32}$ & $50.86_{\pm 5.08}$ & $66.09_{\pm 1.41}$ & $64.53_{\pm 1.78}$ \\
NoRGa & $78.89_{\pm 3.33}$ & $83.03_{\pm 1.20}$ & $54.12_{\pm 1.37}$ & $50.09_{\pm 3.66}$ & $67.16_{\pm 2.44}$ & $67.06_{\pm 0.58}$ \\
SD-LoRA & $79.26_{\pm 2.21}$ & $78.91_{\pm 2.48}$ & $\underline{55.51}_{\pm 1.30}$ & $\underline{51.97}_{\pm 3.09}$ & $64.12_{\pm 2.02}$ & $60.57_{\pm 0.77}$ \\
\rowcolor{gray!20} FlyPrompt (Ours) & ${\bf 83.24}_{\pm 2.23}$ & ${\bf 86.76}_{\pm 0.73}$ & ${\bf 56.58}_{\pm 1.47}$ & ${\bf 55.27}_{\pm 0.91}$ & ${\bf 70.64}_{\pm 2.85}$ & ${\bf 73.40}_{\pm 1.88}$ \\
\bottomrule
\end{tabular}}
\end{table}

\section{Experiment}

In this section, we first introduce the experiment setups and then present the experiment results.

\subsection{Experiment Setup}\label{sec:exp_setup}

\textbf{Benchmarks.}
We evaluate FlyPrompt under the Si-Blurry GCL setting~\citep{moon2023online, kang2025advancing} using three representative benchmarks: CIFAR-100~\citep{krizhevsky2009learning_cifar} (60K images, 100 classes), ImageNet-R~\citep{hendrycks2021many} (30K images, 200 classes), and CUB-200~\citep{wah2011caltech} (12K images, 200 fine-grained classes). Unless specified otherwise, we adopt the default Si-Blurry configuration with disjoint class ratio $r_{\rm D}=50\%$ and blurry sample ratio $r_{\rm B}=10\%$, trained over five sessions. We report two widely used metrics: average anytime accuracy $A_{\rm auc}$ (evaluated every 1000 batches) and final accuracy $A_{\rm last}$ (measured after all sessions)~\citep{koh2021online}. Additional experiments of different $(r_{\rm D}, r_{\rm B})$ and online CL are provided in Appendix~\ref{sec:NM_variants}. Unless specified, all results are averaged over five runs ($\pm$ standard deviation) with different seeds.

\textbf{Baselines.}
We compare FlyPrompt against a diverse set of CL and GCL methods: (1) lower-bound baselines such as sequential fine-tuning (Seq FT, including the version with a slow learning rate, SL)~\citep{zhang2023slca} and linear probing; (2) prompt-based CL baselines including L2P~\citep{wang2022learning}, DualPrompt~\citep{wang2022dualprompt} and CODA-P~\citep{smith2023coda}; (3) state-of-the-art GCL methods such as MVP~\citep{moon2023online} and MISA~\citep{kang2025advancing}; \rebt{(4) prominent offline CL methods S-Prompt++~\citep{wang2022sprompt}, HiDe~\citep{wang2023hierarchical}, NoRGa~\citep{le2024mixture} and SD-LoRA~\citep{wu2025sdlora} in Tabs. \ref{tab:offline_methods} and \ref{tab:method_cost_extended}}. We also implement the online version of the analytic baseline RanPAC~\citep{mcdonnell2024ranpac} in Tab.~\ref{tab:ablation_results} \rebt{and other variants in Tab.~\ref{tab:ranpac_vs_flyprompt}}.

\textbf{Implementation.}
We adopt the ViT-B/16 backbone pretrained on ImageNet-21K and ImageNet-1K, including strong supervised paradigms Sup-21K, Sup-21K/1K (Sup-21K fine-tuned on ImageNet-1K)~\citep{ridnik2021imagenet, dosovitskiy2020image}, and self-supervised paradigms iBOT-21K, iBOT-1K~\citep{zhou2021ibot}, DINO-1K~\citep{caron2021emerging}, and MoCo v3-1K~\citep{chen2021empirical}. We set the projection dimension $M=10^4$, and $\lambda$ based on checkpoints: $10^4$ (Sup-21K), $10^6$ (Sup-21K/1K, MoCo), and $10^7$ (iBOT, DINO). We use $n=2$ EMA heads with decay rates ${0.9, 0.99}$. More implementation details of baseline methods and GCL benchmark setup can be found in Appendix~\ref{sec:implement_details}.

\subsection{Experiment Results}\label{sec:exp_results}

\textbf{Overall Performance.}
Tab.~\ref{tab:overall_results} summarizes GCL performance across all benchmarks. Prompt-based CL methods perform well with supervised backbones (e.g., Sup-21K, Sup-21K/1K), but degrade significantly under self-supervised ones (e.g., DINO, MoCo v3-1K), particularly on fine-grained benchmarks CUB-200. This highlights the challenge of extracting discriminative features without strong pretraining priors. MVP, which incorporates contrastive learning for improved expert selection, outperforms others under the fine-grained benchmark and self-supervised PTMs, reinforcing the importance of prompt routing. However, the contrastive loss yields limited performance gains in other cases due to the absence of a replay buffer.
Fig.~\ref{fig:running_acc} presents the anytime accuracy during GCL. MISA benefits from stronger prompt initialization and achieves relatively higher performance at the early stage, but steadily declines due to parameter overwriting, eventually matching weaker baselines like CODA-P. This suggests that while good initialization helps, it alone is insufficient for sustained GCL performance.
In contrast, \textbf{FlyPrompt} consistently outperforms all baselines across datasets and PTMs. It achieves up to 11.23\%, 12.43\%, and 7.62\% improvements in $A_{\rm auc}$; 13.53\%, 16.49\%, and 12.28\% in $A_{\rm last}$ on CIFAR-100, ImageNet-R, and CUB-200, respectively. As shown in Fig.~\ref{fig:running_acc}, FlyPrompt maintains stable, high accuracy throughout GCL, with minimal drops during session transitions. This results confirm FlyPrompt as a new state-of-the-art for GCL.

\begin{table}[t]
\centering
\vspace{-0.2cm}
\caption{Ablation study of different components in FlyPrompt. ``$-$'' indicates not applicable.
}
\vspace{-0.2cm}
\label{tab:ablation_results}
\renewcommand\arraystretch{1.05}
\resizebox{0.90\textwidth}{!}{
\begin{tabular}{l|ccc|cc|cc}
\toprule
\multirow{2}{*}{\textbf{PTM}} & \multicolumn{3}{c|}{\textbf{FlyPrompt Components}} & \multicolumn{2}{c|}{\textbf{CIFAR-100}} & \multicolumn{2}{c}{\textbf{ImageNet-R}} \\
& \textbf{REAR} & \textbf{Prompt Expert} & \textbf{EMA head} & $A_{\rm{auc}} (\%,\uparrow)$ & $A_{\rm{last}} (\%,\uparrow)$ & $A_{\rm{auc}} (\%,\uparrow)$ & $A_{\rm{last}} (\%,\uparrow)$ \\
\midrule
\multirow{7}{*}{Sup-21K} & \multicolumn{3}{c|}{RP-Based Analytic Classifier (RanPAC\textsuperscript{$\dagger$})} & $69.91_{\pm 3.88}$ & $79.92_{\pm 0.07}$ & $47.29_{\pm 0.70}$ & $47.33_{\pm 0.12}$ \\
& $-$ & $\times$ & $\times$ & $71.33_{\pm 2.17}$ & $73.22_{\pm 1.63}$ & $41.73_{\pm 0.97}$ & $37.33_{\pm 1.71}$ \\
& $-$ & $\times$ & $\checkmark$ & $71.69_{\pm 2.27}$ & $73.30_{\pm 1.55}$ & $42.50_{\pm 1.01}$ & $38.35_{\pm 1.01}$ \\
& $\times$ & $\checkmark$ & $\times$ & $80.75_{\pm 1.98}$ & $83.65_{\pm 1.94}$ & $54.91_{\pm 1.32}$ & $52.58_{\pm 1.36}$ \\
& $\times$ & $\checkmark$ & $\checkmark$ & $82.17_{\pm 2.07}$ & $83.75_{\pm 1.86}$ & $55.90_{\pm 1.37}$ & $53.65_{\pm 0.92}$ \\
& $\checkmark$ & $\checkmark$ & $\times$ & $81.90_{\pm 2.20}$ & $84.23_{\pm 1.32}$ & $55.76_{\pm 1.32}$ & $52.76_{\pm 1.30}$ \\
& $\checkmark$ & $\checkmark$ & $\checkmark$ & $\textbf{83.24}_{\pm 2.23}$ & $\textbf{86.76}_{\pm 0.73}$ & $\textbf{56.58}_{\pm 1.47}$ & $\textbf{55.27}_{\pm 0.91}$ \\
\midrule
\multirow{7}{*}{Sup-21K/1K} & \multicolumn{3}{c|}{RP-Based Analytic Classifier (RanPAC\textsuperscript{$\dagger$})} & $69.76_{\pm 3.33}$ & $79.49_{\pm 0.16}$ & $52.91_{\pm 1.07}$ & $54.79_{\pm 0.22}$ \\
& $-$ & $\times$ & $\times$ & $57.82_{\pm 6.94}$ & $62.67_{\pm 8.55}$ & $46.24_{\pm 0.72}$ & $39.06_{\pm 2.81}$ \\
& $-$ & $\times$ & $\checkmark$ & $60.76_{\pm 6.77}$ & $63.39_{\pm 8.81}$ & $49.88_{\pm 0.93}$ & $41.65_{\pm 1.38}$ \\
& $\times$ & $\checkmark$ & $\times$ & $70.09_{\pm 3.52}$ & $67.62_{\pm 5.80}$ & $52.07_{\pm 1.35}$ & $44.13_{\pm 3.61}$ \\
& $\times$ & $\checkmark$ & $\checkmark$ & $72.51_{\pm 3.15}$ & $68.66_{\pm 5.96}$ & $55.55_{\pm 1.51}$ & $45.90_{\pm 3.53}$ \\
& $\checkmark$ & $\checkmark$ & $\times$ & $71.28_{\pm 2.58}$ & $69.73_{\pm 5.78}$ & $53.12_{\pm 2.19}$ & $44.69_{\pm 3.65}$ \\
& $\checkmark$ & $\checkmark$ & $\checkmark$ & $\textbf{78.48}_{\pm 1.31}$ & $\textbf{80.39}_{\pm 3.54}$ & $\textbf{62.01}_{\pm 2.32}$ & $\textbf{56.55}_{\pm 3.94}$ \\
\bottomrule
\end{tabular}
}
\vspace{-0.2cm}
\end{table}

\begin{table}[t]
\centering
\caption{\rebt{Effect of REAR and TE$^2$ on $A_{\rm{auc}} (\%)$ performance for PTM-based CL methods using CIFAR-100 under Sup-21K. The numbers in parentheses indicate the difference from the baseline, and the arrow direction indicates an increase \textcolor{green!40!black}{($\uparrow$)} or decrease \textcolor{black}{($\downarrow$)}. See Tab.~\ref{tab:cross_ablation_all} for complete results.}}
\label{tab:cross_ablation}
\renewcommand\arraystretch{1.05}
\resizebox{0.85\textwidth}{!}{
\begin{tabular}{l|cccc}
\toprule
\textbf{Method} & \textbf{Baseline} & \textbf{w/ REAR} & \textbf{w/ TE$^2$} & \textbf{w/ Both} \\
\midrule
DualPrompt  & $76.04_{\pm 3.32}$ & $80.63_{\pm 2.25}$ {\scriptsize \textcolor{green!40!black}{($\uparrow 4.59$)}} & $76.83_{\pm 3.44}$ {\scriptsize \textcolor{green!40!black}{($\uparrow 0.79$)}} & $82.33_{\pm 2.17}$ {\scriptsize \textcolor{green!40!black}{($\uparrow 6.29$)}} \\
MVP         & $67.74_{\pm 4.96}$ & $67.44_{\pm 4.89}$ {\scriptsize $(\downarrow 0.30)$} & $68.91_{\pm 4.86}$ {\scriptsize \textcolor{green!40!black}{($\uparrow 1.17$)}} & $68.93_{\pm 4.60}$ {\scriptsize \textcolor{green!40!black}{($\uparrow 1.19$)}} \\
MISA        & $\textbf{80.35}_{\pm 2.39}$ & $\textbf{82.03}_{\pm 1.97}$ {\scriptsize \textcolor{green!40!black}{($\uparrow 1.68$)}} & $\underline{81.65}_{\pm 2.24}$ {\scriptsize \textcolor{green!40!black}{($\uparrow 1.30$)}} & $\textbf{83.60}_{\pm 2.08}$ {\scriptsize \textcolor{green!40!black}{($\uparrow 3.25$)}} \\
S-Prompt++  & $\underline{80.21}_{\pm 2.55}$ & $\underline{81.43}_{\pm 2.45}$ {\scriptsize \textcolor{green!40!black}{($\uparrow 1.21$)}} & $\textbf{81.93}_{\pm 2.21}$ {\scriptsize \textcolor{green!40!black}{($\uparrow 1.72$)}} & $\underline{83.11}_{\pm 2.30}$ {\scriptsize \textcolor{green!40!black}{($\uparrow 2.90$)}} \\
HiDe-Prompt & $77.10_{\pm 3.81}$ & $78.41_{\pm 2.64}$ {\scriptsize \textcolor{green!40!black}{($\uparrow 1.31$)}} & $77.46_{\pm 3.56}$ {\scriptsize \textcolor{green!40!black}{($\uparrow 0.36$)}} & $78.60_{\pm 2.53}$ {\scriptsize \textcolor{green!40!black}{($\uparrow 1.51$)}} \\
NoRGa       & $78.89_{\pm 3.33}$ & $79.37_{\pm 2.71}$ {\scriptsize \textcolor{green!40!black}{($\uparrow 0.48$)}} & $79.16_{\pm 3.28}$ {\scriptsize \textcolor{green!40!black}{($\uparrow 0.27$)}} & $79.37_{\pm 2.74}$ {\scriptsize \textcolor{green!40!black}{($\uparrow 0.48$)}} \\
\bottomrule
\end{tabular}}
\vspace{-0.2cm}
\end{table}

\begin{figure}[t]
    \centering
    \vspace{-0.2cm}
    \includegraphics[width=0.90\linewidth]{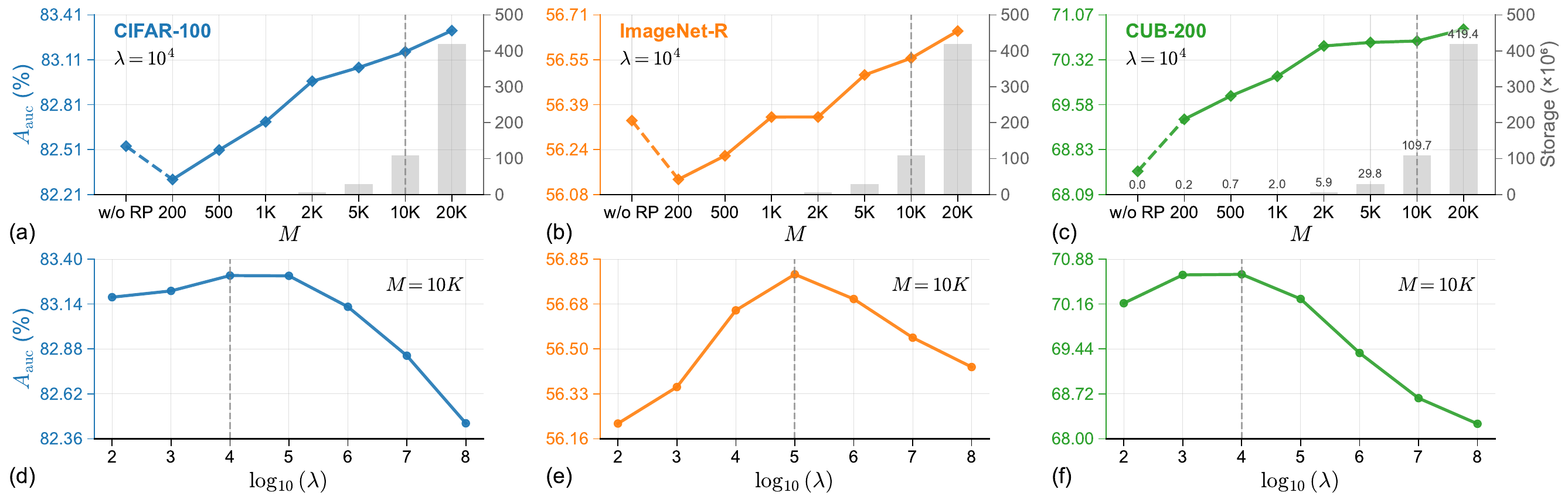}
    \caption{Analysis of hyperparameters in REAR. (a-c) Different projection dimension $M$ with fixed $\lambda=10^4$: we report $A_{\rm{auc}}$ and extra storage cost (bar) given $M$. (d-f) Different regularization parameter $\lambda$ with fixed $M=10^4$. Dashed lines indicate the optimal choice of each hyperparameter.
    }
    \label{fig:hyper_param}
    \vspace{-0.2cm}
\end{figure}

\textbf{Ablation Study.}
To assess the contribution of each FlyPrompt component, we conduct a comprehensive ablation study of REAR for prompt selection, multi-prompt across tasks, and \TEsquare for EMA head ensemble. Results in Tab.~\ref{tab:ablation_results} show that each module provides consistent gains, with the full FlyPrompt achieving the best performance. We additionally include RanPAC\textsuperscript{$\dagger$}, an analytic learner using random projections but no expert modularity, to simulate REAR without multi-expert routing. While this performs competitively under limited training, it falls short without expert specialization, underscoring the importance of both routing and competence. Notably, the gain from EMA heads alone is modest unless combined with REAR and prompt modularization, highlighting the synergy among bio-inspired components rather than simple additive effects.
\rebt{We further integrate our REAR and \TEsquare components into a range of strong baseline models by replacing their routing and output head modules correspondingly. Results in Tab.~\ref{tab:cross_ablation} further demonstrate the consistent improvements when either component is added (more results across datasets and metrics are presented in Tab.~\ref{tab:cross_ablation_all}).}

\begin{table}[t]
\centering
\vspace{-0.0cm}
\caption{Performance comparison of ensemble aggregation choices for TE\textsuperscript{2} under Sup-21K.
}
\label{tab:ensemble_results}
\renewcommand\arraystretch{1.05}
\resizebox{0.90\textwidth}{!}{
\begin{tabular}{l|cc|cc|cc}
\toprule
\multirow{2}{*}{\textbf{Ensemble Method}} & \multicolumn{2}{c|}{\textbf{CIFAR-100}} & \multicolumn{2}{c|}{\textbf{ImageNet-R}} & \multicolumn{2}{c}{\textbf{CUB-200}} \\
& $A_{\rm{auc}} (\%,\uparrow)$ & $A_{\rm{last}} (\%,\uparrow)$ & $A_{\rm{auc}} (\%,\uparrow)$ & $A_{\rm{last}} (\%,\uparrow)$ & $A_{\rm{auc}} (\%,\uparrow)$ & $A_{\rm{last}} (\%,\uparrow)$ \\
\midrule
Mean & $81.34_{\pm 1.64}$ & $85.11_{\pm 1.03}$ & $52.71_{\pm 1.36}$ & $53.24_{\pm 1.22}$ & $68.49_{\pm 2.57}$ & $\textbf{73.95}_{\pm 1.90}$ \\
Max Prob. & $82.29_{\pm 2.25}$ & $84.95_{\pm 1.20}$ & $55.56_{\pm 1.38}$ & $53.53_{\pm 1.40}$ & $68.00_{\pm 2.50}$ & $66.56_{\pm 1.60}$ \\
Min Entropy & $81.92_{\pm 2.19}$ & $84.23_{\pm 1.32}$ & $55.05_{\pm 1.31}$ & $52.88_{\pm 1.38}$ & $66.78_{\pm 2.53}$ & $64.73_{\pm 1.36}$ \\
SoftMax+Mean & $82.30_{\pm 1.82}$ & $85.98_{\pm 0.80}$ & $\underline{56.16}_{\pm 1.56}$ & $\textbf{55.53}_{\pm 0.89}$ & $\textbf{70.77}_{\pm 3.00}$ & $\underline{74.86}_{\pm 1.54}$ \\
\rowcolor{gray!20}SoftMax+Max Prob.& $\textbf{83.24}_{\pm 2.23}$ & $\textbf{86.76}_{\pm 0.73}$ & $\textbf{56.58}_{\pm 1.47}$ & $\underline{55.27}_{\pm 0.91}$ & $\underline{70.64}_{\pm 2.85}$ & $73.40_{\pm 1.88}$ \\
SoftMax+Min Entropy & $\underline{83.11}_{\pm 2.34}$ & $\underline{86.50}_{\pm 0.64}$ & $55.94_{\pm 1.41}$ & $54.24_{\pm 1.34}$ & $69.86_{\pm 2.80}$ & $71.51_{\pm 1.79}$ \\
\bottomrule
\end{tabular}
}
\vspace{-0.0cm}
\end{table}

\begin{table}[htbp!]
\centering
\begin{minipage}[t]{0.485\textwidth}
\centering
\caption{Performance comparison of different EMA decay rates for TE\textsuperscript{2} under Sup-21K.
}
\label{tab:ema_head_results}
\resizebox{\textwidth}{!}{
\begin{tabular}{l|cc|cc}
\toprule
\multirow{2}{*}{\textbf{EMA Decay Rate}} & \multicolumn{2}{c|}{\textbf{CIFAR-100}} & \multicolumn{2}{c}{\textbf{ImageNet-R}} \\
& $A_{\rm{auc}} (\%,\uparrow)$ & $A_{\rm{last}} (\%,\uparrow)$ & $A_{\rm{auc}} (\%,\uparrow)$ & $A_{\rm{last}} (\%,\uparrow)$ \\
\midrule
Online head only & $81.90_{\pm 2.20}$ & $84.23_{\pm 1.32}$ & $54.91_{\pm 1.32}$ & $52.58_{\pm 1.36}$ \\
+0.9 & $82.81_{\pm 2.28}$ & $86.36_{\pm 0.54}$ & $\underline{56.36}_{\pm 1.52}$ & $55.09_{\pm 0.89}$ \\
+0.99 & $82.84_{\pm 2.51}$ & $\underline{86.41}_{\pm 0.39}$ & $55.94_{\pm 1.65}$ & $54.67_{\pm 0.89}$ \\
+0.999 & $81.80_{\pm 2.37}$ & $84.39_{\pm 0.83}$ & $55.15_{\pm 1.39}$ & $53.52_{\pm 0.80}$ \\
\rowcolor{gray!20}+0.9,0.99 & $\textbf{83.24}_{\pm 2.23}$ & $\textbf{86.76}_{\pm 0.73}$ & $\textbf{56.58}_{\pm 1.47}$ & $\underline{55.27}_{\pm 0.91}$ \\
+0.9,0.99,0.999 & $\underline{82.99}_{\pm 2.22}$ & $86.24_{\pm 0.79}$ & $56.35_{\pm 1.72}$ & $\textbf{55.50}_{\pm 0.77}$ \\
\bottomrule
\end{tabular}
}
\end{minipage}
\hfill
\begin{minipage}[t]{0.48\textwidth}
\centering
\caption{
Computational cost and overall performance using CIFAR-100 under Sup-21K.
}
\label{tab:cost_comparison}
\resizebox{\textwidth}{!}{
\begin{tabular}{l|c|c|c|c}
\toprule
\multirow{2}{*}{\textbf{Method}} & \textbf{Total Param.} & \textbf{Trainable} & \textbf{Time Delay} & \textbf{$A_{\rm{auc}}$} \\
 & (M,$\downarrow$) & \textbf{Param.} (M,$\downarrow$) & (s/batch,$\downarrow$) & (\%,$\uparrow$) \\
\midrule
L2P & 86.01 & \phantom{0}0.22 & \phantom{0}5.17 & 76.23 \\
DualPrompt & 86.35 & \phantom{0}0.55 & \phantom{0}4.78 & 76.04 \\
CODA-P & 86.72 & \phantom{0}0.92 & \phantom{0}4.75 & 79.13 \\
MVP & 86.12 & \phantom{0}0.32 & \phantom{0}5.35 & 67.74 \\
MISA & 86.37 & \phantom{0}0.58 & \phantom{0}4.78 & 80.35 \\
\rowcolor{gray!20}FlyPrompt (ours) & 87.08 & \phantom{0}0.46 & \phantom{0}4.96 & \textbf{83.24} \\
\bottomrule
\end{tabular}
}
\end{minipage}
\end{table}

\textbf{Hyperparameter Sensitivity.}
Fig.~\ref{fig:hyper_param} explores key hyperparameters in REAR. Increasing the projection dimension $M$ improves performance, consistent with the theory that higher-dimensional spaces enable better feature separability and router performance (in Theorem~\ref{thm:rear_main}), mirroring sparse expansion in the fruit fly mushroom body. However, since the memory cost grows linearly with $M$, we set $M = 10{,}000$ as a practical trade-off. The regularization parameter $\lambda$ has smaller impact, with performance stable across several orders of magnitude. Full results across other PTMs are provided in Appendix~\ref{sec:hyper_param}.
We further analyze EMA decay rates with temporal ensemble. Tab.~\ref{tab:ema_head_results} shows that two EMA heads of ${0.9, 0.99}$, combined with the online head, achieve the best trade-off across datasets. This aligns with neurobiological findings that the mushroom body maintains short-, mid-, and long-term memory modules in parallel. Among various ensemble strategies (Tab.~\ref{tab:ensemble_results}), the ``SoftMax + element-wise maximum'' method is most effective and used by default. Detailed evaluations across other PTMs and configurations are provided in Appendices~\ref{sec:ema_head_all} and \ref{sec:ensemble_all}.

\textbf{Detailed Analysis.}
Returning to the core challenges identified in Sec.~\ref{sec:pre_analysis}, we revisit the roles of expert routing and expert competence improvement. As shown in Fig.~\ref{fig:gcl_analysis}, methods (e.g., FlyPrompt) that improve in these two areas correlate strongly with better overall GCL performance. In particular, Fig.~\ref{fig:router_acc} demonstrates the impact of random projection in boosting routing accuracy, while Fig.~\ref{fig:oracle_acc} highlights remaining headroom for improving expert competence. Despite introducing RP layer and tracking feature statistics, FlyPrompt adds minimal parameter overhead, i.e., just 0.83\% more parameters than MISA on ViT-B/16, and incurs negligible increase in computational cost (see Tab.~\ref{tab:cost_comparison}, \rebt{more comprehensive comparison in Tab.~\ref{tab:method_cost_extended} and detailed cost breakdown of components in Tab.~\ref{tab:components_cost}}). Together, these findings validate FlyPrompt's effectiveness in resolving the GCL challenges.


\section{Conclusion}

We presented \textbf{FlyPrompt}, a biologically inspired framework for GCL, which addresses the core challenges of expert routing and expert competence improvement under blurred task boundaries and single-pass constraints. Grounded in the neurobiological principles of the fruit fly's mushroom body, known for its sparse expansion, random connectivity, and multiscale modularity, FlyPrompt integrates a randomly expanded analytic router for non-iterative expert selection and a temporal ensemble of expert heads for robust adaptation over time. Theoretical analysis and empirical results across multiple GCL benchmarks demonstrate its strong performance and scalability.

While these results are encouraging, several limitations of the current work point to promising future directions. For instance, the temporal ensemble relies on a fixed composition of EMA decay rates, and adapting these dynamically to data drift could enhance robustness. Additionally, performance under extreme long-tailed distributions warrants further study. Looking forward, GCL is essential for deploying real-world learning systems, such as embodied agents, user-facing AI, and resource-constrained devices, where data is dynamic and supervision is limited. As continual adaptation is a natural strength of biological systems, the underlying principles they offer will continue to inspire future advances in GCL and beyond.



\clearpage

\paragraph{Acknowledgment.}
This work was supported by the STI2030-Major Projects 2022ZD0204900, 
the National Natural Science Foundation of China (NO.~62406160, 32530042, 32021002), 
the Beijing Natural Science Foundation L247011, 
and the Beijing Major Science and Technology Project No.~Z251100008425003.

\textbf{Ethics statement.}
I acknowledge that I and all co-authors of this work have read and commit to adhering to the ICLR Code of Ethics.

\textbf{Reproducibility statement.}
We have included the source code with clear instructions, and will release them upon acceptance.

\bibliography{iclr2026_conference}

@String(PAMI = {IEEE Trans. Pattern Anal. Mach. Intell.})

@String(CVPR= {IEEE Conf. Comput. Vis. Pattern Recog.})

@String(ICCV= {Int. Conf. Comput. Vis.})

@String(ECCV= {Eur. Conf. Comput. Vis.})

@String(ICLR = {Int. Conf. Learn. Represent.})

@String(PAMI  = {IEEE TPAMI})

@String(CVPR  = {CVPR})

@String(ICCV  = {ICCV})

@String(ECCV  = {ECCV})

@String(ICLR  = {ICLR})

@article{de2021continual,
  title={A continual learning survey: Defying forgetting in classification tasks},
  author={De Lange, Matthias and Aljundi, Rahaf and Masana, Marc and Parisot, Sarah and Jia, Xu and Leonardis, Ale{\v{s}} and Slabaugh, Gregory and Tuytelaars, Tinne},
  journal={PAMI},
  volume={44},
  number={7},
  pages={3366--3385},
  year={2021}
}

@inproceedings{caron2021emerging,
  title={Emerging properties in self-supervised vision transformers},
  author={Caron, Mathilde and Touvron, Hugo and Misra, Ishan and J{\'e}gou, Herv{\'e} and Mairal, Julien and Bojanowski, Piotr and Joulin, Armand},
  booktitle={ICCV},
  pages={9650--9660},
  year={2021}
}

@article{buzzega2020dark,
  title={Dark experience for general continual learning: a strong, simple baseline},
  author={Buzzega, Pietro and Boschini, Matteo and Porrello, Angelo and Abati, Davide and Calderara, Simone},
  journal={NeurIPS},
  volume={33},
  pages={15920--15930},
  year={2020}
}

@inproceedings{prabhu2020gdumb,
  title={GDumb: A simple approach that questions our progress in continual learning},
  author={Prabhu, Ameya and Torr, Philip HS and Dokania, Puneet K},
  booktitle=ECCV,
  pages={524--540},
  year={2020},
  organization={Springer}
}

@article{wang2024comprehensive,
  title={A comprehensive survey of continual learning: Theory, method and application},
  author={Wang, Liyuan and Zhang, Xingxing and Su, Hang and Zhu, Jun},
  journal={PAMI},
  year={2024},
  publisher={IEEE}
}

@article{aljundi2019gradient,
  title={Gradient based sample selection for online continual learning},
  author={Aljundi, Rahaf and Lin, Min and Goujaud, Baptiste and Bengio, Yoshua},
  journal={NeurIPS},
  volume={32},
  year={2019}
}

@inproceedings{bang2021rainbow,
  title={Rainbow memory: Continual learning with a memory of diverse samples},
  author={Bang, Jihwan and Kim, Heesu and Yoo, YoungJoon and Ha, Jung-Woo and Choi, Jonghyun},
  booktitle={CVPR},
  pages={8218--8227},
  year={2021}
}

@article{koh2021online,
  title={Online continual learning on class incremental blurry task configuration with anytime inference},
  author={Koh, Hyunseo and Kim, Dahyun and Ha, Jung-Woo and Choi, Jonghyun},
  journal={arXiv preprint arXiv:2110.10031},
  year={2021}
}

@inproceedings{wang2022learning,
  title={Learning to prompt for continual learning},
  author={Wang, Zifeng and Zhang, Zizhao and Lee, Chen-Yu and Zhang, Han and Sun, Ruoxi and Ren, Xiaoqi and Su, Guolong and Perot, Vincent and Dy, Jennifer and Pfister, Tomas},
  booktitle={CVPR},
  pages={139--149},
  year={2022}
}

@inproceedings{wang2022dualprompt,
  title={Dualprompt: Complementary prompting for rehearsal-free continual learning},
  author={Wang, Zifeng and Zhang, Zizhao and Ebrahimi, Sayna and Sun, Ruoxi and Zhang, Han and Lee, Chen-Yu and Ren, Xiaoqi and Su, Guolong and Perot, Vincent and Dy, Jennifer and others},
  booktitle={ECCV},
  pages={631--648},
  year={2022},
  organization={Springer}
}

@article{lester2021power,
  title={The power of scale for parameter-efficient prompt tuning},
  author={Lester, Brian and Al-Rfou, Rami and Constant, Noah},
  journal={arXiv preprint arXiv:2104.08691},
  year={2021}
}

@inproceedings{moon2023online,
  title={Online Class Incremental Learning on Stochastic Blurry Task Boundary via Mask and Visual Prompt Tuning},
  author={Moon, Jun-Yeong and Park, Keon-Hee and Kim, Jung Uk and Park, Gyeong-Moon},
  booktitle={ICCV},
  pages={11731--11741},
  year={2023}
}

@article{mcdonnell2024ranpac,
  title={Ranpac: Random projections and pre-trained models for continual learning},
  author={McDonnell, Mark D and Gong, Dong and Parvaneh, Amin and Abbasnejad, Ehsan and van den Hengel, Anton},
  journal={NeurIPS},
  volume={36},
  year={2024}
}

@inproceedings{smith2023coda,
  title={CODA-Prompt: COntinual Decomposed Attention-based Prompting for Rehearsal-Free Continual Learning},
  author={Smith, James Seale and Karlinsky, Leonid and Gutta, Vyshnavi and Cascante-Bonilla, Paola and Kim, Donghyun and Arbelle, Assaf and Panda, Rameswar and Feris, Rogerio and Kira, Zsolt},
  booktitle={CVPR},
  pages={11909--11919},
  year={2023}
}

@article{zhang2023slca,
  title={SLCA: Slow Learner with Classifier Alignment for Continual Learning on a Pre-trained Model},
  author={Zhang, Gengwei and Wang, Liyuan and Kang, Guoliang and Chen, Ling and Wei, Yunchao},
  journal={arXiv preprint arXiv:2303.05118},
  year={2023}
}

@inproceedings{mi2020generalized,
  title={Generalized class incremental learning},
  author={Mi, Fei and Kong, Lingjing and Lin, Tao and Yu, Kaicheng and Faltings, Boi},
  booktitle={CVPR Workshops},
  pages={240--241},
  year={2020}
}

@article{hu2021lora, 
  title={Lora: Low-rank adaptation of large language models},
  author={Hu, Edward J and Shen, Yelong and Wallis, Phillip and Allen-Zhu, Zeyuan and Li, Yuanzhi and Wang, Shean and Wang, Lu and Chen, Weizhu},
  journal={arXiv preprint arXiv:2106.09685},
  year={2021}
}

@inproceedings{kang2025advancing,
  title={Advancing Prompt-Based Methods for Replay-Independent General Continual Learning},
  author={Kang, Zhiqi and Wang, Liyuan and Zhang, Xingxing and Alahari, Karteek},
  booktitle={ICLR},
  year={2025}
}

@article{van2019three,
  title={Three scenarios for continual learning},
  author={Van de Ven, Gido M and Tolias, Andreas S},
  journal={arXiv preprint arXiv:1904.07734},
  year={2019}
}

@article{wang2021afec,
  title={Afec: Active forgetting of negative transfer in continual learning},
  author={Wang, Liyuan and Zhang, Mingtian and Jia, Zhongfan and Li, Qian and Bao, Chenglong and Ma, Kaisheng and Zhu, Jun and Zhong, Yi},
  journal={NeurIPS},
  year={2021}
}

@inproceedings{wang2022coscl,
  title={CoSCL: Cooperation of Small Continual Learners is Stronger Than a Big One},
  author={Wang, Liyuan and Zhang, Xingxing and Li, Qian and Zhu, Jun and Zhong, Yi},
  booktitle={ECCV},
  year={2022},
}

@article{wang2023hierarchical,
  title={Hierarchical Decomposition of Prompt-Based Continual Learning: Rethinking Obscured Sub-optimality},
  author={Wang, Liyuan and Xie, Jingyi and Zhang, Xingxing and Huang, Mingyi and Su, Hang and Zhu, Jun},
  journal={NeurIPS},
  year={2023}
}

@inproceedings{wu2025sdlora,
  title={SD-LoRA: Scalable Decoupled Low-Rank Adaptation for Class Incremental Learning},
  author={Wu, Yichen and Piao, Hongming and Huang, Long-Kai and Wang, Renzhen and Li, Wanhua and Pfister, Hanspeter and Meng, Deyu and Ma, Kede and Wei, Ying},
  booktitle={ICLR},
  year={2025}
}

@article{wang2024hide,
  title={HiDe-PET: Continual Learning via Hierarchical Decomposition of Parameter-Efficient Tuning},
  author={Wang, Liyuan and Xie, Jingyi and Zhang, Xingxing and Su, Hang and Zhu, Jun},
  journal={arXiv preprint arXiv:2407.05229},
  year={2024}
}

@article{dasgupta2017neural,
  title={A neural algorithm for a fundamental computing problem},
  author={Dasgupta, Sanjoy and Stevens, Charles F and Navlakha, Saket},
  journal={Science},
  volume={358},
  number={6364},
  pages={793--796},
  year={2017},
  publisher={American Association for the Advancement of Science}
}

@techreport{krizhevsky2009learning_cifar,
  title={Learning multiple layers of features from tiny images},
  author={Krizhevsky, Alex and Hinton, Geoffrey and others},
  year={2009},
  publisher={Toronto, ON, Canada},
  institution={University of Toronto}
}

@inproceedings{hendrycks2021many,
  title={The many faces of robustness: A critical analysis of out-of-distribution generalization},
  author={Hendrycks, Dan and Basart, Steven and others},
  booktitle={ICCV},
  year={2021}
}

@techreport{wah2011caltech,
  title={The caltech-ucsd birds-200-2011 dataset},
  author={Wah, Catherine and Branson, Steve and Welinder, Peter and Perona, Pietro and Belongie, Serge},
  year={2011},
  publisher={California Institute of Technology},
  institution={California Institute of Technology}
}

@article{ridnik2021imagenet,
  title={Imagenet-21k pretraining for the masses},
  author={Ridnik, Tal and Ben-Baruch, Emanuel and Noy, Asaf and Zelnik-Manor, Lihi},
  journal={arXiv preprint arXiv:2104.10972},
  year={2021}
}

@article{dosovitskiy2020image,
  title={An image is worth 16x16 words: Transformers for image recognition at scale},
  author={Dosovitskiy, Alexey and Beyer, Lucas and Kolesnikov, Alexander and Weissenborn, Dirk and Zhai, Xiaohua and Unterthiner, Thomas and Dehghani, Mostafa and Minderer, Matthias and Heigold, Georg and Gelly, Sylvain and others},
  journal={arXiv preprint arXiv:2010.11929},
  year={2020}
}

@article{zhou2021ibot,
  title={ibot: Image bert pre-training with online tokenizer},
  author={Zhou, Jinghao and Wei, Chen and Wang, Huiyu and Shen, Wei and Xie, Cihang and Yuille, Alan and Kong, Tao},
  journal={arXiv preprint arXiv:2111.07832},
  year={2021}
}

@inproceedings{yan2024orchestrate,
  title={Orchestrate latent expertise: Advancing online continual learning with multi-level supervision and reverse self-distillation},
  author={Yan, Hongwei and Wang, Liyuan and Ma, Kaisheng and Zhong, Yi},
  booktitle={CVPR},
  pages={23670--23680},
  year={2024}
}

@article{zhuang2024gacl,
  title={GACL: Exemplar-free generalized analytic continual learning},
  author={Zhuang, Huiping and Chen, Yizhu and Fang, Di and He, Run and Tong, Kai and Wei, Hongxin and Zeng, Ziqian and Chen, Cen},
  journal={NeurIPS},
  volume={37},
  pages={83024--83047},
  year={2024}
}

@article{kim2022theoretical,
  title={A theoretical study on solving continual learning},
  author={Kim, Gyuhak and Xiao, Changnan and Konishi, Tatsuya and Ke, Zixuan and Liu, Bing},
  journal={NeurIPS},
  volume={35},
  pages={5065--5079},
  year={2022}
}

@inproceedings{aljundi2019task,
  title={Task-free continual learning},
  author={Aljundi, Rahaf and Kelchtermans, Klaas and Tuytelaars, Tinne},
  booktitle={CVPR},
  pages={11254--11263},
  year={2019}
}

@article{parisi2019continual,
  title={Continual lifelong learning with neural networks: A review},
  author={Parisi, German I and Kemker, Ronald and Part, Jose L and Kanan, Christopher and Wermter, Stefan},
  journal={Neural Networks},
  volume={113},
  pages={54--71},
  year={2019},
  publisher={Elsevier}
}

@article{wang2023incorporating,
  title={Incorporating neuro-inspired adaptability for continual learning in artificial intelligence},
  author={Wang, Liyuan and Zhang, Xingxing and Li, Qian and Zhang, Mingtian and Su, Hang and Zhu, Jun and Zhong, Yi},
  journal={Nature Machine Intelligence},
  year={2023},
}

@INPROCEEDINGS{zhu2021prototype,
  author={Zhu, Fei and Zhang, Xu-Yao and Wang, Chuang and Yin, Fei and Liu, Cheng-Lin},
  booktitle={CVPR}, 
  title={Prototype Augmentation and Self-Supervision for Incremental Learning}, 
  year={2021},
  volume={},
  number={},
  pages={5867-5876},
  doi={10.1109/CVPR46437.2021.00581}}

@article{li2021prefix,
  title={Prefix-tuning: Optimizing continuous prompts for generation},
  author={Li, Xiang Lisa and Liang, Percy},
  journal={arXiv preprint arXiv:2101.00190},
  year={2021}
}

@article{rebuffi2017learning,
  title={Learning multiple visual domains with residual adapters},
  author={Rebuffi, Sylvestre-Alvise and Bilen, Hakan and Vedaldi, Andrea},
  journal={NeurIPS},
  volume={30},
  year={2017}
}

@article{zhuang2022acil,
  title={Acil: Analytic class-incremental learning with absolute memorization and privacy protection},
  author={Zhuang, Huiping and Weng, Zhenyu and Wei, Hongxin and Xie, Renchunzi and Toh, Kar-Ann and Lin, Zhiping},
  journal={NeurIPS},
  volume={35},
  pages={11602--11614},
  year={2022}
}

@article{krashes2007sequential,
  title={Sequential use of mushroom body neuron subsets during Drosophila odor memory processing},
  author={Krashes, Michael J and Keene, Alex C and Leung, Benjamin and Armstrong, J Douglas and Waddell, Scott},
  journal={Neuron},
  volume={53},
  number={1},
  pages={103--115},
  year={2007},
  publisher={Elsevier}
}

@article{cervantes2013system,
  title={System-like consolidation of olfactory memories in Drosophila},
  author={Cervantes-Sandoval, Isaac and Martin-Pe{\~n}a, Alfonso and Berry, Jacob A and Davis, Ronald L},
  journal={Journal of Neuroscience},
  volume={33},
  number={23},
  pages={9846--9854},
  year={2013},
  publisher={Society for Neuroscience}
}

@article{bouzaiane2015two,
  title={Two independent mushroom body output circuits retrieve the six discrete components of Drosophila aversive memory},
  author={Bouzaiane, Emna and Trannoy, Severine and Scheunemann, Lisa and Placais, Pierre-Yves and Preat, Thomas},
  journal={Cell Reports},
  volume={11},
  number={8},
  pages={1280--1292},
  year={2015},
  publisher={Elsevier}
}

@inproceedings{chen2021empirical,
  title={An empirical study of training self-supervised vision transformers},
  author={Chen, Xinlei and Xie, Saining and He, Kaiming},
  booktitle={ICCV},
  pages={9640--9649},
  year={2021}
}

@article{zador2023catalyzing,
  title={Catalyzing next-generation artificial intelligence through neuroai},
  author={Zador, Anthony and Escola, Sean and Richards, Blake and {\"O}lveczky, Bence and Bengio, Yoshua and Boahen, Kwabena and Botvinick, Matthew and Chklovskii, Dmitri and Churchland, Anne and Clopath, Claudia and others},
  journal={Nature Communications},
  volume={14},
  number={1},
  pages={1597},
  year={2023},
  publisher={Nature Publishing Group UK London}
}

@article{aso2014neuronal,
  title={The neuronal architecture of the mushroom body provides a logic for associative learning},
  author={Aso, Yoshinori and Hattori, Daisuke and Yu, Yang and Johnston, Rebecca M and Iyer, Nirmala A and Ngo, Teri-TB and Dionne, Heather and Abbott, LF and Axel, Richard and Tanimoto, Hiromu and others},
  journal={elife},
  volume={3},
  pages={e04577},
  year={2014},
  publisher={eLife Sciences Publications, Ltd}
}

@article{owald2015olfactory,
  title={Olfactory learning skews mushroom body output pathways to steer behavioral choice in Drosophila},
  author={Owald, David and Waddell, Scott},
  journal={Current opinion in neurobiology},
  volume={35},
  pages={178--184},
  year={2015},
  publisher={Elsevier}
}

@article{aso2014mushroom,
  title={Mushroom body output neurons encode valence and guide memory-based action selection in Drosophila},
  author={Aso, Yoshinori and Sitaraman, Divya and Ichinose, Toshiharu and Kaun, Karla R and Vogt, Katrin and Belliart-Gu{\'e}rin, Ghislain and Pla{\c{c}}ais, Pierre-Yves and Robie, Alice A and Yamagata, Nobuhiro and Schnaitmann, Christopher and others},
  journal={elife},
  volume={3},
  pages={e04580},
  year={2014},
  publisher={eLife Sciences Publications, Ltd}
}

@article{aso2016dopaminergic,
  title={Dopaminergic neurons write and update memories with cell-type-specific rules},
  author={Aso, Yoshinori and Rubin, Gerald M},
  journal={Elife},
  volume={5},
  pages={e16135},
  year={2016},
  publisher={eLife Sciences Publications, Ltd}
}

@article{wang2022sprompt,
  title={S-prompts learning with pre-trained transformers: An occam’s razor for domain incremental learning},
  author={Wang, Yabin and Huang, Zhiwu and Hong, Xiaopeng},
  journal={Advances in Neural Information Processing Systems},
  volume={35},
  pages={5682--5695},
  year={2022}
}

@article{davis2023learning,
  title={Learning and memory using Drosophila melanogaster: a focus on advances made in the fifth decade of research},
  author={Davis, Ronald L},
  journal={Genetics},
  volume={224},
  number={4},
  pages={iyad085},
  year={2023},
  publisher={Oxford University Press US}
}

@article{li2020connectome,
  title={The connectome of the adult Drosophila mushroom body provides insights into function},
  author={Li, Feng and Lindsey, Jack W and Marin, Elizabeth C and Otto, Nils and Dreher, Marisa and Dempsey, Georgia and Stark, Ildiko and Bates, Alexander S and Pleijzier, Markus William and Schlegel, Philipp and others},
  journal={Elife},
  volume={9},
  pages={e62576},
  year={2020},
  publisher={eLife Sciences Publications, Ltd}
}

@article{modi2020drosophila,
  title={The Drosophila mushroom body: from architecture to algorithm in a learning circuit},
  author={Modi, Mehrab N and Shuai, Yichun and Turner, Glenn C},
  journal={Annual review of neuroscience},
  volume={43},
  number={1},
  pages={465--484},
  year={2020},
  publisher={Annual Reviews}
}

@article{turner2008olfactory,
  title={Olfactory representations by Drosophila mushroom body neurons},
  author={Turner, Glenn C and Bazhenov, Maxim and Laurent, Gilles},
  journal={Journal of neurophysiology},
  volume={99},
  number={2},
  pages={734--746},
  year={2008},
  publisher={American Physiological Society}
}

@article{honegger2011cellular,
  title={Cellular-resolution population imaging reveals robust sparse coding in the Drosophila mushroom body},
  author={Honegger, Kyle S and Campbell, Robert AA and Turner, Glenn C},
  journal={Journal of neuroscience},
  volume={31},
  number={33},
  pages={11772--11785},
  year={2011},
  publisher={Society for Neuroscience}
}

@article{tropp2012user,
  title={User-friendly tail bounds for sums of random matrices},
  author={Tropp, Joel A},
  journal={Foundations of computational mathematics},
  volume={12},
  number={4},
  pages={389--434},
  year={2012},
  publisher={Springer}
}

@inproceedings{zinkevich2003online,
  title={Online convex programming and generalized infinitesimal gradient ascent},
  author={Zinkevich, Martin},
  booktitle={Proceedings of the 20th international conference on machine learning (icml-03)},
  pages={928--936},
  year={2003}
}

@article{besbes2015non,
  title={Non-stationary stochastic optimization},
  author={Besbes, Omar and Gur, Yonatan and Zeevi, Assaf},
  journal={Operations research},
  volume={63},
  number={5},
  pages={1227--1244},
  year={2015},
  publisher={INFORMS}
}

@article{lin2022beyond,
  title={Beyond not-forgetting: Continual learning with backward knowledge transfer},
  author={Lin, Sen and Yang, Li and Fan, Deliang and Zhang, Junshan},
  journal={Advances in Neural Information Processing Systems},
  volume={35},
  pages={16165--16177},
  year={2022}
}

@article{le2024mixture,
  title={Mixture of experts meets prompt-based continual learning},
  author={Le, Minh and Nguyen, An and Nguyen, Huy and Nguyen, Trang and Pham, Trang and Van Ngo, Linh and Ho, Nhat},
  journal={Advances in Neural Information Processing Systems},
  volume={37},
  pages={119025--119062},
  year={2024}
}

@inproceedings{kornblith2019similarity,
  title={Similarity of neural network representations revisited},
  author={Kornblith, Simon and Norouzi, Mohammad and Lee, Honglak and Hinton, Geoffrey},
  booktitle={International conference on machine learning},
  pages={3519--3529},
  year={2019},
  organization={PMlR}
}

@inproceedings{zhou2024magr,
  title        = {Magr: Manifold-aligned graph regularization for continual action quality assessment},
  author       = {Zhou, Kanglei and Wang, Liyuan and Zhang, Xingxing and Shum, Hubert PH and Li, Frederick WB and Li, Jianguo and Liang, Xiaohui},
  booktitle    = {European Conference on Computer Vision},
  pages        = {375--392},
  year         = {2024},
  organization = {Springer}
}

@article{zhou2025asal,
  author  = {Zhou, Kanglei and Hao, Zikai and Wang, Liyuan and Liang, Xiaohui},
  journal = {IEEE Transactions on Visualization and Computer Graphics},
  number  = {5},
  title   = {Adaptive Score Alignment Learning for Continual Perceptual Quality Assessment of 360-Degree Videos in Virtual Reality},
  volume  = {31},
  year    = {2025},
  pages   = {2880-2890}
}

@article{zhou2025continual,
  title={Continual action quality assessment via adaptive manifold-aligned graph regularization},
  author={Zhou, Kanglei and Pan, Qingyi and Zhang, Xingxing and Shum, Hubert PH and Li, Frederick WB and Liang, Xiaohui and Wang, Liyuan},
  journal={arXiv preprint arXiv:2510.06842},
  year={2025}
}

@article{zhou2024comprehensive,
  title={A comprehensive survey of action quality assessment: Method and benchmark},
  author={Zhou, Kanglei and Cai, Ruizhi and Wang, Liyuan and Shum, Hubert PH and Liang, Xiaohui},
  journal={arXiv preprint arXiv:2412.11149},
  year={2024}
}

@inproceedings{sun2025right,
  title={Right Time to Learn: Promoting Generalization via Bio-inspired Spacing Effect in Knowledge Distillation},
  author={Sun, Guanglong and Yan, Hongwei and Wang, Liyuan and Li, Qian and Lei, Bo and Zhong, Yi},
  booktitle={International Conference on Machine Learning},
  pages={57790--57807},
  year={2025},
  organization={PMLR}
}

@article{yan2025domain,
  title={Domain Generalizable Continual Learning},
  author={Yan, Hongwei and Sun, Guanglong and Kang, Zhiqi and Zhong, Yi and Wang, Liyuan},
  journal={arXiv preprint arXiv:2510.16914},
  year={2025}
}

@article{wang2025convergent,
  title={Convergent multi-modular architecturefor adaptive learning in Drosophila and artificial intelligence},
  author={Wang, Liyuan and Li, Qian},
  journal={iScience},
  volume={28},
  number={11},
  year={2025},
  publisher={Elsevier}
}

@article{sun2026mepo,
  title={MePo: Meta Post-Refinement for Rehearsal-Free General Continual Learning},
  author={Sun, Guanglong and Yan, Hongwei and Wang, Liyuan and Kang, Zhiqi and Cui, Shuang and Su, Hang and Zhu, Jun and Zhong, Yi},
  journal={arXiv preprint arXiv:2602.07940},
  year={2026}
}

@article{sun2025spacing,
  title={Spacing effect improves generalization in biological and artificial systems},
  author={Sun, Guanglong and Huang, Ning and Yan, Hongwei and Wang, Liyuan and Zhou, Jun and Li, Qian and Lei, Bo and Zhong, Yi},
  journal={bioRxiv},
  pages={2025--12},
  year={2025},
  publisher={Cold Spring Harbor Laboratory}
}
\bibliographystyle{iclr2026_conference}

\clearpage

\appendix

\section{Large language models assistance}
Large language models were used to polish the manuscript. The authors have thoroughly reviewed and edited all content and take full responsibility for the published work.

\section{Related Work}\label{sec:related}

\textbf{Continual Learning (CL)} aims to train models on task streams with evolving distributions~\citep{wang2024comprehensive,wang2021afec,wang2022coscl,parisi2019continual}. Canonical CL settings are categorized into task-incremental learning (TIL), class-incremental learning (CIL), and domain-incremental learning (DIL)~\citep{van2019three,yan2024orchestrate,yan2025domain}, depending on the structure of input and label spaces. TIL and CIL assume disjoint label spaces, with task identity provided only in TIL, while DIL shares label space but varies input domains. Theoretically, CL has been formalized by decoupling task identity prediction (TIP) and within-task prediction (WTP), which remain orthogonal under clearly segmented tasks and from-scratch training~\citep{kim2022theoretical, wang2023hierarchical}.

The rise of pretrained models (PTMs) has shifted CL towards adapting frozen backbones via lightweight modules, known as parameter-efficient tuning (PET) ~\citep{lester2021power, li2021prefix, rebuffi2017learning, hu2021lora}. PET-based CL methods often employ either task-shared modules~\citep{zhang2023slca, mcdonnell2024ranpac,zhou2024comprehensive,zhou2025continual,zhou2025asal,zhou2024magr} that require gradual updates, or task-specific experts~\citep{wang2022learning, wang2023hierarchical} that demand effective expert selection (implicitly via external queries~\citep{wang2022learning} or explicitly via routing functions~\citep{wang2023hierarchical}). Importantly, the strong priors embedded in PTMs blur the TIP–WTP decomposition, making classical CL theory less applicable~\citep{wang2023hierarchical,wang2024hide}.

\textbf{General Continual Learning (GCL)} extends CL to more practical scenarios by removing assumptions of clear task segmentation and offline data access~\citep{buzzega2020dark,de2021continual,mi2020generalized}. Specifically, GCL emphasizes \textit{online learning}, where each data point is seen only once; and \textit{blurry or unknown task boundaries}, where task identities are absent or ill-defined~\citep{aljundi2019task,prabhu2020gdumb,bang2021rainbow,moon2023online,sun2026mepo}. These properties introduce unique challenges in expert selection, knowledge retention, and fast adaptation, without task identities or replay buffers. Additional constraints, such as constant memory budgets and anytime inference, further distinguish GCL from traditional CL~\citep{de2021continual}.

To implement the GCL challenges, benchmarks such as Task-Free CL~\citep{aljundi2019task,prabhu2020gdumb} and Si-Blurry~\citep{moon2023online} have been proposed, progressively relaxing task-awareness and enforcing stream-based learning. Correspondingly, GCL methods adapt replay-based sampling~\citep{aljundi2019gradient,bang2021rainbow}, memory management~\citep{koh2021online}, or PET-based designs~\citep{moon2023online, kang2025advancing}. However, replay methods raise privacy and scalability concerns, while recent PET-based methods (e.g., MVP~\citep{moon2023online} and MISA~\citep{kang2025advancing}) still suffer from limited representation capacity and lack principled mechanisms for prompt expert selection under non-stationary inputs. Consequently, their improvements over naive PTM-based baselines remain modest.

\section{Implementation Details}\label{sec:implement_details}

\subsection{Training Setup}\label{sec:training_details}

We follow the previous GCL studies~\citep{moon2023online, kang2025advancing} for a fair comparison. The standard ViT-B/16 transformer backbone has an embedding dimension of $d=768$. For prompt-based methods, we unify the prompt length to 5 and the position to insert the prompt as the first five layers of ViT. All methods are trained with an Adam optimizer with a learning rate $0.005$ and zero weight decay. We set the batch size to 64, the epoch number to 1 (online learning), and the online iteration of each batch to $3$. All images are cropped and resized to $224\times 224$ to fit the ViT format using standard data transformation operations (resize, random crop, random horizontal flip and normalization). Moreover, the logit mask $\bm{m}$ trick in Sec.~\ref{sec:TE2} is generally applied to all methods to enhance training stability. All experiment jobs are performed on the same Linux server with Intel Xeon Silver 4316 2.3GHz CPUs (20 cores), 1 NVIDIA RTX 4090 GPU. For random seeds, we use the fixed values 1, 2, 3, 4, and 5 for all parallel runs.

\subsection{Baselines}\label{sec:baseline_details}

\rebt{Unless otherwise specified, all baselines share the common ViT-B/16 backbone, single-pass Si-Blurry streams, optimizer, and data preprocessing described at the beginning of this section; below, we highlight method-specific architectures and the GCL-specific adaptations.}

\rebt{\textbf{Sequential fine-tuning (Seq FT / SL) and Linear probing.} Seq FT fine-tunes all parameters of the ViT-B/16 backbone and classifier on the Si-Blurry streams without replay buffers or task-specific heads; SL is an otherwise identical variant with a smaller learning rate (i.e., $5\times 10^{-5}$, 10 times smaller than $0.005$ used by other baselines) to provide a more optimistic lower bound~\citep{zhang2023slca}. Linear probing instead freezes the backbone and trains only a linear classifier, with no prompt modules or expert structures; together, these methods serve as simple PTM-based lower bounds.}

\rebt{\textbf{Prompt-based CL baselines (L2P, DualPrompt, CODA-Prompt).} For L2P~\citep{wang2022learning}, DualPrompt~\citep{wang2022dualprompt}, and CODA-Prompt~\citep{smith2023coda}, we keep their original prompt-controller designs (key-based prompt pool in L2P, global+task prompts in DualPrompt, and attention-based prompts in CODA-Prompt), but adapt them to GCL by freezing the ViT-B/16 backbone and unifying prompt length and insertion position as in Sec.~\ref{sec:training_details}. All three methods are trained in a single online pass over the Si-Blurry streams with the same update schedule as FlyPrompt, without extra replay or offline fine-tuning, so that differences in performance come from their prompt mechanisms rather than from additional data passes.}

\rebt{\textbf{GCL baselines (MVP, MISA).} MVP~\citep{moon2023online} and MISA~\citep{kang2025advancing} are implemented on top of the same ViT-B/16 backbone and Si-Blurry streams. We follow their official configurations for expert/prompt structures and initialization, while enforcing the unified prompt configuration and online training protocol of Sec.~\ref{sec:training_details}. As in prior work, they maintain session-wise experts or prompts, but do not use any privileged task oracle beyond the evolving data stream; their routing structures can be interpreted in the same way as FlyPrompt’s experts discussed in Sec.~\ref{sec:task_boundary}. For fairness, MVP and MISA also use the batch-seen class logit mask in Sec.~\ref{sec:TE2}, whose effect is ablated alongside other mask types in Tab.~\ref{tab:mask_types}.}

\rebt{\textbf{Offline PTM-based CL methods (S-Prompt++, HiDe-Prompt/LoRA/Adapter, NoRGa, SD-LoRA).} S-Prompt++~\citep{wang2022sprompt} introduces prompt experts with a mixture-of-experts (MoE) structure with linear gating, while HiDe-Prompt/LoRA/Adapter~\citep{wang2023hierarchical} and NoRGa~\citep{le2024mixture} build hierarchical decompositions and stronger MoE-based routing on top of S-Prompt++, and SD-LoRA~\cite{wu2025sdlora} leverages structured low-rank adapters by decomposing expert LoRA into learnable amplitude and fixed direction; all of these are originally designed for offline or task/class-incremental CL. Specifically, HiDe and NoRGa consist of a two-stage TIP+WTP (task-ID prediction then within-task prediction) pipeline. To make them compatible with GCL and Si-Blurry, we adapt their TIP step as follows: when the method predicts a class, it is allowed to activate \emph{all} prompts corresponding to the candidate task IDs associated with that class, and we count the prediction as correct if \emph{any} activated prompt outputs the true label. This is an intentionally favorable modification for these baselines. In addition, any feature statistics required by their alignment modules (e.g., for HiDe or NoRGa) are accumulated online from the stream, rather than being computed from stored per-task datasets. Quantitative comparisons of these adapted offline PTM-based methods with FlyPrompt are reported in Tab.~\ref{tab:method_cost_extended}.}

\rebt{\textbf{Analytic random-projection baselines (RanPAC variants).} RanPAC~\citep{mcdonnell2024ranpac} was originally proposed for offline class-incremental learning, where the PTM is fine-tuned on the first task, frozen, and all task-1 features are recomputed and stored to form a stable Gram matrix for closed-form ridge regression; this protocol is incompatible with single-pass GCL and blurry task boundaries. To ensure a fair analytic baseline, we adapt RanPAC into three GCL-compliant variants: \textbf{RanPAC$^\dagger$} fine-tunes the PTM on the first Si-Blurry session without storing features and then solves a closed-form ridge classifier on the resulting random-feature representations, serving as the main analytic random-projection baseline in our ablations; \textbf{RanPAC$^\ddagger$} freezes the PTM during the first session and stores all features to approximate the original offline setting while still respecting the single-pass constraint on labels; \textbf{RanPAC$^*$} simultaneously fine-tunes the PTM and collects features during the first session, which yields an ill-conditioned Gram matrix due to representation drift but offers an optimistic upper bound on analytic-classifier performance under our setting. As summarized in Tab.~\ref{tab:ranpac_vs_flyprompt}, FlyPrompt consistently outperforms all RanPAC variants across datasets.}

\clearpage
\section{Discussion of Task Boundary in Si-Blurry}\label{sec:task_boundary}

\begin{table}[h]
\centering \caption{\rebt{Performance comparison of different numbers of tasks and experts for GCL methods on CIFAR-100 dataset over Sup-21K. ``task-correlated'' indicates that the initialization and training of expert parameters are aligned with task/sessions. $w$ denotes the sample budget (window size) of each expert when methods adopt a self-triggered expert allocation mechanism. All results are reported as an average of five parallel runs ($\pm$ standard deviation) with different random seeds.}}
\label{tab:num_tasks_experts}
\renewcommand\arraystretch{1.05}
\resizebox{0.95\textwidth}{!}{\begin{tabular}{c l|cc|cc|cc}\toprule
\multirow{2}{*}{\textbf{\# of Tasks}} & \multirow{2}{*}{\textbf{\# of Experts}} & \multicolumn{2}{c|}{\textbf{MVP}} & \multicolumn{2}{c|}{\textbf{MISA}} & \multicolumn{2}{c}{\textbf{FlyPrompt}} \\ & & $A_{\rm{auc}} (\%,\uparrow)$ & $A_{\rm{last}} (\%,\uparrow)$ & $A_{\rm{auc}} (\%,\uparrow)$ & $A_{\rm{last}} (\%,\uparrow)$ & $A_{\rm{auc}} (\%,\uparrow)$ & $A_{\rm{last}} (\%,\uparrow)$ \\ \midrule
\rowcolor{gray!20}\multirow{4}{*}{5}  & 5 (task-correlated) & $\underline{67.74}_{\pm 4.96}$ & $63.22_{\pm 0.69}$ & $\underline{80.35}_{\pm 2.39}$ & $80.75_{\pm 1.24}$ & $83.24_{\pm 2.23}$ & $\mathbf{86.76}_{\pm 0.73}$ \\ & 5  ($w=10000$) & $\mathbf{67.75}_{\pm 4.96}$ & $63.22_{\pm 0.76}$ & $\mathbf{80.60}_{\pm 2.06}$ & $\mathbf{81.73}_{\pm 1.17}$ & $\mathbf{83.67}_{\pm 2.23}$ & $\underline{85.78}_{\pm 0.62}$ \\ & 10 ($w=5000$) & $67.23_{\pm 5.06}$ & $\underline{63.47}_{\pm 0.78}$ & $79.95_{\pm 1.86}$ & $\underline{81.32}_{\pm 1.15}$ & $\underline{83.40}_{\pm 2.28}$ & $85.43_{\pm 0.32}$ \\ & 20 ($w=2500$) & $67.19_{\pm 4.80}$ & $\mathbf{64.06}_{\pm 1.48}$ & $79.94_{\pm 1.75}$ & $81.03_{\pm 0.92}$ & $82.26_{\pm 1.94}$ & $84.48_{\pm 0.64}$ \\ \midrule
\rowcolor{gray!20}\multirow{4}{*}{10} & 10 (task-correlated) & $58.23_{\pm 3.42}$ & $61.13_{\pm 6.00}$ & $75.71_{\pm 3.10}$ & $80.22_{\pm 0.47}$ & $\mathbf{77.65}_{\pm 3.05}$ & $\mathbf{84.87}_{\pm 0.50}$ \\ & 5  ($w=10000$) & $\mathbf{58.49}_{\pm 3.57}$ & $\underline{61.98}_{\pm 6.19}$ & $\mathbf{76.25}_{\pm 3.10}$ & $\underline{80.62}_{\pm 0.62}$ & $\underline{77.28}_{\pm 2.79}$ & $\underline{84.63}_{\pm 0.50}$ \\ & 10 ($w=5000$) & $58.19_{\pm 3.42}$ & $60.80_{\pm 6.48}$ & $75.69_{\pm 3.11}$ & $80.18_{\pm 0.41}$ & $76.96_{\pm 3.23}$ & $84.03_{\pm 0.39}$ \\ & 20 ($w=2500$) & $\underline{58.46}_{\pm 3.34}$ & $\mathbf{62.15}_{\pm 5.28}$ & $\underline{75.73}_{\pm 2.80}$ & $\mathbf{80.66}_{\pm 0.61}$ & $76.47_{\pm 3.38}$ & $83.39_{\pm 0.55}$ \\ \midrule
\rowcolor{gray!20}\multirow{4}{*}{20} & 20 (task-correlated) & $\underline{56.52}_{\pm 3.20}$ & $\mathbf{56.87}_{\pm 5.18}$ & $\underline{73.96}_{\pm 0.72}$ & $\mathbf{77.98}_{\pm 1.17}$ & $75.87_{\pm 1.93}$ & $\underline{81.98}_{\pm 1.12}$ \\ & 5  ($w=10000$) & $\mathbf{56.59}_{\pm 3.34}$ & $56.58_{\pm 6.76}$ & $\mathbf{74.27}_{\pm 0.97}$ & $\underline{77.89}_{\pm 1.60}$ & $\mathbf{76.12}_{\pm 1.57}$ & $81.90_{\pm 0.32}$ \\ & 10 ($w=5000$) & $56.35_{\pm 3.38}$ & $56.80_{\pm 6.57}$ & $73.86_{\pm 0.82}$ & $77.59_{\pm 1.42}$ & $\mathbf{76.12}_{\pm 1.21}$ & $\mathbf{82.21}_{\pm 0.80}$ \\ & 20 ($w=2500$) & $56.48_{\pm 3.15}$ & $\underline{56.86}_{\pm 5.39}$ & $73.66_{\pm 0.85}$ & $77.70_{\pm 1.24}$ & $75.36_{\pm 1.65}$ & $81.13_{\pm 0.55}$ \\ \bottomrule
\end{tabular}}
\end{table}

\rebt{GCL~\citep{buzzega2020dark} is defined by a single-pass, non-stationary data stream without task boundaries during training and without a task oracle at test time. The Si-Blurry benchmark~\citep{moon2023online} that we adopt has been carefully analyzed in subsequent work~\citep{kang2025advancing}: by controlling the disjoint-class ratio $r_{\rm D}$ and blurry-sample ratio $r_{\rm B}$, it generates streams where (i) the number of active classes can vary across sessions, (ii) classes may reoccur across sessions, and (iii) the number of samples per class and per session is randomized~\citep{mi2020generalized} In particular, when $r_{\rm D}$ approaching 0, the nominal ``task'' or ``session'' index becomes decorrelated from distributional changes. These properties ensure that Si-Blurry conforms to the core GCL assumptions, rather than reducing to standard task-incremental CIL. }

\rebt{Within this setting, FlyPrompt does not assume any privileged boundary information beyond what is already used by prior GCL methods such as MVP and MISA. The ``task'' or ``session'' index provided by Si-Blurry is treated as a conceptual device to describe how the benchmark constructs streams, not as a supervision signal for the model. In our implementation, expert indices are aligned with nominal session identities purely for convenience: the same behavior can be reproduced by starting a new expert after a fixed number of observed samples or when a user-defined computational/storage budget is reached, without accessing the task index. Moreover, the total number of experts $T$ is not a hard-coded prior; matrices such as $Q \in \mathbb{R}^{M \times T}$ and the router head can be dynamically extended from $T$ to $T+1$ via zero-padding, analogous to adding classes in a standard classifier. Implementation details for how both GCL methods and offline PTM-based baselines are instantiated under this regime are summarized in Appendix~\ref{sec:baseline_details}.}

\rebt{To empirically validate that FlyPrompt and comparable GCL baselines do not gain an advantage from Si-Blurry's session structure, we further compare task-aligned expert management with a self-triggered expert allocation mechanism. In the self-triggered setup, each method maintains a fixed sample budget and freezes the current expert while initializing a new one whenever the number of observed samples reaches a predefined threshold, fully decoupling expert updates from external task segmentation. We evaluate multiple combinations of nominal task counts (\# of Tasks $=5,10,20$) and expert budgets (\# of Experts $=5,10,20$) on CIFAR-100, with corresponding sample budgets chosen to cover the 50K training examples. As summarized in Tab.~\ref{tab:num_tasks_experts}, self-triggered expert initialization achieves performance on par with, or slightly better than, session-aligned setups for MVP, MISA, and FlyPrompt across all tested budgets. This confirms that (i) task-switching signals offer no measurable benefit in this benchmark, and (ii) our expert management mechanism does not exploit any extra boundary information.}

\clearpage
\section{Theoretical Proofs}

\subsection{Proofs for REAR (Theorem~\ref{thm:rear_main})}\label{sec:rear_proof}

\subsubsection*{Notation and assumptions}
We repeat and fix the notation used throughout the proof of Theorem~\ref{thm:rear_main}:
\begin{enumerate}
  \item $f_{\bm\theta}:\mathcal X\to\mathbb R^d$ is the given pretrained backbone. For an input $\bm x$ we write $\bm h=f_{\bm\theta}(\bm x)\in\mathbb R^d$.
  \item $\bm R\in\mathbb R^{d\times M}$ is a random matrix with i.i.d.\ $\mathcal N(0,1)$ entries; the $j$-th column is $\bm r_j\in\mathbb R^d$.
  \item The random-expanded feature map is
    \[
      \bm\varphi(\bm x)=\big(\varphi_1(\bm x),\dots,\varphi_M(\bm x)\big)^\top,\qquad
      \varphi_j(\bm x)=\sigma(\bm h^\top\bm r_j).
    \]
  \item Assume embedding-boundedness: $\|\bm h\|_2\le H$ for all $\bm x$. (This can be enforced in practice by layer-norm or clipping.)
  \item Activation $\sigma:\mathbb R\to\mathbb R$ is $L_\sigma$-Lipschitz and has linear growth $|\sigma(z)|\le C(1+|z|)$. ReLU satisfies these with $L_\sigma=1$ and linear growth $C=1$.
  \item For training, we accumulate batches (or singletons) to form
    \[
      \bm G=\sum_{i=1}^N \bm\varphi(\bm x_i)\bm\varphi(\bm x_i)^\top\in\mathbb R^{M\times M},\qquad
      \bm Q=\sum_{i=1}^N \bm\varphi(\bm x_i)\bm c_i^\top\in\mathbb R^{M\times T},
    \]
    where $\bm c_i\in\{0,1\}^T$ is the one-hot indicator of the target expert; see Eq.~(\ref{eq:update_G_and_Q})).
  \item Ridge solution (router):
    \[
      \widehat{\bm U}^\top=(\bm G+\lambda\bm I)^{-1}\bm Q,\qquad \lambda>0,
    \]
    with regularization parameter $\lambda$ as in Eq.~(\ref{eq:REAR_ridge}).
  \rebt{\item For the theoretical analysis, we treat the training pairs $(\bm x_i,y_i)$ as i.i.d.\ draws from an underlying distribution over $\mathcal X\times\{1,\dots,T\}$, with a fixed and finite number of experts $T$.}
  \rebt{\item For the margin-based routing-accuracy corollary below, we additionally assume that there exists a margin $\gamma>0$ such that, for the population minimizer $U^\star$ and almost every input $\bm x$, the score of the correct expert $t^\star(\bm x)$ satisfies $s_{U^\star}(\bm x)_{t^\star(\bm x)} \ge s_{U^\star}(\bm x)_t + \gamma$ for all $t\neq t^\star(\bm x)$, where $s_{U}(\bm x):=\bm\varphi(\bm x)U^\top$.\footnote{This assumption is not needed for the excess-risk decomposition in Theorem~\ref{thm:rear_main} itself, but acts as a bridge.}}

\end{enumerate}

\vspace{0.2cm}
\rebt{At the population level we consider the regularized squared risk}
\[
\rebt{\mathcal R(U):=\mathbb E\big\|s_U(X)-C\big\|_2^2+\lambda\|U\|_F^2,}
\]
\rebt{where $(X, C)$ denotes a random variable pair drawn from the same distribution as the training examples $(\bm x_i,\bm c_i)$, with $X\in\mathcal X$ and  $C\in\{0,1\}^T$ is the one-hot indicator of the target expert. 
We write $s_U(\bm x):=\bm\varphi(\bm x)U^\top$ for the router scores as in Eq.~(\ref{eq:routing_score}). 
The minimizer of $\mathcal R$ in the kernel-induced feature space is precisely the $U^\star$ appearing below.}

We then state the complete Theorem~\ref{thm:rear_main} here based on the above assumptions:
\begin{theorem*}[REAR, full]
Under the standing assumptions above, form the online statistics $\bm G,\bm Q$ as in Eq.~(\ref{eq:update_G_and_Q}) and the ridge solution $\widehat{\bm U}$ as in Eq.~(\ref{eq:REAR_ridge}). Let $N$ be the total number of samples used to form $\bm G,\bm Q$ and let $U^\star$ denote the population regularized minimizer in the feature space induced by the kernel $k(\bm h,\bm h')=\mathbb E_{\bm r}[\sigma(\bm h^\top\bm r)\sigma(\bm h'^\top\bm r)]$. Then for any $\delta\in(0,1)$, with probability at least $1-\delta$ (over $\bm R$ and the training samples), the excess (population) squared risk decomposes as
\[
\mathcal R(\widehat{\bm U})-\mathcal R(U^\star)
\le
\mathcal E_{\mathrm{feat}}(M,\delta)
+
\mathcal E_{\mathrm{estim}}(N,\lambda,\delta)
+
\mathcal E_{\mathrm{reg}}(\lambda),
\]
where, for universal constants $C_i$ (depending on $H,L_\sigma,C$),
\[
\mathcal E_{\mathrm{feat}}(M,\delta)\le \rebt{C_1\sqrt{\frac{\log(N/\delta)}{M}}},\quad
\mathcal E_{\mathrm{estim}}(N,\lambda,\delta)\le C_2\frac{1}{\sqrt{N}}\cdot\frac{1}{\lambda},\quad
\mathcal E_{\mathrm{reg}}(\lambda)\le C_3\lambda\|U^\star\|_F^2.
\]
\end{theorem*}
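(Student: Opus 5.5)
The plan is a standard three-way split of the excess risk through two intermediate comparators. Let $\mathcal R_M(U)$ denote the regularized population risk restricted to the $M$-dimensional random feature space induced by $\bm R$, and $U_M^\star$ its minimizer. Write $\widetilde{\mathcal R}(U):=\mathbb E\|s_U(X)-C\|_2^2$ for the unregularized risk. We telescope
\[
\mathcal R(\widehat{\bm U})-\mathcal R(U^\star)
=\big[\mathcal R(\widehat{\bm U})-\mathcal R_M(U_M^\star)\big]
+\big[\mathcal R_M(U_M^\star)-\mathcal R(U^\star)\big],
\]
and the regularization bias $\mathcal E_{\mathrm{reg}}$ is split off by comparing $\mathcal R$ with $\widetilde{\mathcal R}$. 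For ridge the regularizer contributes exactly $\lambda\|U\|_F^2$. Along the path of minimizers one has $\|U_\lambda\|_F\le\|U^\star\|_F$, so this piece is at most $C_3\lambda\|U^\star\|_F^2$.

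\textbf{Step 1: the random-feature term.}
The expansion gives $\frac1M\bm\varphi(\bm x)^\top\bm\varphi(\bm x')=\frac1M\sum_j\sigma(\bm h^\top\bm r_j)\sigma(\bm h'^\top\bm r_j)$, which is an empirical average of $M$ i.i.d.\ terms with mean $k(\bm h,\bm h')$. Because $\|\bm h\|\le H$ and $|\sigma(z)|\le C(1+|z|)$, each summand is a product of two sub-Gaussian variables and hence sub-exponential, with norm $O(C^2(1+H)^2)$. Bernstein's inequality plus a union bound over the $N^2$ training pairs gives a uniform kernel error of order $\sqrt{\log(N/\delta)/M}$ on the sample, with probability $1-\delta/2$. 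I would then transfer this to the risk by the usual random-features argument: project the population minimizer $U^\star$ (living in the RKHS of $k$) onto the span of the $M$ features, i.e.\ a Monte Carlo approximation of $\int\bm u^\star(\bm r)\sigma(\bm h^\top\bm r)\,d\bm r$. The squared-loss gap is bounded by the approximation error. Since the losses are bounded, this costs $O(\sqrt{\log(N/\delta)/M})$ rather than its square, which matches $\mathcal E_{\mathrm{feat}}$.

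\textbf{Step 2: the estimation term, conditional on $\bm R$.}
Here $\widehat{\bm U}^\top=(\bm G+\lambda\bm I)^{-1}\bm Q$ is exactly the empirical ridge minimizer, since Eq.~(\ref{eq:update_G_and_Q}) accumulates sufficient statistics and the result is order-independent. So streaming plays no role. Let $\Sigma=\mathbb E[\bm\varphi\bm\varphi^\top]$ and $\bm b=\mathbb E[\bm\varphi C^\top]$. Then
\[
\widehat{\bm U}^\top-U_M^{\star\top}=(\bm G/N+\tfrac{\lambda}{N}\bm I)^{-1}\big[(\bm Q/N-\bm b)-(\bm G/N-\Sigma)U_M^{\star\top}\big]
\]
in the appropriately normalized parametrization. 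Matrix Bernstein (Tropp) bounds $\|\bm G/N-\Sigma\|$ and $\|\bm Q/N-\bm b\|$ by $O(\sqrt{\log(1/\delta)/N})$. The features are bounded after conditioning on the good event of Step 1. The inverse is bounded by $1/\lambda$ in the regularized scaling, so the parameter error is $O(N^{-1/2}\lambda^{-1})$. Because the population risk is quadratic and smooth with bounded feature second moment, this carries into a risk gap of the same order, giving $\mathcal E_{\mathrm{estim}}$. Finally, a union bound over the two events yields probability $1-\delta$.

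\textbf{Main obstacle.}
The delicate step is making the normalizations consistent. The statement uses $\lambda$ against unnormalized $\bm G$, with features not divided by $\sqrt M$, while $U^\star$ lives in the RKHS of $k$. A faithful argument needs to rescale $\lambda\mapsto\lambda/N$ and $\bm\varphi\mapsto\bm\varphi/\sqrt M$, and absorb the resulting factors of $H$, $L_\sigma$ and $C$ into $C_1,C_2,C_3$. It also needs the Step 1 transfer from kernel approximation to risk, taking care not to pick up a hidden $\|U^\star\|_{\mathcal H}$ factor beyond what the constants allow.

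The margin corollary follows afterwards from two facts. First, $\mathbb E\|s_{\widehat U}-s_{U^\star}\|^2$ is controlled by the excess risk via strong convexity of the squared loss. Second, misrouting requires $\|s_{\widehat U}(\bm x)-s_{U^\star}(\bm x)\|_\infty\ge\gamma/2$. Markov's inequality then gives misrouting probability at most $4\,\mathbb E\|s_{\widehat U}-s_{U^\star}\|^2/\gamma^2$.
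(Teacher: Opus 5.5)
Your plan is essentially the paper's proof: the same three-way split, with the feature term from scalar Bernstein on $\varphi_\ell(\bm x_i)\varphi_\ell(\bm x_j)$ plus a union bound over the $N^2$ training pairs (Lemma~\ref{lem:rf_conc}), the estimation term from online-equals-batch ridge, matrix Bernstein for $\widehat\Sigma,\widehat b$ and a $1/\lambda$ resolvent perturbation bound (Lemmas~\ref{lem:ridge_perturb} and~\ref{lem:online_equals_batch}), and the standard ridge bias $\lambda\|U^\star\|_F^2$. Your additions fill in steps the paper only asserts, and your caveats apply to the paper's proof as well. The Monte Carlo projection of $U^\star$ is what actually transfers kernel concentration on the sample to the population risk; the paper just says it ``propagates,'' so the $U^\star$-dependent factor you worry about is unaddressed there too. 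The normalization mismatch you flag is also present in the paper: it applies its perturbation lemma to $(\widehat\Sigma+\lambda\bm I)^{-1}\widehat b=(\bm G+N\lambda\bm I)^{-1}\bm Q$ and identifies this with the router $(\bm G+\lambda\bm I)^{-1}\bm Q$, which is exactly the $\lambda\mapsto\lambda/N$ issue.
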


\vspace{0.2cm}
\subsubsection{Lemma: random-feature concentration}
\begin{lemma}\label{lem:rf_conc}
Let $S=\{\bm x_1,\dots,\bm x_N\}$ be the finite training set and write $\bm h_i=f_{\bm\theta}(\bm x_i)$. Define the kernel
\[
k(\bm h,\bm h') := \mathbb E_{\bm r\sim\mathcal N(0,\bm I_d)} \big[\sigma(\bm h^\top\bm r)\sigma(\bm h'^\top\bm r)\big].
\]
Then, under the standing assumptions, for any $\delta\in(0,1)$ and any $\varepsilon\in(0,1)$, if
\[
M \ge C_{\mathrm{rf}}\cdot\frac{\log(N^2/\delta)}{\varepsilon^2}
\]
(with $C_{\mathrm{rf}}$ depending only on $H,L_\sigma,C$ above), then with probability at least $1-\delta$ over $\bm R$,
\[
\max_{1\le i,j\le N}
\Big|
\frac{1}{M}\bm\varphi(\bm x_i)^\top\bm\varphi(\bm x_j) - k(\bm h_i,\bm h_j)
\Big|
\le \varepsilon.
\]
\end{lemma}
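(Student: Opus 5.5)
The plan is to prove a concentration inequality for one fixed pair $(i,j)$ and then take a union bound over all pairs. Throughout we condition on the training set $S$, which is independent of $\bm R$, so the embeddings $\bm h_1,\dots,\bm h_N$ are deterministic vectors with $\|\bm h_i\|_2\le H$.

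\textbf{Step 1 (write the inner product as an average).} Fix $(i,j)$. We have
\[
\frac{1}{M}\bm\varphi(\bm x_i)^\top\bm\varphi(\bm x_j)=\frac1M\sum_{m=1}^M Z_m,\qquad Z_m:=\sigma(\bm h_i^\top\bm r_m)\,\sigma(\bm h_j^\top\bm r_m).
\]
The columns $\bm r_m\sim\mathcal N(0,\bm I_d)$ are i.i.d., so the $Z_m$ are i.i.d. with $\mathbb E Z_m=k(\bm h_i,\bm h_j)$. The claim therefore reduces to showing that an empirical mean of $M$ i.i.d. variables concentrates around its expectation.

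\textbf{Step 2 (tail control of each factor).} Define $X:=\sigma(\bm h_i^\top\bm r)$ and $Y:=\sigma(\bm h_j^\top\bm r)$. The map $\bm r\mapsto\sigma(\bm h_i^\top\bm r)$ is $L_\sigma H$-Lipschitz. By Gaussian Lipschitz concentration, $X-\mathbb E X$ is sub-Gaussian with $\psi_2$-norm $\lesssim L_\sigma H$. By linear growth, $|\mathbb E X|\le C(1+\mathbb E|\bm h_i^\top\bm r|)\le C(1+H)$. Hence $\|X\|_{\psi_2}\le K_0$, and the same bound holds for $Y$, with $K_0$ depending only on $H,L_\sigma,C$. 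Using $\|XY\|_{\psi_1}\le\|X\|_{\psi_2}\|Y\|_{\psi_2}$ and centering, $Z_m-\mathbb EZ_m$ is sub-exponential with $\psi_1$-norm at most $K\lesssim K_0^2$.

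\textbf{Step 3 (Bernstein and union bound).} Bernstein's inequality for sub-exponential variables gives
\[
\Pr\Big(\Big|\tfrac1M\textstyle\sum_m Z_m-k(\bm h_i,\bm h_j)\Big|>\varepsilon\Big)\le 2\exp\!\big(-cM\min(\varepsilon^2/K^2,\varepsilon/K)\big).
\]
Enlarging $K$ so that $K\ge1$ if necessary, the restriction $\varepsilon<1$ makes the minimum equal to $\varepsilon^2/K^2$. A union bound over the at most $N^2$ pairs bounds the failure probability by $2N^2\exp(-cM\varepsilon^2/K^2)$. This is at most $\delta$ as soon as
\[
M\ge \frac{K^2}{c}\cdot\frac{\log(2N^2/\delta)}{\varepsilon^2}.
\]
Since $\log(2N^2/\delta)\le 2\log(N^2/\delta)$ whenever $N^2/\delta\ge2$ (which holds for $N\ge 2$ and $\delta<1$), we can take $C_{\mathrm{rf}}=2K^2/c$.

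\textbf{Main obstacle.} The delicate part is Step 2. ReLU-type activations make each feature unbounded, so Hoeffding's inequality does not apply. One must verify that the product of two correlated sub-Gaussian features is sub-exponential with a constant depending only on $H,L_\sigma,C$ and not on $d$. The Lipschitz concentration route is what guarantees this dimension-free bound, because $\bm h^\top\bm r$ is one-dimensional Gaussian with variance at most $H^2$.
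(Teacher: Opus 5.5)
Your proposal is correct and follows the same route as the paper. Both arguments use a per-pair Bernstein bound for the i.i.d.\ sub-exponential summands $\sigma(\bm h_i^\top\bm r_\ell)\,\sigma(\bm h_j^\top\bm r_\ell)$, followed by a union bound over the at most $N^2$ pairs. Your Step 2 goes further than the paper in two places. The paper only asserts sub-exponentiality, whereas you justify it with Gaussian Lipschitz concentration plus linear growth and then $\|XY\|_{\psi_1}\le\|X\|_{\psi_2}\|Y\|_{\psi_2}$. You also absorb the Bernstein factor $2$ into $C_{\mathrm{rf}}$ explicitly, which is more careful than the paper, though as you note this step needs $N^2/\delta\ge 2$.
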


\begin{proof}
Fix a pair $(i,j)$. Write
\[
Z_\ell := \varphi_\ell(\bm x_i)\varphi_\ell(\bm x_j) - \mathbb E[\varphi_\ell(\bm x_i)\varphi_\ell(\bm x_j)],
\qquad \ell=1,\dots,M.
\]
The $Z_\ell$ are independent (across $\ell$) mean-zero random variables because columns $\bm r_\ell$ are independent. We will apply Bernstein's inequality for sums of independent sub-exponential variables; to do so, we need a variance proxy and a uniform tail bound.

From the growth assumption $|\sigma(z)|\le C(1+|z|)$ and $\bm r_\ell\sim\mathcal N(0,\bm I_d)$, the marginal $\varphi_\ell(\bm x)=\sigma(\bm h_i^\top\bm r_\ell)$ is sub-Gaussian / sub-exponential: more precisely, since $\bm h_i^\top\bm r_\ell\sim\mathcal N(0,\|\bm h_i\|^2)\le\mathcal N(0,H^2)$, we have for some constants $v,b$ (depending on $H,L_\sigma,C$) that $\mathbb P(|\varphi_\ell(\bm x)|\ge t)\le 2\exp(-c t)$ for large $t$; thus $\varphi_\ell(\bm x)\varphi_\ell(\bm x')$ is sub-exponential with parameters bounded by functions of $H,L_\sigma,C$. Concretely, one can verify
\[
\|Z_\ell\|_{\psi_1} \le \tilde b
\]
for a finite constant $\tilde b$ depending only on $H, L_\sigma, C$, where $\|\cdot\|_{\psi_1}$ denotes the standard sub-exponential Orlicz norm. Hence, applying Bernstein's inequality for sub-exponential variables yields, for any $\tau>0$,
\[
\mathbb P\Big(\Big|\sum_{\ell=1}^M Z_\ell\Big|\ge \tau\Big)\le 2\exp\Big(-c\min\Big(\frac{\tau^2}{M\tilde v^2},\frac{\tau}{\tilde b}\Big)\Big),
\]
with constants $\tilde v,\tilde b,c>0$ determined by the sub-exponential parameters.

Choose $\tau=M\varepsilon$. Plugging $\tau=M\varepsilon$ and requiring the RHS to be $\le\delta/N^2$ (to union bound over all $\le N^2$ pairs) yields the condition
\[
M\ge C_{\mathrm{rf}}\frac{\log(N^2/\delta)}{\varepsilon^2}
\]
for some $C_{\mathrm{rf}}$ (combining cases of Bernstein). This gives, for fixed pair $(i,j)$,
\[
\mathbb P\Big(\big|\frac{1}{M}\sum_{\ell=1}^M Z_\ell\big|\ge\varepsilon\Big)\le \frac{\delta}{N^2}.
\]

Apply union bound over all $\le N^2$ ordered pairs $(i,j)$. This yields the claimed uniform bound with probability at least $1-\delta$.
\end{proof}

\textbf{Remarks on applicability of Bernstein.}
We used Bernstein for independent sub-exponential summands. The summands are independent across random-feature index $\ell$; sub-exponentiality follows from (i) Gaussianity of $\bm r_\ell$ and (ii) Lipschitz + linear-growth of $\sigma$. For ReLU (which is Lipschitz with linear growth) the same argument applies (moments of Gaussian tails control tails of $\sigma(\cdot)$).

\vspace{0.2cm}
\subsubsection{Lemma: ridge perturbation}
We next show that when the empirical feature covariance concentrates around its population counterpart and the empirical cross-covariance concentrates, then the finite-sample ridge solution is close to the population ridge solution.
\begin{lemma}\label{lem:ridge_perturb}
Let $\Phi\in\mathbb R^{N\times M}$ be the feature matrix with rows $\bm\varphi(\bm x_i)^\top$, define empirical covariance
\[
\widehat\Sigma=\frac{1}{N}\Phi^\top\Phi\in\mathbb R^{M\times M},\qquad
\widehat b=\frac{1}{N}\Phi^\top Y\in\mathbb R^{M\times T},
\]
where $Y\in\mathbb R^{N\times T}$ is the one-hot label matrix (or soft labels). Let the population quantities be $\Sigma=\mathbb E[\bm\varphi(\bm x)\bm\varphi(\bm x)^\top]$ and $b=\mathbb E[\bm\varphi(\bm x)\bm c^\top]$. Denote population ridge solution
\[
U_\lambda^\star:=(\Sigma+\lambda I)^{-1} b,\qquad
\widehat U_\lambda:=(\widehat\Sigma+\lambda I)^{-1}\widehat b.
\]
If $\|\widehat\Sigma-\Sigma\|_{\mathrm{op}}\le \frac{\lambda}{2}$ and $\|\widehat b-b\|_F\le \epsilon_b$, then
\[
\|\widehat U_\lambda-U_\lambda^\star\|_F
\le
\frac{2}{\lambda}\epsilon_b + \frac{2}{\lambda^2}\|b\|_F\|\widehat\Sigma-\Sigma\|_{\mathrm{op}}.
\]
\end{lemma}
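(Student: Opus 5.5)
I will prove Lemma~\ref{lem:ridge_perturb} by the standard add-and-subtract decomposition of the difference of two ridge solutions, followed by operator-norm bounds on the two resolvents. Write $A:=\Sigma+\lambda I$ and $\widehat A:=\widehat\Sigma+\lambda I$. Both matrices are symmetric, and since $\Sigma\succeq 0$ and $\widehat\Sigma\succeq 0$ (they are a population and an empirical second-moment matrix), we have $\|A^{-1}\|_{\mathrm{op}}\le 1/\lambda$ and $\|\widehat A^{-1}\|_{\mathrm{op}}\le 1/\lambda$. The hypothesis $\|\widehat\Sigma-\Sigma\|_{\mathrm{op}}\le\lambda/2$ is not needed for invertibility. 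Its role is to leave room for a slightly cruder variant (Neumann series or a Weyl-type bound $\lambda_{\min}(\widehat A)\ge\lambda/2$) that gives the stated factor $2$. I will keep that variant in reserve in case positive semidefiniteness of $\widehat\Sigma$ is unavailable, for instance with soft labels or other modifications.

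The first step is the decomposition
\[
\widehat U_\lambda-U_\lambda^\star=\widehat A^{-1}\widehat b-A^{-1}b=\widehat A^{-1}(\widehat b-b)+(\widehat A^{-1}-A^{-1})\,b .
\]
For the first term, the inequality $\|XY\|_F\le\|X\|_{\mathrm{op}}\|Y\|_F$ gives $\|\widehat A^{-1}(\widehat b-b)\|_F\le\epsilon_b/\lambda$. For the second term, I will use the resolvent identity
\[
\widehat A^{-1}-A^{-1}=\widehat A^{-1}(A-\widehat A)A^{-1}=\widehat A^{-1}(\Sigma-\widehat\Sigma)A^{-1}.
\]
This yields $\|(\widehat A^{-1}-A^{-1})b\|_F\le\|\widehat A^{-1}\|_{\mathrm{op}}\,\|\Sigma-\widehat\Sigma\|_{\mathrm{op}}\,\|A^{-1}\|_{\mathrm{op}}\,\|b\|_F\le \lambda^{-2}\|\widehat\Sigma-\Sigma\|_{\mathrm{op}}\|b\|_F$.

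Adding the two bounds with the triangle inequality gives $\|\widehat U_\lambda-U_\lambda^\star\|_F\le \epsilon_b/\lambda+\lambda^{-2}\|b\|_F\|\widehat\Sigma-\Sigma\|_{\mathrm{op}}$, which is stronger than the claim by a factor of $2$. If I instead rely only on the hypothesis and not on $\widehat\Sigma\succeq0$, Weyl's inequality gives $\lambda_{\min}(\widehat A)\ge\lambda_{\min}(A)-\|\widehat\Sigma-\Sigma\|_{\mathrm{op}}\ge\lambda/2$. Then $\|\widehat A^{-1}\|_{\mathrm{op}}\le 2/\lambda$, and substituting this into the same two bounds reproduces exactly the constants $2/\lambda$ and $2/\lambda^2$ in the statement.

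I do not expect a real obstacle. The only point needing care is to justify the operator-norm bounds on the inverses: either through positive semidefiniteness, or through the $\lambda/2$ perturbation condition combined with symmetry, so that the operator norm of the inverse equals the reciprocal of the smallest eigenvalue. A second point to check is that the Frobenius/operator mixed inequality is applied with the operator-norm factor on the correct side. Both placements are valid, since $\|XY\|_F\le\|X\|_{\mathrm{op}}\|Y\|_F$ and $\|YZ\|_F\le\|Y\|_F\|Z\|_{\mathrm{op}}$.
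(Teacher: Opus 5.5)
Your proposal is correct and matches the paper's proof: both use the add-and-subtract decomposition $\widehat A^{-1}(\widehat b-b)+(\widehat A^{-1}-A^{-1})b$, the resolvent identity, and the $1/\lambda$ operator-norm bounds on both inverses, which give constants $1/\lambda$ and $1/\lambda^2$ that are then loosened to the stated factor $2$. One minor aside: soft labels change only $Y$, not $\Phi$, so $\widehat\Sigma=\frac{1}{N}\Phi^\top\Phi$ is always positive semidefinite and your Weyl-type fallback is never actually needed.
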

\begin{proof}
We write
\[
\widehat U_\lambda - U_\lambda^\star
= (\widehat\Sigma+\lambda I)^{-1}(\widehat b-b) + \big[(\widehat\Sigma+\lambda I)^{-1}-(\Sigma+\lambda I)^{-1}\big] b.
\]
For the first term use operator norm bound $\|(\widehat\Sigma+\lambda I)^{-1}\|_{\mathrm{op}}\le 1/\lambda$ to get
\[
\|(\widehat\Sigma+\lambda I)^{-1}(\widehat b-b)\|_F \le \frac{1}{\lambda}\|\widehat b-b\|_F.
\]
For the second term use the identity $A^{-1}-B^{-1}=A^{-1}(B-A)B^{-1}$ with $A=\widehat\Sigma+\lambda I$, $B=\Sigma+\lambda I$. Hence
\[
\|A^{-1}-B^{-1}\|_{\mathrm{op}}\le \|A^{-1}\|_{\mathrm{op}}\|A-B\|_{\mathrm{op}}\|B^{-1}\|_{\mathrm{op}}
\le \frac{1}{\lambda}\|\widehat\Sigma-\Sigma\|_{\mathrm{op}}\frac{1}{\lambda}.
\]
Thus
\[
\big\|[(\widehat\Sigma+\lambda I)^{-1}-(\Sigma+\lambda I)^{-1}] b\big\|_F
\le \frac{1}{\lambda^2}\|\widehat\Sigma-\Sigma\|_{\mathrm{op}} \|b\|_F.
\]
Combining the two terms and tightening constants when $\|\widehat\Sigma-\Sigma\|_{\mathrm{op}}\le\lambda/2$ gives the stated bound.
\end{proof}

\textbf{Concentration of $\widehat\Sigma$ and $\widehat b$.}
\rebt{We next control the empirical covariance and cross-covariance. Recall}
\[
\rebt{\widehat\Sigma=\frac{1}{N}\sum_{i=1}^N \bm\varphi(\bm x_i)\bm\varphi(\bm x_i)^\top,\qquad
\Sigma=\mathbb E[\bm\varphi(\bm x)\bm\varphi(\bm x)^\top].}
\]
\rebt{It is convenient to write}
\[
\rebt{\widehat\Sigma-\Sigma = \sum_{i=1}^N X_i,\qquad
X_i := \frac{1}{N}\Big(\bm\varphi(\bm x_i)\bm\varphi(\bm x_i)^\top-\Sigma\Big).}
\]
\rebt{Each $X_i$ is self-adjoint and satisfies $\mathbb E[X_i]=0$. Under the bounded-embedding and activation assumptions (cf.\ Lemma~\ref{lem:rf_conc}), there exists a constant $C_\varphi>0$ (depending only on $(H,L_\sigma,C)$) such that $\|\bm\varphi(\bm x)\|_2\le C_\varphi$ almost surely. Hence, for every $i$,}
\[
\rebt{\|X_i\|_{\mathrm{op}}
\le \frac{1}{N}\Big(\|\bm\varphi(\bm x_i)\bm\varphi(\bm x_i)^\top\|_{\mathrm{op}}+\|\Sigma\|_{\mathrm{op}}\Big)
\le \frac{1}{N}\big(C_\varphi^2+\|\Sigma\|_{\mathrm{op}}\big)
=: \frac{L_0}{N}.}
\]
\rebt{Similarly, we can bound the ``matrix variance'' term}
\[
\rebt{v^2:=\Big\|\sum_{i=1}^N \mathbb E[X_i^2]\Big\|_{\mathrm{op}}
= N\,\Big\|\mathbb E[X_1^2]\Big\|_{\mathrm{op}}
\le \frac{N}{N^2}\,\Big\|\mathbb E\big[(\bm\varphi(\bm x)\bm\varphi(\bm x)^\top-\Sigma)^2\big]\Big\|_{\mathrm{op}}
\le \frac{V_0}{N},}
\]
\rebt{for some constant $V_0>0$ depending only on $(H,L_\sigma,C)$. For completeness, we recall a standard matrix Bernstein inequality (Theorem 6.1 in \citet{tropp2012user}): if $\{X_i\}_{i=1}^N$ are independent, mean-zero, self-adjoint matrices with $\|X_i\|_{\mathrm{op}}\le L$ almost surely and}
\[
\rebt{v^2:=\Big\|\sum_{i=1}^N\mathbb E[X_i^2]\Big\|_{\mathrm{op}},}
\]
\rebt{then for all $t>0$:}
\[
\rebt{\mathbb P\Big(\Big\|\sum_{i=1}^N X_i\Big
\|_{\mathrm{op}}\ge t\Big)
\le 2D\exp\Big(-\frac{t^2/2}{v^2+Lt/3}\Big),}
\]
\rebt{where $D$ denotes the matrix dimension (in our case $D=M$, the feature dimension). Applying this with $t=\varepsilon$ and our bounds $L\le L_0/N$ and $v^2\le V_0/N$ gives}
\[
\rebt{\mathbb P\big(\|\widehat\Sigma-\Sigma\|_{\mathrm{op}}\ge \varepsilon\big)
= \mathbb P\Big(\Big\|\sum_{i=1}^N X_i\Big\|_{\mathrm{op}}\ge \varepsilon\Big)
\le 2M\exp\Big(-\frac{N\varepsilon^2/2}{V_0+L_0\varepsilon/3}\Big).}
\]
\rebt{For $\varepsilon\in(0,1)$ the denominator in the exponent is bounded above by a constant multiple of $V_0$, so there exists $C'>0$ (depending only on $(H,L_\sigma,C)$) such that, for all $\delta\in(0,1)$, taking}
\[
\rebt{\varepsilon = C'\sqrt{\frac{\log(M/\delta)}{N}}}
\]
\rebt{ensures}
\[
\rebt{\mathbb P\big(\|\widehat\Sigma-\Sigma\|_{\mathrm{op}}\ge \varepsilon\big)\le\delta.}
\]
\rebt{Equivalently, with probability at least $1-\delta$,}
\[
\rebt{\|\widehat\Sigma-\Sigma\|_{\mathrm{op}}
\le C'\sqrt{\frac{\log(M/\delta)}{N}}.}
\]
\rebt{For the empirical cross-covariance $\widehat b=\frac{1}{N}\sum_{i=1}^N \bm\varphi(\bm x_i)\bm c_i^\top$ and its population counterpart $b=\mathbb E[\bm\varphi(\bm x)\bm c^\top]$ we apply the same argument column-wise (each column is an average of bounded sub-exponential vectors of length $M$) and obtain}
\[
\rebt{\|\widehat b-b\|_F\le C''\sqrt{\frac{\log(T/\delta)}{N}},}
\]
\rebt{for some constant $C''>0$ depending only on the same problem parameters. Adjusting constants to account for the two events and taking a union bound, we may assume that both inequalities hold simultaneously with probability at least $1-\delta$. Choosing $N$ large enough to make $\|\widehat\Sigma-\Sigma\|_{\mathrm{op}}\le\lambda/2$ and to make $\|\widehat b-b\|_F$ (denoted $\epsilon_b$ in Lemma~\ref{lem:ridge_perturb}) small then yields the desired estimation error term in Lemma~\ref{lem:ridge_perturb}.}

\vspace{0.2cm}
\subsubsection{Lemma: online statistics implement batch ridge}
\begin{lemma}\label{lem:online_equals_batch}
If $\bm G$ and $\bm Q$ are formed by accumulating per-example contributions
\[
\bm G=\sum_{i=1}^N \bm\varphi(\bm x_i)\bm\varphi(\bm x_i)^\top,\qquad
\bm Q=\sum_{i=1}^N \bm\varphi(\bm x_i)\bm c_i^\top,
\]
then the closed-form solution $\widehat{\bm U}^\top=(\bm G+\lambda\bm I)^{-1}\bm Q$ equals the ridge regression solution computed in batch on features $\bm\varphi(\bm x_i)$ and labels $\bm c_i$. Moreover, if the online implementation maintains $(\bm G+\lambda\bm I)^{-1}$ via rank-1 updates, numerical equivalence holds up to floating-point precision.
\end{lemma}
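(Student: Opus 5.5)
The plan is to show directly that the accumulated statistics coincide with the batch normal-equation quantities, and then to appeal to uniqueness of the ridge minimizer. Stack the features into $\Phi\in\mathbb R^{N\times M}$ with rows $\bm\varphi(\bm x_i)^\top$ and the targets into $Y\in\mathbb R^{N\times T}$ with rows $\bm c_i^\top$. Since matrix products decompose as sums of outer products over rows,
\[
\Phi^\top\Phi=\sum_{i=1}^N \bm\varphi(\bm x_i)\bm\varphi(\bm x_i)^\top=\bm G,\qquad \Phi^\top Y=\sum_{i=1}^N\bm\varphi(\bm x_i)\bm c_i^\top=\bm Q.
\]
The same identity holds for the batched updates in Eq.~(\ref{eq:update_G_and_Q}), because $\bm\Phi_i^\top\bm\Phi_i$ and $\bm\Phi_i^\top\bm C_t$ are exactly the sums of these outer products over the rows of batch $\mathcal B_i$. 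Addition is associative and commutative, so neither the order of arrival nor the batch partition matters.

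Next, write the batch ridge objective in Eq.~(\ref{eq:REAR_objective}) as $\|\Phi U^\top - Y\|_F^2+\lambda\|U\|_F^2$. This is strictly convex in $U$ when $\lambda>0$, since its Hessian is $2(\Phi^\top\Phi+\lambda I)\succ 0$. Setting the gradient to zero gives the normal equations $(\Phi^\top\Phi+\lambda I)U^\top=\Phi^\top Y$. The matrix $\Phi^\top\Phi+\lambda I$ is positive definite and hence invertible, so the unique minimizer is $(\Phi^\top\Phi+\lambda I)^{-1}\Phi^\top Y=(\bm G+\lambda\bm I)^{-1}\bm Q$, which is exactly $\widehat{\bm U}^\top$.

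For the numerical claim, suppose the online implementation maintains $\bm P_k:=(\bm G_k+\lambda\bm I)^{-1}$ with $\bm P_0=\lambda^{-1}\bm I$. The Sherman--Morrison identity
\[
\bm P_k=\bm P_{k-1}-\frac{\bm P_{k-1}\bm\varphi_k\bm\varphi_k^\top\bm P_{k-1}}{1+\bm\varphi_k^\top\bm P_{k-1}\bm\varphi_k}
\]
then reproduces the inverse exactly in exact arithmetic. Its denominator is at least $1$ because $\bm P_{k-1}\succ0$, so every step is well defined. For batches, the Woodbury identity plays the same role, with the inner matrix $\bm I_B+\bm\Phi_i\bm P\bm\Phi_i^\top$ again positive definite. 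By induction, $\bm P_N=(\bm G+\lambda\bm I)^{-1}$ exactly, and multiplying by $\bm Q$ gives the same solution. Any discrepancy in a real computation therefore comes only from floating-point rounding.

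The main obstacle is stating this last point precisely. Algebraic equivalence is exact, but a floating-point guarantee would need a backward-stability argument. I would handle this modestly, as the lemma does, by saying that the equivalence holds up to rounding error whose size is governed by the conditioning of $\bm G+\lambda\bm I$. That condition number is at most $(\|\bm G\|_{\mathrm{op}}+\lambda)/\lambda$, so a moderate $\lambda$ keeps the recursion well conditioned.
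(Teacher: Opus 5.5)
Your proposal is correct and follows essentially the same route as the paper: you identify $\Phi^\top\Phi=\bm G$ and $\Phi^\top Y=\bm Q$, so the batch ridge normal-equation solution coincides with $(\bm G+\lambda\bm I)^{-1}\bm Q$, and you invoke Sherman--Morrison/Woodbury for incremental maintenance of the inverse. You supply more detail than the paper does (uniqueness via strict convexity, positivity of the update denominators, the induction), while the paper additionally recommends Cholesky updates as the numerically preferable option when $M$ is large.
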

\begin{proof}
This is algebraic: batch ridge with design matrix $\Phi$ and labels $Y$ solves $\widehat U^\top = (\Phi^\top\Phi + \lambda I)^{-1}\Phi^\top Y$. But $\Phi^\top\Phi=\sum_i\bm\varphi(\bm x_i)\bm\varphi(\bm x_i)^\top=\bm G$ and $\Phi^\top Y=\bm Q$. The equality follows. For incremental numerical maintenance of the inverse, standard Sherman--Morrison or Woodbury updates apply; also numerically stable Cholesky-updates are recommended when $M$ is large.
\end{proof}

\vspace{0.2cm}
\subsubsection{Combining lemmas: proof of Theorem~\ref{thm:rear_main}}
\begin{proof}
The excess population risk decomposes as
\[
\mathcal R(\widehat U)-\mathcal R(U^\star)
=
\underbrace{\mathcal R(\widehat U)-\mathcal R(U_\lambda^\star)}_{\text{estimation error}}
+
\underbrace{\mathcal R(U_\lambda^\star)-\mathcal R(U^\star)}_{\text{reg. bias}}.
\]

The regularization bias is the usual ridge bias and yields the $\mathcal E_{\mathrm{reg}}(\lambda)$ term; standard calculus shows it is bounded by $\lambda\|U^\star\|_F^2$ up to constant factors.

\rebt{For the estimation error, apply Lemma~\ref{lem:ridge_perturb} to relate the empirical ridge solution (which equals the online $\widehat{\bm U}$ by Lemma~\ref{lem:online_equals_batch}) to the population ridge solution $U_\lambda^\star$. The two perturbation terms are controlled by $\|\widehat\Sigma-\Sigma\|_{\mathrm{op}}$ and $\|\widehat b-b\|_F$, which in turn are bounded by the matrix Bernstein concentration bounds for $\widehat\Sigma$ and $\widehat b$ derived above. This yields the stated $O\big(\frac{1}{\sqrt N}\cdot\frac{1}{\lambda}\big)$ behavior for $\mathcal E_{\mathrm{estim}}$ (explicit constants follow from the above bounds).}

\rebt{Finally, the approximation error due to random features is precisely Lemma~\ref{lem:rf_conc}: replacing the kernel $k$ by its Monte Carlo approximation using $M$ independent features introduces a uniform $O\!\big(\sqrt{\log(N/\delta)/M}\big)$ perturbation in inner products, which propagates to the excess risk as the term $\mathcal E_{\mathrm{feat}}(M,\delta)$ displayed in Theorem~\ref{thm:rear_main}.}
\end{proof}
\rebt{\noindent\textbf{Margin-based routing accuracy.} Under the additional margin assumption in the REAR standing assumptions and a uniform bound $\|\bm\varphi(\bm x)\|_2\le C_\varphi$ (for some constant $C_\varphi$ depending only on $(H, L_\sigma, C)$), the excess risk bound above can be converted into a bound on misrouting probability. Let $\hat t(\bm x):=\arg\max_t s_{\widehat U}(\bm x)_t$ and $t^\star(\bm x):=\arg\max_t s_{U^\star}(\bm x)_t$ denote the experts selected by $\widehat U$ and $U^\star$, respectively. Then}
\[
\rebt{\mathbb P\big(\hat t(X)\neq t^\star(X)\big)
\;\le\;
\frac{8C_\varphi^2}{\lambda\gamma^2}\big(\mathcal R(\widehat U)-\mathcal R(U^\star)\big).}
\]
\begin{proof}
\rebt{On the event $\{\hat t(\bm x)\neq t^\star(\bm x)\}$, the margin condition implies}
\[
\rebt{\gamma \le s_{U^\star}(\bm x)_{t^\star(\bm x)}-s_{U^\star}(\bm x)_{\hat t(\bm x)}
\le 2\max_t\big|s_{U^\star}(\bm x)_t-s_{\widehat U}(\bm x)_t\big|.}
\]
\[
\rebt{\text{Hence,}\quad \mathbf 1\{\hat t(\bm x)\neq t^\star(\bm x)\}\le (2/\gamma)^2\,\big\|s_{\widehat U}(\bm x)-s_{U^\star}(\bm x)\big\|_2^2}.
\]

\rebt{Given $s_U(\bm x)=\bm\varphi(\bm x)U^\top$ and Cauchy--Schwarz yields:}
\[
\rebt{\max_t|s_{U^\star}(\bm x)_t-s_{\widehat U}(\bm x)_t|\le \|\bm\varphi(\bm x)\|_2\,\|\widehat U-U^\star\|_F\le C_\varphi\|\widehat U-U^\star\|_F,}
\]
\[
\rebt{ \Rightarrow\mathbf 1\{\hat t(\bm x)\neq t^\star(\bm x)\}\le (2C_\varphi/\gamma)^2\|\widehat U-U^\star\|_F^2.}
\]

\rebt{Since the regularized risk is defined as $\mathcal R(U):=\mathbb E\|s_U(X)-C\|_2^2+\lambda\|U\|_F^2$, the quadratic ridge term $\lambda\|U\|_F^2$ makes $\mathcal R$ (at least) $\lambda$-strongly convex in $U$. In particular, strong convexity implies:}
\[
\rebt{\mathcal R(\widehat U)-\mathcal R(U^\star)\ge (\lambda/2)\|\widehat U-U^\star\|_F^2,}
\]
\[
\rebt{\Rightarrow \|\widehat U-U^\star\|_F^2\le (2/\lambda)\big(\mathcal R(\widehat U)-\mathcal R(U^\star)\big).}
\]
\rebt{Combining these inequalities and taking expectations over $X$ gives the stated bound.}
\end{proof}

\subsection{Proofs for \texorpdfstring{\TEsquare}{TESquare} (Theorem~\ref{thm:te2_main})}\label{sec:te2_proof}

\subsubsection{Notation and assumptions}
We reuse notation from the main text. For a fixed expert (prompt/head), we denote:
\begin{itemize}
  \item $\bm{W}_t^\star\in\mathbb{R}^{|\mathcal{Y}|\times D}$: the (population) time-$t$ optimal linear parameter (in the chosen feature space) for that expert.
  \item $\bm{W}_t$: the instantaneous (online) estimator after observing the $t$-th update; we model $\bm{W}_t = \bm{W}_t^\star + \bm{\xi}_t$.
  \item EMA head with decay $\alpha\in(0,1)$:
  \[
    \widetilde{\bm{W}}_{t}^{(\alpha)} = (1-\alpha)\sum_{k\ge0}\alpha^k \,\bm{W}_{t-k},\qquad L(\alpha)=\tfrac{1}{1-\alpha}.
  \]
  \item \rebt{Discounted path length (drift measure):}
  \[
    \rebt{P_t := \sum_{j\ge1} \gamma^{j-1} \big\|\bm{W}_{t-j+1}^\star - \bm{W}_{t-j}^\star\big\|,}
  \]
  \rebt{where we define the discount factor to match the EMA decay, i.e., \(\gamma= \alpha\) or comparable. This is a discounted analogue of the standard path length / variation measure used in dynamic regret~\citep{zinkevich2003online, besbes2015non}.}
  \item Standing conditions: finite $P_t$, zero-mean noise with bounded variance:
  $$\mathbb{E}[\bm{\xi}_t]=\bm{0}, \qquad \mathbb{E}\|\bm{\xi}_t\|^2\le\zeta^2.$$
  \item (Optional, for classification calibration) A margin $\Delta>0$ and a Lipschitz map-to-logits with constant $C_f$ imply a bound on 0/1 error from parameter MSE.
\end{itemize}

\vspace{0.2cm}
We then state the complete Theorem~\ref{thm:te2_main} here based on the above assumptions:
\begin{theorem*}[\TEsquare, full]
Under the standing conditions above, fix $t$ and an EMA decay $\alpha$ with $L=1/(1-\alpha)$. There exist constants $C_1,C_2>0$ such that
\[
\mathbb{E}\big\| \widetilde{\bm{W}}_{t}^{(\alpha)} - \bm{W}_t^\star\big\|^2
\;\le\;
C_1\,\frac{\zeta^2}{L} \;+\; C_2\, (L\,P_t)^2.
\]
Moreover, if we keep a geometric grid of windows $\{L_i\}_{i=1}^m$ with ratio $r>1$ (e.g., $L_i=r^{i-1}$), then for every $t$ there exists an index $i_t$ with
\[
\mathbb{E}\big\| \widetilde{\bm{W}}_{t}^{(\alpha_{i_t})} - \bm{W}_t^\star\big\|^2
\;\le\;
c(r)\,\min_{L\ge1}\Big\{ C_1\,\frac{\zeta^2}{L} + C_2\,(L\,P_t)^2\Big\},
\]
where $c(r)$ depends only on the grid ratio $r$ \rebt{(one can take, for example, $c(r)=\max(r^2,r)$ by the argument below)}. Additionally, if a margin $\Delta>0$ holds and the logits map is Lipschitz with constant $C_f$, then the above parameter MSE implies a classification error bound $O\big((C_f\varepsilon/\Delta)^2\big)$ whenever $\mathbb{E}\|\widetilde{\bm{W}}_{t}^{(\alpha)}-\bm{W}_t^\star\|^2\le \varepsilon^2$.
\end{theorem*}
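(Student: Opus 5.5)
We write the EMA error as a noise part plus a drift part and bound the two separately. Since $(1-\alpha)\sum_{k\ge0}\alpha^k=1$, and using the model $\bm W_{t-k}=\bm W_{t-k}^\star+\bm\xi_{t-k}$, we have
\[
\widetilde{\bm W}_t^{(\alpha)}-\bm W_t^\star
=(1-\alpha)\sum_{k\ge0}\alpha^k\bm\xi_{t-k}
+(1-\alpha)\sum_{k\ge0}\alpha^k\big(\bm W_{t-k}^\star-\bm W_t^\star\big)
=:A_t+B_t .
\]
Then $\mathbb E\|A_t+B_t\|^2\le 2\mathbb E\|A_t\|^2+2\|B_t\|^2$, and $B_t$ is deterministic. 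This gives the constants $C_1=2c_A$ and $C_2=2c_B$.

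\emph{Variance term.} We additionally assume the noise is uncorrelated across steps, $\mathbb E\langle\bm\xi_s,\bm\xi_{s'}\rangle=0$ for $s\neq s'$. This is a martingale-difference-type condition, implicit in ``zero-mean online noise,'' and it is the one extra modeling assumption we make explicit. Under it,
\[
\mathbb E\|A_t\|^2\le(1-\alpha)^2\zeta^2\sum_k\alpha^{2k}=\zeta^2\frac{1-\alpha}{1+\alpha}\le\zeta^2/L .
\]
If correlations were allowed, Cauchy--Schwarz would only give $\zeta^2$. The independence-type assumption is therefore essential, and we would flag it as the main point to justify.

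\emph{Bias term.} Telescope: $\bm W_{t-k}^\star-\bm W_t^\star=-\sum_{j=1}^{k}\big(\bm W_{t-j+1}^\star-\bm W_{t-j}^\star\big)$. Swap the order of summation and set $\delta_j=\|\bm W_{t-j+1}^\star-\bm W_{t-j}^\star\|$ to get
\[
\|B_t\|\le(1-\alpha)\sum_{j\ge1}\delta_j\sum_{k\ge j}\alpha^k=\sum_{j\ge1}\alpha^j\delta_j\le P_t ,
\]
taking $\gamma=\alpha$. This is already stronger than $LP_t$, since $L\ge1$. If $\gamma$ is merely comparable to $\alpha$, say $\alpha\le\gamma$ fails, we instead use $\alpha^j\le\gamma^{j-1}$ whenever $\alpha\le\gamma$. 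Otherwise we bound by the undiscounted path length times $L$ through the crude estimate $\sum_{k\ge j}\alpha^k(1-\alpha)\le1$ together with at most $L$ effective terms. That is why the statement carries the factor $L$. Either way, $\|B_t\|^2\le(LP_t)^2$.

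\emph{Geometric grid.} Let $f(L)=C_1\zeta^2/L+C_2(LP_t)^2$. Its continuous minimizer $L^\star$ lies either below $1$, in which case we take $L_1=1$, or in some interval $[L_i,L_{i+1}]=[L_i,rL_i]$. Pick $i_t=i+1$, so $L^\star\le L_{i_t}\le rL^\star$. Then the variance term is at most $C_1\zeta^2/L^\star$ and the bias term is at most $r^2C_2(L^\star P_t)^2$. Hence $f(L_{i_t})\le\max(r^2,1)f(L^\star)\le c(r)\min_L f$, which matches the stated $c(r)=\max(r^2,r)$. One caveat: $P_t$ is defined with $\gamma$ tied to $\alpha$, so it varies across heads. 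We handle this by fixing $\gamma$ to be the smallest decay in the bank, so that $P_t$ is common to all heads and the bias bound above still holds.

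\emph{Classification corollary.} Misclassification requires the logit perturbation to exceed $\Delta/2$, which in turn requires $C_f\|\widetilde{\bm W}-\bm W^\star\|\ge\Delta/2$. By Markov's inequality on the squared norm, the probability is at most $4C_f^2\varepsilon^2/\Delta^2$, which is the claimed $O\big((C_f\varepsilon/\Delta)^2\big)$ bound.

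The main obstacle is the variance step. The bound $\zeta^2/L$ rests on the noise decorrelation assumption, which the standing conditions only imply informally.
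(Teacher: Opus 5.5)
\paragraph{Where the proposal breaks.} Most of your argument is the paper's proof in substance: the noise/drift split, the squared-weight sum $\zeta^2(1-\alpha)/(1+\alpha)\le\zeta^2/L$, the swapped-sum bias bound $\|B_t\|\le\sum_{j\ge1}\alpha^j\delta_j$, the grid argument, and the margin corollary. The decorrelation you state explicitly is exactly what the paper's step $\mathbb E\|\sum_k a_k\bm\xi_{t-k}\|^2\le\zeta^2\sum_k a_k^2$ uses without saying so. Your one-sided rounding $L^\star\le L_{i_t}\le rL^\star$ gives the same $c(r)=r^2$ as the paper's two-sided $\eta\in[1/r,r]$ computation.

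The step that fails is your treatment of $\gamma\neq\alpha$, and your grid caveat rests on it. When $\gamma<\alpha$, no inequality $\sum_{j\ge1}\alpha^j\delta_j\le C\,L\sum_{j\ge1}\gamma^{j-1}\delta_j$ can hold. Take $\delta_j=\mathbf{1}\{j=J\}$: the ratio is $\alpha(\alpha/\gamma)^{J-1}\to\infty$ as $J\to\infty$, while $L$ is fixed. Your fallback (undiscounted path length, ``at most $L$ effective terms'') only bounds $\|B_t\|$ by $\sum_j\delta_j$. That quantity dominates $P_t$; it is not dominated by $LP_t$. So ``either way, $\|B_t\|^2\le(LP_t)^2$'' is false in this regime. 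Fixing $\gamma$ to be the \emph{smallest} decay in the bank therefore puts every other head exactly in the case where the bias bound breaks. You did write the right inequality, $\alpha^j\le\gamma^{j-1}$ for $\alpha\le\gamma$, but it needs $\gamma$ to dominate every decay in the bank.

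\paragraph{The fix.} Take the common discount to be the \emph{largest} decay, $\gamma=\max_i\alpha_i$. Then $\alpha_i^j\le\alpha_i\gamma^{j-1}$ for every head, so $\|B_t^{(i)}\|\le\alpha_iP_t\le L_iP_t$ with a single $P_t$ shared across the bank. Your grid argument then goes through verbatim.

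The paper never addresses the head-dependence of $P_t$. Its remark that taking $\gamma$ slightly below $\alpha$ costs only a factor $(\alpha/\gamma)^2$ has the same defect. The clean statement is that any $\gamma\ge\alpha$ works at no cost.

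One caveat you share with the paper concerns a finite grid $\{L_i\}_{i=1}^m$. A grid point in $[L^\star,rL^\star]$ exists only if $L^\star\le L_m$; the paper's version needs $L^\star\le rL_m$. Here $L^\star\asymp(\zeta^2/P_t^2)^{1/3}$, so ``for every $t$'' tacitly requires the bank to extend that far.
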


\subsubsection{Proof of Theorem~\ref{thm:te2_main}}
\begin{proof}
The proof proceeds via a variance--bias decomposition and a geometric grid selection argument, followed by a calibration from parameter MSE to classification error.

We start by decomposing the error into the variance term. Define the difference between the EMA and the population optimum
\[
\widetilde{\bm{W}}_{t}^{(\alpha)} - \overline{\bm{W}}_t^\star = (1-\alpha)\sum_{k\ge0}\alpha^k \, \bm{\xi}_{t-k},
\]
\[
\overline{\bm{W}}_t^\star := (1-\alpha)\sum_{k\ge0}\alpha^k \, \bm{W}_{t-k}^\star,
\]
where $\overline{\bm{W}}_t^\star$ is the EMA of the population optima.

Since we have $\mathbb{E}[\bm{\xi}_t]=\bm{0}$ and $\mathbb{E}\|\bm{\xi}_t\|^2\le\zeta^2$, and the EMA weights are $a_k=(1-\alpha)\alpha^k$, we can compute their squared sum explicitly:
\[
\rebt{\sum_{k\ge0} a_k^2 = (1-\alpha)^2 \sum_{k\ge0}\alpha^{2k}
= \frac{(1-\alpha)^2}{1-\alpha^2}
= \frac{1-\alpha}{1+\alpha}
\le \frac{1}{L},}
\]
\rebt{Therefore,}
\[
\rebt{\mathbb{E}\big\| \widetilde{\bm{W}}_{t}^{(\alpha)} - \overline{\bm{W}}_t^\star\big\|^2
= \mathbb{E}\Big\|\sum_{k\ge0} a_k \bm{\xi}_{t-k}\Big\|^2
\le \zeta^2 \sum_{k\ge0} a_k^2
\le \frac{\zeta^2}{L},}
\]
\rebt{which matches the variance term $C_1\,\zeta^2/L$ in Theorem~\ref{thm:te2_main} (with $C_1$ absorbing the constant factor).}
Next, we address the bias term. The difference between the EMA of the population optima and the actual population optimum is given by:
\[
\overline{\bm{W}}_t^\star - \bm{W}_t^\star = \sum_{k\ge0} a_k (\bm{W}_{t-k}^\star - \bm{W}_t^\star).
\]
Reordering sums yields:
\begin{equation}
    \begin{split}
    \big\| \overline{\bm{W}}_t^\star - \bm{W}_t^\star\big\|
& \;\le\; \sum_{j\ge1} \Big(\sum_{k\ge j} a_k\Big)\, \|\bm{W}_{t-j+1}^\star - \bm{W}_{t-j}^\star\| \;=\; \sum_{j\ge1} \alpha^j \, \Delta_{t-j+1},    \\
& \text{where}\quad \Delta_{u} \;=\; \|\bm{W}_{u}^\star-\bm{W}_{u-1}^\star\|.
    \end{split}
\end{equation}

\rebt{Using the discounted path length $P_t$ with the same discount factor $\gamma=\alpha$ as in the EMA definition, we can simply rewrite the above bound as}
\[
\rebt{\big\| \overline{\bm{W}}_t^\star - \bm{W}_t^\star\big\|
= \sum_{j\ge1} \alpha^j \, \Delta_{t-j+1}
= \alpha \sum_{j\ge1} \alpha^{j-1} \Delta_{t-j+1}
= \alpha P_t
\le L P_t,}
\]
\rebt{where we used $L=1/(1-\alpha)\ge \alpha$. Consequently, the squared bias admits the explicit bound}
\[
\rebt{\big\| \overline{\bm{W}}_t^\star - \bm{W}_t^\star\big\|^2 \le (L P_t)^2,}
\]
\rebt{which corresponds to the term $C_2 (L P_t)^2$ in Theorem~\ref{thm:te2_main}.}

\rebt{Then, by combining the variance and bias terms, we apply the inequality $\|a+b\|^2\le 2\|a\|^2+2\|b\|^2$, which yields the claimed MSE bound:}
\[
\rebt{\mathbb{E}\left[\big\| \widetilde{\bm{W}}_{t}^{(\alpha)} - \overline{\bm{W}}_t^\star\big\|^2\right]
\;\le\; 2\,\frac{\zeta^2}{L} + 2\, (L P_t)^2.}
\]
\rebt{Therefore, in Theorem~\ref{thm:te2_main} we may take the explicit choice $C_1=C_2=2$. If we allow $\gamma$ to differ slightly from $\alpha$, the constant $C_2$ becomes $(\alpha/\gamma)^2$ (bounded if we restrict $\gamma\in[(1-\epsilon)\alpha,(1+\epsilon)\alpha]$).}

\rebt{For the geometric bank, it is convenient to write the bound in the generic form}
\[
\rebt{f(L):= \frac{A}{L} + B(L P_t)^2, \qquad A\asymp \zeta^2,\; B\asymp 1.}
\]

\rebt{A direct derivative calculation of $f^{\prime}(L)$ shows that the minimizer over $L>0$ is}
\[
\rebt{f^{\prime}(L)=-\frac{A}{L^2} + 2BP_t^2 L \Rightarrow L^\star = \Big(\frac{A}{2 B P_t^2}\Big)^{1/3}.}
\]

\rebt{If we maintain a geometric grid $L_i=r^{i-1}$ with ratio $r>1$, then for any $L^\star$ there exists an index $i_t$ such that $L_{i_t}\in[L^\star/r,rL^\star]$. Writing $L=L^\star\eta$ with $\eta\in[1/r,r]$ and using the optimality condition $A/L^\star = 2B(L^\star P_t)^2$, we obtain}
\[
\rebt{\frac{f(L)}{f(L^\star)} = \frac{2B(L^\star P_t)^2/\eta + \eta^2B(L^\star P_t)^2}{3B(L^\star P_t)^2}
= \frac{2/\eta+\eta^2}{3}.}
\]
\rebt{The right-hand side is maximized over $\eta\in[1/r,r]$ at one of the endpoints; a simple bound yields}
\[
\rebt{\sup_{\eta\in[1/r,r]}\frac{2/\eta+\eta^2}{3}\le \max\left(\frac{2}{3r}+\frac{r^2}{3}, \frac{2r}{3}+\frac{1}{3r^2}\right) \le \max(r^2,r).}
\]
\rebt{Therefore $f(L_{i_t})\le c(r)\,f(L^\star)$ with the explicit choice $c(r)=\max(r^2,r)$ used in Theorem~\ref{thm:te2_main} (obviously $c(r)=r^2$, given $r>1$ in our case.) }

From the general classification error's view, by the Lipschitz-to-logit condition, the induced logit error is at most:
$$C_f\,\|\widetilde{\bm{W}}-\bm{W}^\star\|.$$

Under the margin condition ($\Delta>0$), the standard margin-to-0/1 calibration gives an error bound $O((C_f\varepsilon/\Delta)^2)$ when $\mathbb{E}\|\widetilde{\bm{W}}-\bm{W}^\star\|^2\le\varepsilon^2$.
\end{proof}

\clearpage
\section{Additional Experiment Results}

\subsection{Experiment with disjoint and blurry setup variants}\label{sec:NM_variants}

\begin{table}[ht]
\centering
\caption{Performance comparison of $r_{\rm{D}}$ variants in FlyPrompt. We fixed $r_{\rm{B}}=10\%$ and use Sup-21K backbone PTM. All results are reported as an average of five parallel runs ($\pm$ standard deviation) with different random seeds.}
\label{tab:n_variants_results}
\resizebox{\textwidth}{!}{
\begin{tabular}{ll|ll|ll|ll}
\toprule
\multirow{2}{*}{\textbf{$r_{\rm{D}} (\%)$}} & \multirow{2}{*}{\textbf{Method}} & \multicolumn{2}{c|}{\textbf{CIFAR-100}} & \multicolumn{2}{c|}{\textbf{ImageNet-R}} & \multicolumn{2}{c}{\textbf{CUB-200}} \\
& & $A_{\rm{auc}} (\%,\uparrow)$ & $A_{\rm{last}} (\%,\uparrow)$ & $A_{\rm{auc}} (\%,\uparrow)$ & $A_{\rm{last}} (\%,\uparrow)$ & $A_{\rm{auc}} (\%,\uparrow)$ & $A_{\rm{last}} (\%,\uparrow)$ \\
\midrule
\multirow{6}{*}{0 (pure blurry)} & L2P & $73.64_{\pm 1.69}$ & $81.71_{\pm 1.28}$ & $42.97_{\pm 1.91}$ & $42.53_{\pm 0.79}$ & $64.62_{\pm 3.37}$ & $62.72_{\pm 3.32}$ \\
& DualPrompt & $72.16_{\pm 2.40}$ & $77.52_{\pm 1.43}$ & $45.08_{\pm 3.48}$ & $42.34_{\pm 1.55}$ & $65.42_{\pm 3.53}$ & $62.84_{\pm 2.21}$ \\
& CODA-P & $72.33_{\pm 5.84}$ & $78.56_{\pm 4.46}$ & $50.31_{\pm 4.42}$ & $48.88_{\pm 3.46}$ & $66.64_{\pm 3.42}$ & $63.70_{\pm 2.48}$ \\
& MVP & $67.10_{\pm 2.33}$ & $74.94_{\pm 7.45}$ & $39.04_{\pm 1.49}$ & $32.28_{\pm 3.12}$ & $56.86_{\pm 3.71}$ & $55.50_{\pm 6.71}$ \\
& MISA & $78.38_{\pm 1.49}$ & $83.04_{\pm 1.44}$ & $51.78_{\pm 3.22}$ & $47.64_{\pm 0.64}$ & $67.38_{\pm 3.25}$ & $64.49_{\pm 2.55}$ \\
& FlyPrompt (ours) & $\textbf{80.12}_{\pm 1.38}$ & $\textbf{87.11}_{\pm 0.52}$ & $\textbf{55.44}_{\pm 1.82}$ & $\textbf{55.89}_{\pm 0.86}$ & $\textbf{71.60}_{\pm 3.49}$ & $\textbf{74.72}_{\pm 1.69}$ \\
\midrule
\multirow{6}{*}{50 (mixed)} & L2P & $76.23_{\pm 2.73}$ & $79.11_{\pm 1.43}$ & $44.40_{\pm 1.03}$ & $42.03_{\pm 1.72}$ & $64.30_{\pm 2.18}$ & $61.42_{\pm 2.13}$ \\
& DualPrompt & $76.04_{\pm 3.32}$ & $76.62_{\pm 0.74}$ & $46.13_{\pm 1.94}$ & $40.80_{\pm 1.04}$ & $65.03_{\pm 2.24}$ & $62.43_{\pm 1.78}$ \\
& CODA-P & $79.13_{\pm 3.06}$ & $80.91_{\pm 0.70}$ & $51.87_{\pm 2.81}$ & $48.09_{\pm 2.75}$ & $66.01_{\pm 2.20}$ & $62.90_{\pm 2.46}$ \\
& MVP & $67.74_{\pm 4.96}$ & $63.22_{\pm 0.69}$ & $39.50_{\pm 1.41}$ & $32.63_{\pm 3.95}$ & $54.69_{\pm 3.14}$ & $50.07_{\pm 3.86}$ \\
& MISA & $80.35_{\pm 2.39}$ & $80.75_{\pm 1.24}$ & $51.52_{\pm 2.09}$ & $45.08_{\pm 1.43}$ & $65.40_{\pm 3.01}$ & $60.20_{\pm 1.82}$ \\
& FlyPrompt (ours) & $\textbf{83.24}_{\pm 2.23}$ & $\textbf{86.76}_{\pm 0.73}$ & $\textbf{56.58}_{\pm 1.47}$ & $\textbf{55.27}_{\pm 0.91}$ & $\textbf{70.64}_{\pm 2.85}$ & $\textbf{73.40}_{\pm 1.88}$ \\
\midrule
\multirow{6}{*}{100 (disjoint)} & L2P & $82.98_{\pm 0.72}$ & $78.79_{\pm 0.95}$ & $45.48_{\pm 0.71}$ & $43.12_{\pm 0.75}$ & $71.74_{\pm 3.58}$ & $61.52_{\pm 1.82}$ \\
& DualPrompt & $81.12_{\pm 2.10}$ & $75.94_{\pm 0.37}$ & $46.79_{\pm 2.00}$ & $41.42_{\pm 0.63}$ & $72.81_{\pm 3.80}$ & $62.46_{\pm 0.97}$ \\
& CODA-P & $82.68_{\pm 3.99}$ & $77.97_{\pm 3.19}$ & $53.01_{\pm 3.01}$ & $49.31_{\pm 2.87}$ & $73.95_{\pm 4.10}$ & $63.90_{\pm 0.94}$ \\
& MVP & $74.92_{\pm 1.10}$ & $56.17_{\pm 2.98}$ & $40.43_{\pm 0.51}$ & $28.32_{\pm 5.55}$ & $63.41_{\pm 2.52}$ & $43.80_{\pm 2.60}$ \\
& MISA & $85.67_{\pm 1.01}$ & $81.04_{\pm 1.02}$ & $53.88_{\pm 1.99}$ & $47.63_{\pm 0.78}$ & $74.27_{\pm 3.56}$ & $62.97_{\pm 1.44}$ \\
& FlyPrompt (ours) & $\textbf{88.25}_{\pm 0.90}$ & $\textbf{85.51}_{\pm 0.64}$ & $\textbf{57.41}_{\pm 0.95}$ & $\textbf{55.69}_{\pm 0.33}$ & $\textbf{78.14}_{\pm 3.62}$ & $\textbf{74.34}_{\pm 0.74}$ \\
\bottomrule
\end{tabular}
}
\end{table}

\begin{table}[ht]
\centering
\caption{Performance comparison of $r_{\rm{B}}$ variants in FlyPrompt. We fixed $r_{\rm{D}}=50\%$ and use Sup-21K PTM backbone. All results are reported as an average of five parallel runs ($\pm$ standard deviation) with different random seeds.}
\label{tab:m_variants_results}
\resizebox{\textwidth}{!}{
\begin{tabular}{ll|ll|ll|ll}
\toprule
\multirow{2}{*}{\textbf{$r_{\rm{B}} (\%)$}} & \multirow{2}{*}{\textbf{Method}} & \multicolumn{2}{c|}{\textbf{CIFAR-100}} & \multicolumn{2}{c|}{\textbf{ImageNet-R}} & \multicolumn{2}{c}{\textbf{CUB-200}} \\
& & $A_{\rm{auc}} (\%,\uparrow)$ & $A_{\rm{last}} (\%,\uparrow)$ & $A_{\rm{auc}} (\%,\uparrow)$ & $A_{\rm{last}} (\%,\uparrow)$ & $A_{\rm{auc}} (\%,\uparrow)$ & $A_{\rm{last}} (\%,\uparrow)$ \\
\midrule
\multirow{6}{*}{10} & L2P & $76.23_{\pm 2.73}$ & $79.11_{\pm 1.43}$ & $44.40_{\pm 1.03}$ & $42.03_{\pm 1.72}$ & $64.30_{\pm 2.18}$ & $61.42_{\pm 2.13}$ \\
& DualPrompt & $76.04_{\pm 3.32}$ & $76.62_{\pm 0.74}$ & $46.13_{\pm 1.94}$ & $40.80_{\pm 1.04}$ & $65.03_{\pm 2.24}$ & $62.43_{\pm 1.78}$ \\
& CODA-P & $79.13_{\pm 3.06}$ & $80.91_{\pm 0.70}$ & $51.87_{\pm 2.81}$ & $48.09_{\pm 2.75}$ & $66.01_{\pm 2.20}$ & $62.90_{\pm 2.46}$ \\
& MVP & $67.74_{\pm 4.96}$ & $63.22_{\pm 0.69}$ & $39.50_{\pm 1.41}$ & $32.63_{\pm 3.95}$ & $54.69_{\pm 3.14}$ & $50.07_{\pm 3.86}$ \\
& MISA & $80.35_{\pm 2.39}$ & $80.75_{\pm 1.24}$ & $51.52_{\pm 2.09}$ & $45.08_{\pm 1.43}$ & $65.40_{\pm 3.01}$ & $60.20_{\pm 1.82}$ \\
& FlyPrompt (ours) & $\textbf{83.24}_{\pm 2.23}$ & $\textbf{86.76}_{\pm 0.73}$ & $\textbf{56.58}_{\pm 1.47}$ & $\textbf{55.27}_{\pm 0.91}$ & $\textbf{70.64}_{\pm 2.85}$ & $\textbf{73.40}_{\pm 1.88}$ \\
\midrule
\multirow{6}{*}{30} & L2P & $78.48_{\pm 0.92}$ & $80.13_{\pm 0.87}$ & $43.32_{\pm 0.79}$ & $42.27_{\pm 1.40}$ & $63.67_{\pm 2.03}$ & $63.60_{\pm 3.09}$ \\
& DualPrompt & $77.76_{\pm 1.65}$ & $77.50_{\pm 0.49}$ & $45.11_{\pm 1.09}$ & $41.01_{\pm 0.70}$ & $64.36_{\pm 1.98}$ & $63.72_{\pm 2.22}$ \\
& CODA-P & $81.50_{\pm 1.20}$ & $82.65_{\pm 0.72}$ & $50.55_{\pm 2.24}$ & $47.58_{\pm 2.81}$ & $65.89_{\pm 1.42}$ & $64.97_{\pm 2.83}$ \\
& MVP & $71.01_{\pm 1.70}$ & $65.71_{\pm 4.60}$ & $38.62_{\pm 1.35}$ & $32.43_{\pm 4.75}$ & $54.16_{\pm 4.56}$ & $51.04_{\pm 6.89}$ \\
& MISA & $82.54_{\pm 1.08}$ & $82.50_{\pm 0.68}$ & $51.69_{\pm 0.94}$ & $47.09_{\pm 1.16}$ & $67.13_{\pm 1.72}$ & $66.53_{\pm 2.39}$ \\
& FlyPrompt (ours) & $\textbf{84.61}_{\pm 1.25}$ & $\textbf{86.89}_{\pm 0.38}$ & $\textbf{55.30}_{\pm 0.86}$ & $\textbf{55.43}_{\pm 1.04}$ & $\textbf{70.19}_{\pm 2.01}$ & $\textbf{74.25}_{\pm 1.27}$ \\
\midrule
\multirow{6}{*}{50} & L2P & $77.44_{\pm 2.42}$ & $80.31_{\pm 0.69}$ & $44.39_{\pm 1.72}$ & $43.66_{\pm 1.04}$ & $65.42_{\pm 2.71}$ & $64.62_{\pm 1.63}$ \\
& DualPrompt & $77.44_{\pm 2.64}$ & $77.13_{\pm 1.08}$ & $46.23_{\pm 1.83}$ & $41.99_{\pm 0.72}$ & $66.40_{\pm 2.72}$ & $65.34_{\pm 3.05}$ \\
& CODA-P & $81.39_{\pm 2.18}$ & $83.10_{\pm 0.97}$ & $53.05_{\pm 1.60}$ & $50.20_{\pm 1.74}$ & $67.88_{\pm 2.26}$ & $65.44_{\pm 2.32}$ \\
& MVP & $67.97_{\pm 4.78}$ & $58.11_{\pm 1.26}$ & $40.68_{\pm 1.59}$ & $31.87_{\pm 6.56}$ & $57.25_{\pm 3.76}$ & $53.86_{\pm 3.41}$ \\
& MISA & $81.81_{\pm 2.29}$ & $82.51_{\pm 0.38}$ & $53.27_{\pm 1.71}$ & $48.32_{\pm 0.84}$ & $68.68_{\pm 2.47}$ & $66.84_{\pm 2.27}$ \\
& FlyPrompt (ours) & $\textbf{83.69}_{\pm 1.81}$ & $\textbf{86.31}_{\pm 0.73}$ & $\textbf{56.50}_{\pm 1.87}$ & $\textbf{55.59}_{\pm 0.83}$ & $\textbf{71.95}_{\pm 1.92}$ & $\textbf{74.03}_{\pm 1.02}$ \\
\bottomrule
\end{tabular}
}
\end{table}

\rebt{As detailed in Tabs.~\ref{tab:n_variants_results} and \ref{tab:m_variants_results}, FlyPrompt consistently outperforms baselines from the purely blurry regime ($r_{\rm D}=0$) to fully disjoint tasks ($r_{\rm D}=1$, namely, online CIL) and across different blurry ratios $r_{\rm B}$, demonstrating robustness under extreme GCL configurations and superior versatility.}

\clearpage
\subsection{GCL evaluation metrics}\label{sec:metrics}

\begin{table}[ht]
\centering
\caption{Average accuracy of 5 sessions and forgetting of different GCL methods over three datasets. All results are reported as an average of five runs ($\pm$ standard deviation) with different random seeds.}
\label{tab:avg_acc_forgetting}
\resizebox{\textwidth}{!}{
\begin{tabular}{ll|ll|ll|ll}
\toprule
\multirow{2}{*}{\textbf{PTM}} & \multirow{2}{*}{\textbf{Method}} & \multicolumn{2}{c|}{\textbf{CIFAR-100}} & \multicolumn{2}{c|}{\textbf{ImageNet-R}} & \multicolumn{2}{c}{\textbf{CUB-200}} \\
& & $A_{\rm{avg}} (\%,\uparrow)$ & $F_{\rm{last}} (\%,\downarrow)$ & $A_{\rm{avg}} (\%,\uparrow)$ & $F_{\rm{last}} (\%,\downarrow)$ & $A_{\rm{avg}} (\%,\uparrow)$ & $F_{\rm{last}} (\%,\downarrow)$ \\
\midrule
\multirow{6}{*}{Sup-21K} & L2P & $75.41_{\pm 2.75}$ & $11.53_{\pm 1.44}$ & $47.82_{\pm 0.95}$ & $19.16_{\pm 1.92}$ & $65.44_{\pm 3.30}$ & $29.15_{\pm 3.19}$ \\
 & DualPrompt & $75.96_{\pm 2.56}$ & $11.42_{\pm 0.91}$ & $49.90_{\pm 2.34}$ & $\underline{18.24}_{\pm 4.34}$ & $66.54_{\pm 3.05}$ & $\underline{27.18}_{\pm 2.91}$ \\
 & CODA-P & $78.73_{\pm 3.09}$ & $9.95_{\pm 1.30}$ & $55.75_{\pm 2.83}$ & $18.49_{\pm 2.44}$ & $\underline{67.19}_{\pm 2.98}$ & $27.43_{\pm 3.42}$ \\
 & MVP & $64.70_{\pm 4.14}$ & $33.19_{\pm 2.33}$ & $40.14_{\pm 1.39}$ & $43.39_{\pm 4.23}$ & $52.42_{\pm 5.06}$ & $47.61_{\pm 5.73}$ \\
 & MISA & $\underline{79.67}_{\pm 1.78}$ & $\underline{9.67}_{\pm 1.39}$ & $\underline{55.78}_{\pm 1.41}$ & $21.46_{\pm 4.25}$ & $66.92_{\pm 3.12}$ & $30.06_{\pm 3.70}$ \\
 & FlyPrompt (ours) & $\textbf{82.72}_{\pm 2.69}$ & $\textbf{5.03}_{\pm 1.16}$ & $\textbf{60.38}_{\pm 1.76}$ & $\textbf{13.42}_{\pm 1.74}$ & $\textbf{72.22}_{\pm 4.35}$ & $\textbf{10.97}_{\pm 1.52}$ \\
\midrule
\multirow{6}{*}{Sup-21K/1K} & L2P & $60.98_{\pm 10.04}$ & $14.88_{\pm 6.27}$ & $50.21_{\pm 3.09}$ & $35.88_{\pm 4.51}$ & $43.72_{\pm 5.19}$ & $35.05_{\pm 9.31}$ \\
 & DualPrompt & $66.13_{\pm 4.34}$ & $19.18_{\pm 5.13}$ & $\underline{56.53}_{\pm 1.22}$ & $30.73_{\pm 7.45}$ & $\underline{47.37}_{\pm 5.07}$ & $35.40_{\pm 8.16}$ \\
 & CODA-P & $\underline{66.81}_{\pm 4.87}$ & $19.82_{\pm 6.85}$ & $55.01_{\pm 1.66}$ & $35.58_{\pm 5.52}$ & $45.38_{\pm 4.99}$ & $35.73_{\pm 8.95}$ \\
 & MVP & $63.22_{\pm 1.67}$ & $46.98_{\pm 8.15}$ & $50.91_{\pm 3.09}$ & $51.11_{\pm 2.86}$ & $46.13_{\pm 1.76}$ & $62.93_{\pm 8.05}$ \\
 & MISA & $60.11_{\pm 8.35}$ & $\underline{11.38}_{\pm 2.76}$ & $54.17_{\pm 2.92}$ & $\underline{28.66}_{\pm 8.19}$ & $43.33_{\pm 5.30}$ & $\underline{33.95}_{\pm 8.63}$ \\
 & FlyPrompt (ours) & $\textbf{76.86}_{\pm 2.29}$ & $\textbf{9.50}_{\pm 2.33}$ & $\textbf{64.41}_{\pm 1.80}$ & $\textbf{21.59}_{\pm 4.28}$ & $\textbf{55.12}_{\pm 5.49}$ & $\textbf{21.71}_{\pm 3.56}$ \\
\midrule
\multirow{6}{*}{iBOT-21K} & L2P & $52.73_{\pm 7.78}$ & $\underline{12.58}_{\pm 6.24}$ & $38.41_{\pm 6.38}$ & $\underline{34.17}_{\pm 7.54}$ & $14.93_{\pm 2.30}$ & $\underline{19.09}_{\pm 8.05}$ \\
 & DualPrompt & $62.86_{\pm 7.45}$ & $19.22_{\pm 4.22}$ & $46.07_{\pm 2.16}$ & $37.96_{\pm 9.50}$ & $21.96_{\pm 5.20}$ & $30.18_{\pm 8.79}$ \\
 & CODA-P & $59.59_{\pm 7.94}$ & $22.20_{\pm 5.61}$ & $\underline{49.47}_{\pm 2.79}$ & $41.21_{\pm 6.19}$ & $18.77_{\pm 6.28}$ & $28.98_{\pm 9.54}$ \\
 & MVP & $61.48_{\pm 2.55}$ & $50.23_{\pm 11.27}$ & $44.01_{\pm 4.18}$ & $62.17_{\pm 6.10}$ & $\textbf{30.92}_{\pm 2.26}$ & $62.20_{\pm 5.81}$ \\
 & MISA & $\underline{62.87}_{\pm 3.76}$ & $17.34_{\pm 6.20}$ & $44.62_{\pm 3.36}$ & $35.62_{\pm 11.69}$ & $19.48_{\pm 4.88}$ & $21.18_{\pm 10.00}$ \\
 & FlyPrompt (ours) & $\textbf{73.08}_{\pm 2.18}$ & $\textbf{8.38}_{\pm 1.52}$ & $\textbf{59.40}_{\pm 1.84}$ & $\textbf{23.40}_{\pm 3.20}$ & $\underline{30.78}_{\pm 8.41}$ & $\textbf{18.85}_{\pm 4.15}$ \\
\midrule
\multirow{6}{*}{iBOT-1K} & L2P & $48.97_{\pm 6.18}$ & $\underline{16.47}_{\pm 7.80}$ & $41.23_{\pm 7.31}$ & $\underline{33.01}_{\pm 8.96}$ & $19.75_{\pm 3.57}$ & $\textbf{21.26}_{\pm 11.09}$ \\
 & DualPrompt & $50.16_{\pm 4.91}$ & $20.45_{\pm 3.88}$ & $49.51_{\pm 1.57}$ & $35.14_{\pm 7.72}$ & $30.29_{\pm 3.97}$ & $32.70_{\pm 9.00}$ \\
 & CODA-P & $55.84_{\pm 5.28}$ & $22.72_{\pm 6.32}$ & $\underline{53.49}_{\pm 1.21}$ & $40.23_{\pm 6.19}$ & $28.85_{\pm 3.98}$ & $27.71_{\pm 9.65}$ \\
 & MVP & $\underline{56.41}_{\pm 2.00}$ & $53.47_{\pm 12.87}$ & $46.96_{\pm 3.97}$ & $56.26_{\pm 5.49}$ & $\underline{34.97}_{\pm 1.44}$ & $62.61_{\pm 6.88}$ \\
 & MISA & $52.58_{\pm 4.34}$ & $19.16_{\pm 4.04}$ & $48.89_{\pm 2.82}$ & $33.52_{\pm 10.12}$ & $27.98_{\pm 5.32}$ & $29.35_{\pm 6.60}$ \\
 & FlyPrompt (ours) & $\textbf{64.94}_{\pm 2.57}$ & $\textbf{11.52}_{\pm 3.71}$ & $\textbf{63.77}_{\pm 1.42}$ & $\textbf{20.89}_{\pm 3.75}$ & $\textbf{38.05}_{\pm 6.98}$ & $\underline{21.62}_{\pm 3.82}$ \\
\midrule
\multirow{6}{*}{DINO-1K} & L2P & $45.45_{\pm 6.86}$ & $\underline{14.82}_{\pm 6.81}$ & $38.98_{\pm 7.34}$ & $\underline{33.00}_{\pm 7.22}$ & $23.21_{\pm 3.68}$ & $\underline{24.23}_{\pm 9.93}$ \\
 & DualPrompt & $49.65_{\pm 5.46}$ & $18.69_{\pm 5.27}$ & $47.16_{\pm 1.05}$ & $37.89_{\pm 7.65}$ & $29.48_{\pm 5.88}$ & $31.35_{\pm 10.61}$ \\
 & CODA-P & $50.76_{\pm 5.65}$ & $19.43_{\pm 5.83}$ & $\underline{48.68}_{\pm 2.79}$ & $41.26_{\pm 6.20}$ & $29.43_{\pm 4.89}$ & $32.69_{\pm 10.65}$ \\
 & MVP & $\underline{52.41}_{\pm 1.48}$ & $54.70_{\pm 12.47}$ & $44.45_{\pm 4.11}$ & $56.54_{\pm 6.00}$ & $\underline{34.71}_{\pm 1.92}$ & $64.21_{\pm 7.82}$ \\
 & MISA & $49.81_{\pm 3.67}$ & $17.76_{\pm 4.28}$ & $46.61_{\pm 2.24}$ & $34.38_{\pm 10.60}$ & $27.46_{\pm 5.11}$ & $25.77_{\pm 10.77}$ \\
 & FlyPrompt (ours) & $\textbf{63.59}_{\pm 4.56}$ & $\textbf{9.04}_{\pm 1.73}$ & $\textbf{60.83}_{\pm 1.57}$ & $\textbf{20.65}_{\pm 3.08}$ & $\textbf{37.50}_{\pm 8.33}$ & $\textbf{19.47}_{\pm 4.84}$ \\
\midrule
\multirow{6}{*}{MoCo-1K} & L2P & $26.75_{\pm 6.56}$ & $19.68_{\pm 15.19}$ & $19.02_{\pm 2.75}$ & $43.59_{\pm 4.64}$ & $13.42_{\pm 3.89}$ & $\underline{26.38}_{\pm 11.63}$ \\
 & DualPrompt & $49.61_{\pm 7.76}$ & $\underline{14.79}_{\pm 4.09}$ & $41.37_{\pm 1.80}$ & $35.07_{\pm 5.93}$ & $21.87_{\pm 4.80}$ & $28.42_{\pm 9.47}$ \\
 & CODA-P & $48.26_{\pm 5.59}$ & $16.97_{\pm 6.15}$ & $44.71_{\pm 2.43}$ & $43.27_{\pm 2.27}$ & $22.33_{\pm 3.94}$ & $30.90_{\pm 10.77}$ \\
 & MVP & $52.82_{\pm 2.35}$ & $55.42_{\pm 15.00}$ & $38.68_{\pm 3.83}$ & $55.87_{\pm 4.26}$ & $\textbf{30.74}_{\pm 2.46}$ & $62.97_{\pm 5.67}$ \\
 & MISA & $\underline{53.46}_{\pm 6.41}$ & $16.19_{\pm 5.26}$ & $\underline{45.41}_{\pm 4.62}$ & $\underline{34.42}_{\pm 8.60}$ & $\underline{26.92}_{\pm 4.88}$ & $35.50_{\pm 7.97}$ \\
 & FlyPrompt (ours) & $\textbf{60.16}_{\pm 6.88}$ & $\textbf{7.69}_{\pm 2.42}$ & $\textbf{55.95}_{\pm 1.60}$ & $\textbf{21.27}_{\pm 2.89}$ & $26.52_{\pm 6.31}$ & $\textbf{20.69}_{\pm 3.58}$ \\
\bottomrule
\end{tabular}
}
\end{table}

Following \citet{moon2023online, kang2025advancing}, we consider the standard evaluation metrics $A_{\rm{auc}}$, $A_{\rm{last}}$, $A_{\rm{avg}}$ and $F_{\rm{last}}$ for GCL performance. We denote $R_{i,j}$ as the accuracy recorded right after session $i$ with respect to the data in the session $j$. We then maintain a matrix $\bm{R}$ whose $j$th column is the history of evaluation after each session with respect to the data in session $j$. Firstly we calculate the final average accuracy $A_{\rm{last}}$ as:
\begin{equation}
    A_{\rm{last}} = \frac{1}{T} \sum_{i=1}^{T} R_{T,i},
\end{equation}
average running accuracy $A_{\rm{avg}}$:
\begin{equation}
    A_{\rm{avg}} = \frac{1}{T} \sum_{i=1}^{T} R_{i,i},
\end{equation}
and the final average forgetting $F_{\rm{last}}$:
\begin{equation}
    F_{\rm{last}} = \frac{1}{T} \sum_{i=1}^{T} (\max(R_j) - R_{T,i}),
\end{equation}
where $A_{\rm{last}}$ and $A_{\rm{avg}}$ are the higher the better and $F_{\rm{last}}$ is the lower the better. They are all conventional metrics used in classic CL research.

Moreover, $A_{\rm{auc}}$ (higher the better) is the anytime inference metric proposed by \citet{koh2021online} to better evaluate the GCL performance during the learning
process. Specifically, the evaluation of GCL accuracy is performed every $b$ batches. Throughout this work, we fix $ b = 1000$. Denote the total number of the assessments performed as $S$, then given the history $\bm{a}\in \mathbb{R}^{S}$ (where $a_s$ shows the accuracy of the model at time stamp $s$ over the data it has observed so far), we calculate the area under curve of any-time accuracy $A_{\rm{auc}}$ as :
\begin{equation}
    A_{\rm{auc}} = \frac{1}{S} \sum_{s=1}^{S} a_{s}.
\end{equation}

\rebt{In addition to these metrics, we also report the backward transfer (BWT) metric~\citep{lin2022beyond} to quantify how learning later sessions influences performance on earlier ones. Using the same notation $R_{i,j}$ and $T$ as above, we define}
\begin{equation}
    \rebt{\mathrm{BWT} = \frac{1}{T-1} \sum_{i=1}^{T-1} \big(R_{T,i} - R_{i,i}\big).}
\end{equation}
\rebt{A positive BWT indicates that subsequent learning improves performance on previous sessions (positive backward transfer), while a negative value implies net forgetting on earlier sessions. BWT results of GCL methods are presented in Tab.~\ref{tab:method_cost_extended}.}

\subsection{Comparison with prominent offline PTM-based methods}

\begin{table}[h]
\centering
\caption{\rebt{Extended Comparison of performance, backward transfer, and computational cost across PTM-based CL methods. All performance results are reported as an average of five parallel runs ($\pm$ standard deviation) with different random seeds, over the CIFAR-100 dataset and Sup-21K backbone. Parameter counts are measured in millions. Time cost is reported in seconds per batch.}}
\label{tab:method_cost_extended}
\renewcommand\arraystretch{1.05}
\resizebox{0.98\textwidth}{!}{
\begin{tabular}{l|c|c|c|c|c|c}
\toprule
\textbf{Method} & $A_{\rm auc} (\%,\uparrow)$ & \textbf{BWT} ($\%, \uparrow$) & \textbf{Total Param.} & \textbf{Trainable Param.} & \textbf{Training Time} & \textbf{Inference Time} \\
\midrule
L2P & $76.23_{\pm 2.73}$ & $0.10_{\pm 2.65}$ & 86.01 & 0.22 & 5.57 & 0.95 \\
DualPrompt & $76.04_{\pm 3.32}$ & $-2.93_{\pm 2.42}$ & 86.35 & 0.55 & 4.78 & 0.90 \\
CODA-P & $79.13_{\pm 3.06}$ & $-0.83_{\pm 2.17}$ & 86.72 & 0.92 & 4.75 & 0.94 \\
MVP & $67.74_{\pm 4.96}$ & $-18.09_{\pm 3.24}$ & 86.12 & 0.32 & 5.35 & 1.27 \\
MISA & $\underline{80.35}_{\pm 2.39}$ & $-1.76_{\pm 2.28}$ & 86.37 & 0.58 & 4.78 & 0.90 \\
S-Prompt++ & $80.21_{\pm 2.55}$ & $0.81_{\pm 1.86}$ & 86.26 & 0.46 & 6.03 & 1.18 \\
HiDe-Prompt & $77.10_{\pm 3.81}$ & $\underline{3.35}_{\pm 2.71}$ & 86.81 & 0.94 & 6.13 & 1.27 \\
HiDe-LoRA & $80.07_{\pm 2.41}$ & $0.36_{\pm 0.94}$ & 87.39 & 1.51 & 6.80 & 1.17 \\
HiDe-Adapter & $79.52_{\pm 2.81}$ & $-2.05_{\pm 1.95}$ & 87.41 & 1.53 & 6.68 & 1.04 \\
NoRGa & $78.89_{\pm 3.33}$ & $2.72_{\pm 1.98}$ & 86.81 & 0.94 & 6.69 & 1.05 \\
SD-LoRA & $79.26_{\pm 2.21}$ & $-6.66_{\pm 3.22}$ & 87.72 & 1.92 & 7.24 & 0.82 \\
\rowcolor{gray!20} FlyPrompt (ours) & $\mathbf{83.24}_{\pm 2.23}$ & $\mathbf{4.35}_{\pm 1.19}$ & 87.08 & 0.46 & 4.96 & 0.92 \\
\bottomrule
\end{tabular}}
\end{table}

\begin{table}[h]
\centering
\caption{\rebt{Comparison between RanPAC variants and FlyPrompt over three GCL benchmarks. All results are reported as an average of five parallel runs ($\pm$ standard deviation) with different seeds.}}
\label{tab:ranpac_vs_flyprompt}
\renewcommand\arraystretch{1.05}
\resizebox{0.98\textwidth}{!}{
\begin{tabular}{l|ll|ll|ll}
\toprule
\multirow{2}{*}{\textbf{Method}} & \multicolumn{2}{c|}{\textbf{CIFAR-100}} & \multicolumn{2}{c|}{\textbf{ImageNet-R}} & \multicolumn{2}{c}{\textbf{CUB-200}} \\
& $A_{\rm{auc}} (\%,\uparrow)$ & $A_{\rm{last}} (\%,\uparrow)$ & $A_{\rm{auc}} (\%,\uparrow)$ & $A_{\rm{last}} (\%,\uparrow)$ & $A_{\rm{auc}} (\%,\uparrow)$ & $A_{\rm{last}} (\%,\uparrow)$ \\
\midrule
RanPAC$^\dagger$ & $69.91_{\pm 3.88}$ & $79.92_{\pm 0.07}$ & $47.14_{\pm 2.18}$ & $50.75_{\pm 2.15}$ & $60.18_{\pm 5.52}$ & $66.21_{\pm 6.15}$ \\
RanPAC$^\ddagger$ & $57.35_{\pm 8.23}$ & $77.65_{\pm 0.21}$ & $36.90_{\pm 4.17}$ & $44.39_{\pm 0.11}$ & $64.52_{\pm 8.23}$ & $71.65_{\pm 0.17}$ \\
RanPAC$^*$ & $\underline{77.88}_{\pm 4.28}$ & $\underline{86.52}_{\pm 1.15}$ & $\underline{53.18}_{\pm 2.22}$ & $\underline{54.71}_{\pm 2.48}$ & $\underline{69.64}_{\pm 3.89}$ & $\underline{72.30}_{\pm 1.09}$ \\
\rowcolor{gray!20} FlyPrompt (ours) & $\textbf{83.24}_{\pm 2.23}$ & $\textbf{86.76}_{\pm 0.73}$ & $\textbf{56.58}_{\pm 1.47}$ & $\textbf{55.27}_{\pm 0.91}$ & $\textbf{70.64}_{\pm 2.85}$ & $\textbf{73.40}_{\pm 1.88}$ \\
\bottomrule
\end{tabular}}
\end{table}

\subsection{CKA analysis of experts' representation similarity}\label{sec:cka_analysis}

\begin{figure}[ht]
    \centering
    \vspace{-0.2cm}
    \includegraphics[width=1.0\linewidth]{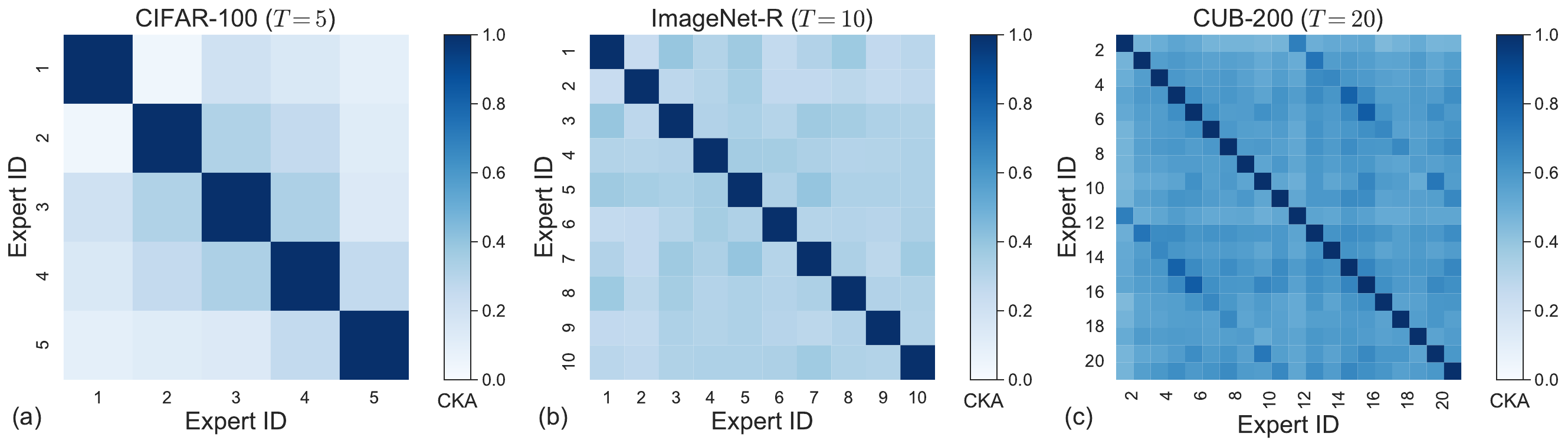}
    \vspace{-0.5cm}
    \caption{\rebt{CKA similarity of feature representations between experts of MISA on three datasets.}}
    \label{fig:cka_misa}
    \vspace{-0.2cm}
\end{figure}

\begin{figure}[ht]
    \centering
    \vspace{-0.2cm}
    \includegraphics[width=1.0\linewidth]{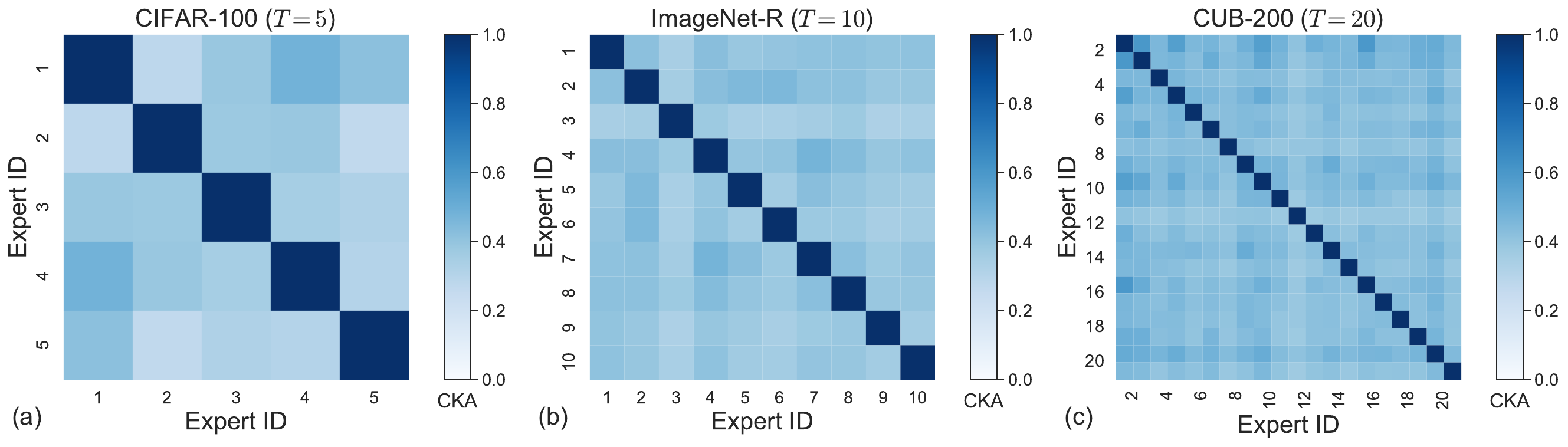}
    \vspace{-0.5cm}
    \caption{\rebt{CKA similarity of feature representations between FlyPrompt experts on three datasets.}}
    \label{fig:cka_flyprompt}
    \vspace{-0.2cm}
\end{figure}

\rebt{We use centered kernel alignment (CKA)~\citep{kornblith2019similarity} to quantify the similarity between expert-specific representations while factoring out the shared contribution of the frozen PTM backbone. For a given method and dataset, we first fix the backbone $f_{\bm{\theta}}$ and, for each expert $E_t$, apply its prompt (or expert-specific encoder parameters) to obtain a matrix of CLS features $\bm{Z}_t \in \mathbb{R}^{n \times d}$ over a common set of $n$ samples. Using the same backbone together with the expert-shared parameters (e.g., the global prompt in DualPrompt), we also compute a common representation matrix $\bm{Z}^{\mathrm{com}} \in \mathbb{R}^{n \times d}$. We then define the residual features of expert $E_t$ as $\widetilde{\bm{Z}}_t = \bm{Z}_t - \bm{Z}^{\mathrm{com}}$, which remove the largely stable PTM-driven component and highlight the expert-specific modulation that emerges during online GCL. Based on these residual features, we measure the (linear) CKA between experts $E_t$ and $E_{t'}$ as
\begin{equation}
    \mathrm{CKA}(\widetilde{\bm{Z}}_t, \widetilde{\bm{Z}}_{t'}) =
    \frac{\|\widetilde{\bm{Z}}_t^\top \widetilde{\bm{Z}}_{t'}\|_{\mathrm{F}}^2}
         {\|\widetilde{\bm{Z}}_t^\top \widetilde{\bm{Z}}_t\|_{\mathrm{F}} \,\|\widetilde{\bm{Z}}_{t'}^\top \widetilde{\bm{Z}}_{t'}\|_{\mathrm{F}}}.
\end{equation}
This similarity measure is invariant to isotropic rescaling and orthogonal transformations of the features, and is well-suited for comparing representations across experts. We report the pairwise CKA scores between all experts as a heatmap: diagonal entries capture self-similarity, whereas off-diagonal values reveal the degree of specialization or redundancy among experts after removing the common PTM-induced component.}

\subsection{Extra Hyperparameter sensitivity test results}\label{sec:hyper_param}

\begin{figure}[h]
    \vspace{-0.2cm}
    \centering
    \includegraphics[width=1.00\linewidth]{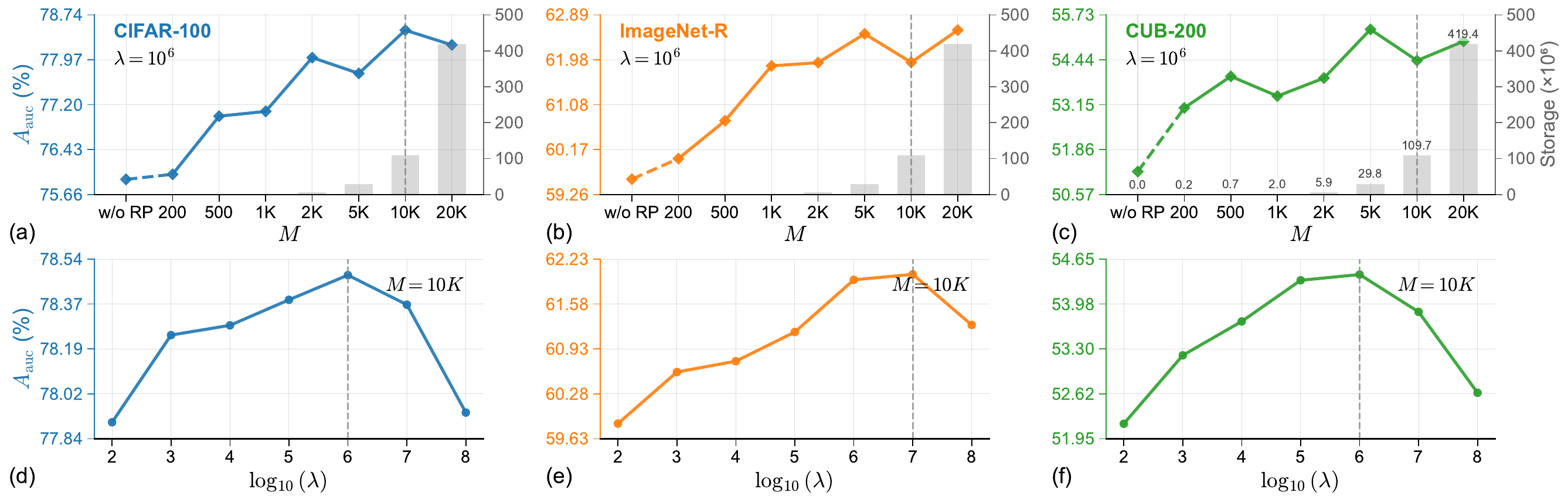}
    \caption{Analysis of hyperparameters in REAR \textbf{[Backbone: Sup-21K/1K]}. (a-c) Different random projection dimension $M$ with fixed $\lambda=10^6$: we report $A_{\rm{auc}}$ and extra storage cost (bar) given $M$. (d-f) Different regularization parameter $\lambda$ with fixed $M=10^4$.}
    \vspace{-0.2cm}
\end{figure}

\begin{figure}[ht]
    \vspace{-0.2cm}
    \centering
    \includegraphics[width=1.00\linewidth]{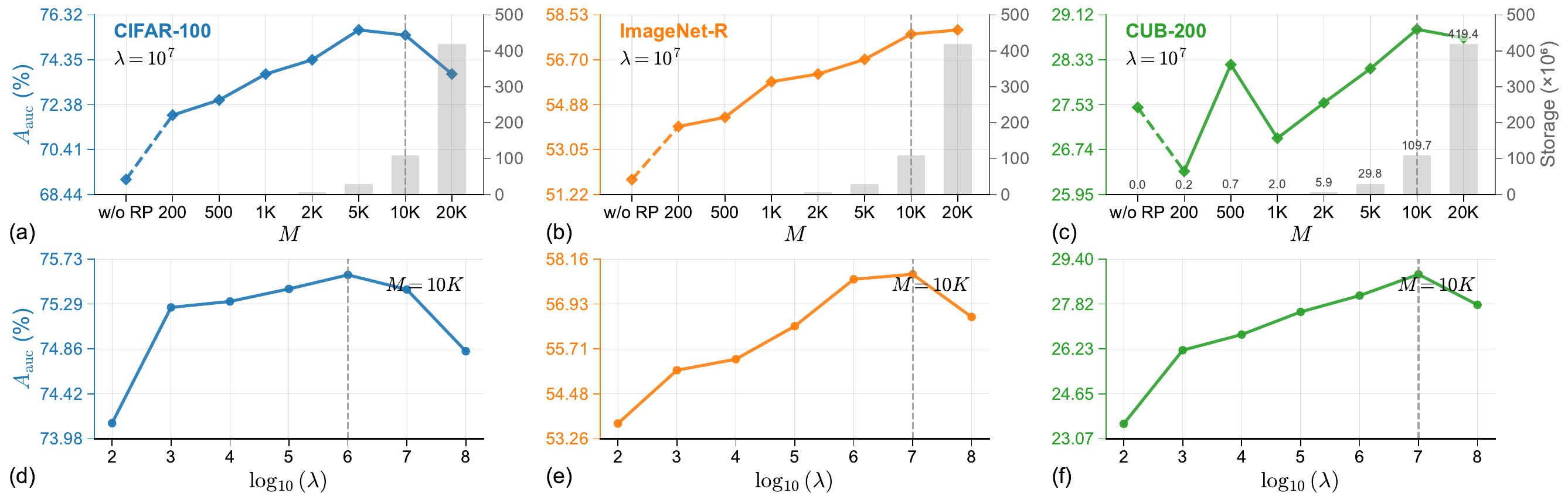}    \vspace{-0.5cm}
    \caption{Analysis of hyperparameters in REAR \textbf{[Backbone: iBOT-21K]}. (a-c) Different random projection dimension $M$ with fixed $\lambda=10^7$: we report $A_{\rm{auc}}$ and extra storage cost (bar) given $M$. (d-f) Different regularization parameter $\lambda$ with fixed $M=10^4$.}
    \vspace{-0.2cm}
\end{figure}

\begin{figure}[ht]
    \vspace{-0.2cm}
    \centering
    \includegraphics[width=1.00\linewidth]{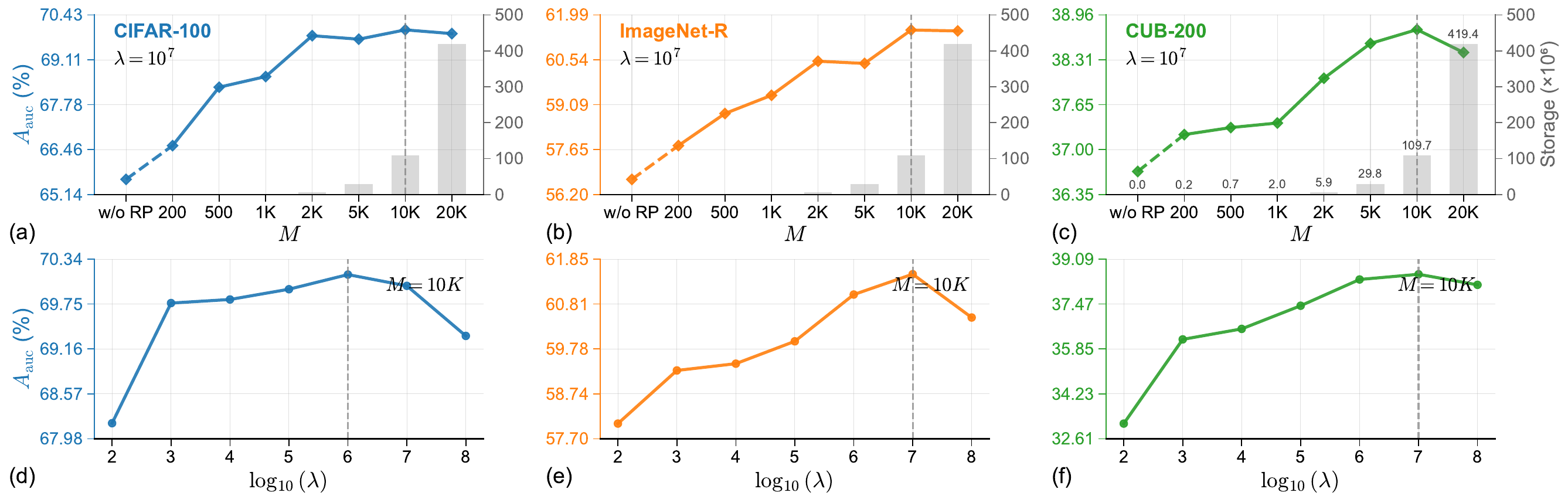}
    \vspace{-0.5cm}
    \caption{Analysis of hyperparameters in REAR \textbf{[Backbone: iBOT-1K]}. (a-c) Different random projection dimension $M$ with fixed $\lambda=10^7$: we report $A_{\rm{auc}}$ and extra storage cost (bar) given $M$. (d-f) Different regularization parameter $\lambda$ with fixed $M=10^4$.}
    \vspace{-0.2cm}
\end{figure}

\begin{figure}[ht]
    \vspace{-0.2cm}
    \centering
    \includegraphics[width=1.00\linewidth]{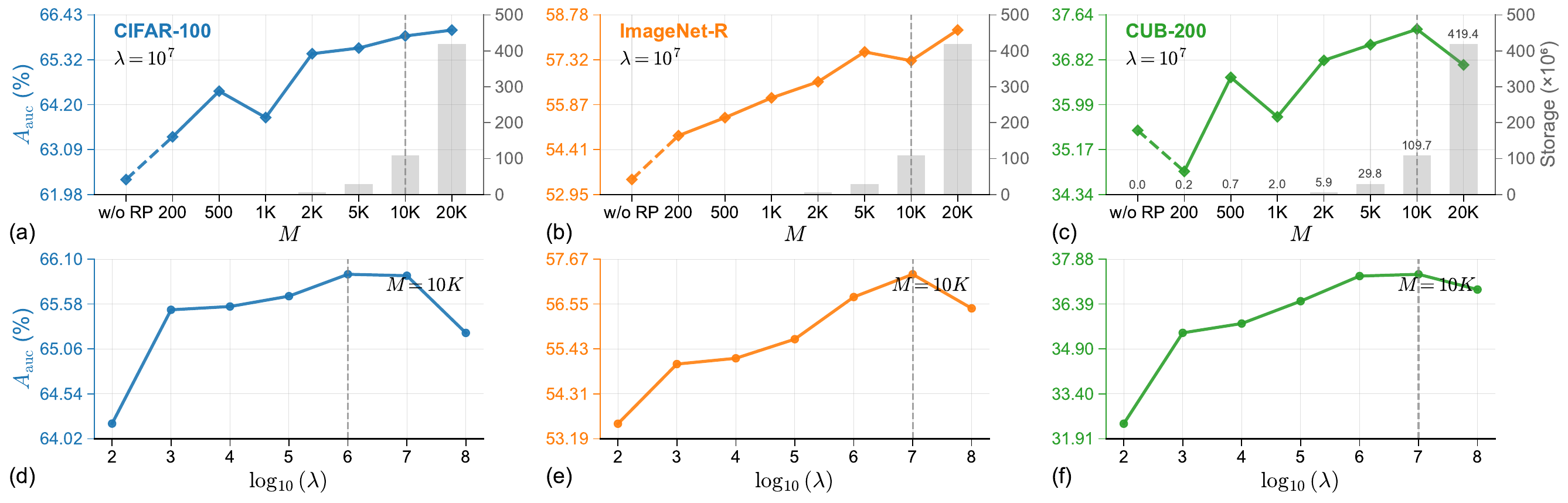}
    \vspace{-0.5cm}
    \caption{Analysis of hyperparameters in REAR \textbf{[Backbone: DINO-1K]}. (a-c) Different random projection dimension $M$ with fixed $\lambda=10^7$: we report $A_{\rm{auc}}$ and extra storage cost (bar) given $M$. (d-f) Different regularization parameter $\lambda$ with fixed $M=10^4$.}
    \vspace{-0.2cm}
\end{figure}

\begin{figure}[ht]
    \vspace{-0.2cm}
    \centering
    \includegraphics[width=1.00\linewidth]{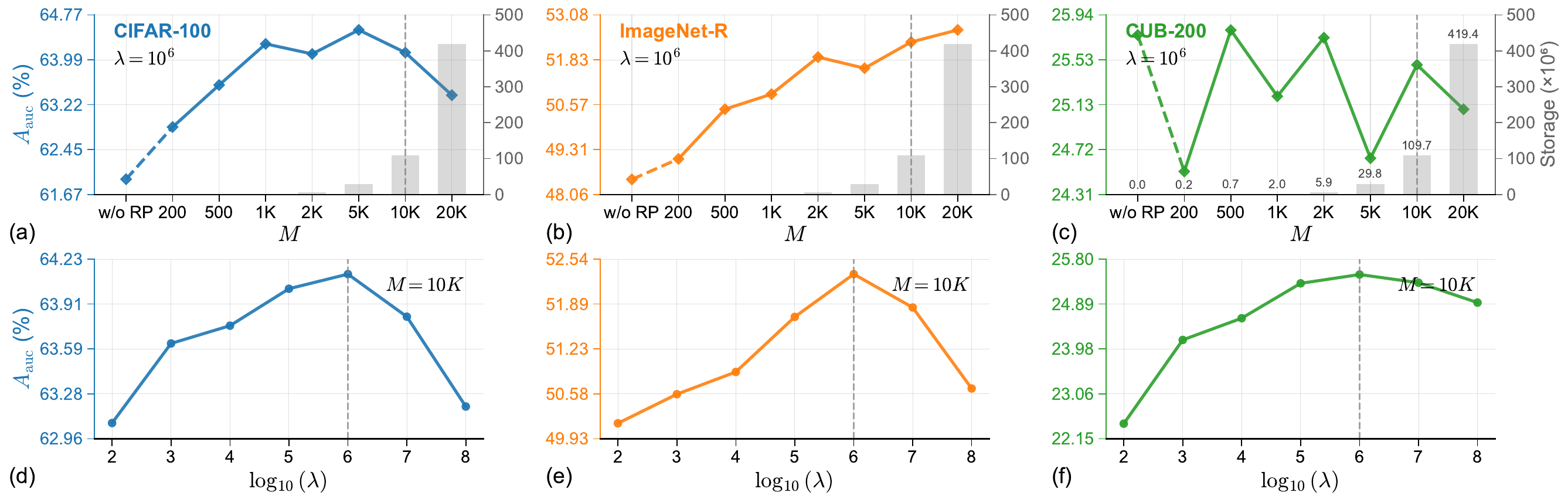}
    \vspace{-0.5cm}
    \caption{Analysis of hyperparameters in REAR \textbf{[Backbone: MoCo v3-1K]}. (a-c) Different random projection dimension $M$ with fixed $\lambda=10^6$: we report $A_{\rm{auc}}$ and extra storage cost (bar) given $M$. (d-f) Different regularization parameter $\lambda$ with fixed $M=10^4$.}
    \vspace{-0.2cm}
\end{figure}

\clearpage
\subsection{Additional ablation study}

\begin{table}[h]
\vspace{-0.2cm}
\centering
\caption{\rebt{Effect of REAR and TE$^2$ on various PTM-based CL methods over three GCL benchmarks. All results are reported as an average of five runs ($\pm$ standard deviation) over Sup-21K backbone.}}
\label{tab:cross_ablation_all}

\vspace{-0.2cm}
\renewcommand\arraystretch{1.05}
\resizebox{0.98\textwidth}{!}{
\begin{tabular}{c l|cc|cc|cc}
\toprule
\multirow{2}{*}{\textbf{Setup}} & \multirow{2}{*}{\textbf{Method}} & \multicolumn{2}{c|}{\textbf{CIFAR-100}} & \multicolumn{2}{c|}{\textbf{ImageNet-R}} & \multicolumn{2}{c}{\textbf{CUB-200}} \\
& & $A_{\rm{auc}} (\%,\uparrow)$ & $A_{\rm{last}} (\%,\uparrow)$ & $A_{\rm{auc}} (\%,\uparrow)$ & $A_{\rm{last}} (\%,\uparrow)$ & $A_{\rm{auc}} (\%,\uparrow)$ & $A_{\rm{last}} (\%,\uparrow)$ \\ \midrule
\multirow{6}{*}{Baseline} & DualPrompt & $76.04_{\pm 3.32}$ & $76.62_{\pm 0.74}$ & $46.13_{\pm 1.94}$ & $40.80_{\pm 1.04}$ & $65.03_{\pm 2.24}$ & $62.43_{\pm 1.78}$ \\
& MVP & $67.74_{\pm 4.96}$ & $63.22_{\pm 0.69}$ & $39.50_{\pm 1.41}$ & $32.63_{\pm 3.95}$ & $54.69_{\pm 3.14}$ & $50.07_{\pm 3.86}$ \\
& MISA & $80.35_{\pm 2.39}$ & $80.75_{\pm 1.24}$ & $51.52_{\pm 2.09}$ & $45.08_{\pm 1.43}$ & $65.40_{\pm 3.01}$ & $60.20_{\pm 1.82}$ \\
& S-Prompt++ & $80.21_{\pm 2.55}$ & $83.48_{\pm 1.20}$ & $52.14_{\pm 1.65}$ & $49.13_{\pm 1.60}$ & $66.61_{\pm 2.21}$ & $64.73_{\pm 2.25}$ \\
& HiDe-Prompt & $77.10_{\pm 3.81}$ & $81.77_{\pm 2.00}$ & $53.77_{\pm 1.09}$ & $49.87_{\pm 3.01}$ & $67.05_{\pm 2.37}$ & $67.12_{\pm 0.50}$ \\
& NoRGa & $78.89_{\pm 3.33}$ & $83.03_{\pm 1.20}$ & $54.12_{\pm 1.37}$ & $50.09_{\pm 3.66}$ & $67.16_{\pm 2.44}$ & $67.06_{\pm 0.58}$ \\ \midrule
\multirow{6}{*}{w/ REAR} & DualPrompt & $80.63_{\pm 2.25}$ & $83.65_{\pm 1.25}$ & $53.16_{\pm 1.26}$ & $51.11_{\pm 0.91}$ & $65.96_{\pm 2.50}$ & $63.81_{\pm 1.74}$ \\
& MVP & $67.44_{\pm 4.89}$ & $62.33_{\pm 1.62}$ & $38.87_{\pm 1.27}$ & $31.59_{\pm 4.19}$ & $53.65_{\pm 3.17}$ & $48.25_{\pm 3.38}$ \\
& MISA & $82.03_{\pm 1.97}$ & $83.82_{\pm 1.04}$ & $\underline{57.30}_{\pm 1.12}$ & $54.02_{\pm 0.66}$ & $68.04_{\pm 2.34}$ & $65.55_{\pm 2.69}$ \\
& S-Prompt++ & $81.43_{\pm 2.45}$ & $83.93_{\pm 0.84}$ & $54.74_{\pm 1.53}$ & $52.30_{\pm 1.05}$ & $66.80_{\pm 2.48}$ & $64.72_{\pm 1.97}$ \\
& HiDe-Prompt & $78.41_{\pm 2.64}$ & $83.46_{\pm 1.14}$ & $53.61_{\pm 1.16}$ & $49.26_{\pm 2.56}$ & $67.05_{\pm 2.36}$ & $66.99_{\pm 1.01}$ \\
& NoRGa & $79.37_{\pm 2.71}$ & $83.79_{\pm 1.28}$ & $54.78_{\pm 1.05}$ & $50.24_{\pm 3.39}$ & $67.26_{\pm 2.50}$ & $67.20_{\pm 0.85}$ \\ \midrule
\multirow{6}{*}{w/ TE$^2$} & DualPrompt & $76.83_{\pm 3.44}$ & $78.00_{\pm 0.61}$ & $47.11_{\pm 2.19}$ & $42.15_{\pm 0.81}$ & $66.47_{\pm 2.72}$ & $65.42_{\pm 1.55}$ \\
& MVP & $68.91_{\pm 4.86}$ & $64.28_{\pm 1.00}$ & $42.06_{\pm 1.11}$ & $35.94_{\pm 0.92}$ & $56.98_{\pm 2.79}$ & $54.14_{\pm 2.89}$ \\
& MISA & $81.65_{\pm 2.24}$ & $82.80_{\pm 1.06}$ & $54.05_{\pm 1.70}$ & $48.46_{\pm 1.15}$ & $69.30_{\pm 2.43}$ & $67.29_{\pm 2.44}$ \\
& S-Prompt++ & $81.93_{\pm 2.21}$ & $83.98_{\pm 0.65}$ & $55.37_{\pm 1.64}$ & $52.91_{\pm 1.53}$ & $67.97_{\pm 2.51}$ & $67.68_{\pm 1.49}$ \\
& HiDe-Prompt & $77.46_{\pm 3.56}$ & $82.09_{\pm 1.92}$ & $54.83_{\pm 1.08}$ & $50.26_{\pm 2.78}$ & $67.77_{\pm 2.60}$ & $69.64_{\pm 0.76}$ \\
& NoRGa & $79.16_{\pm 3.28}$ & $83.01_{\pm 1.45}$ & $54.08_{\pm 1.58}$ & $51.81_{\pm 3.51}$ & $67.97_{\pm 2.70}$ & $69.58_{\pm 0.90}$ \\ \midrule
\multirow{7}{*}{w/ both} & DualPrompt & $82.33_{\pm 2.17}$ & $86.14_{\pm 0.90}$ & $54.72_{\pm 1.29}$ & $54.00_{\pm 0.76}$ & $69.65_{\pm 2.93}$ & $72.75_{\pm 2.03}$ \\
& MVP & $68.93_{\pm 4.60}$ & $64.52_{\pm 1.45}$ & $41.59_{\pm 1.34}$ & $35.45_{\pm 1.44}$ & $55.65_{\pm 2.79}$ & $51.84_{\pm 2.44}$ \\
& MISA & $\textbf{83.60}_{\pm 2.08}$ & $86.66_{\pm 0.55}$ & $\textbf{59.12}_{\pm 1.02}$ & $\textbf{56.62}_{\pm 0.82}$ & $\textbf{72.38}_{\pm 2.98}$ & $\textbf{74.68}_{\pm 2.11}$ \\
& S-Prompt++ & $83.11_{\pm 2.30}$ & $\underline{86.67}_{\pm 0.45}$ & $56.57_{\pm 1.48}$ & $55.24_{\pm 1.24}$ & $\underline{70.65}_{\pm 2.84}$ & $\underline{73.42}_{\pm 1.88}$ \\
& HiDe-Prompt & $78.60_{\pm 2.53}$ & $83.14_{\pm 1.12}$ & $54.79_{\pm 1.13}$ & $51.30_{\pm 2.81}$ & $68.27_{\pm 2.57}$ & $69.61_{\pm 0.84}$ \\
& NoRGa & $79.37_{\pm 2.74}$ & $83.79_{\pm 0.96}$ & $55.75_{\pm 1.31}$ & $52.50_{\pm 3.33}$ & $68.32_{\pm 2.64}$ & $70.03_{\pm 0.67}$ \\
\rowcolor{gray!20} & FlyPrompt (ours) & $\underline{83.24}_{\pm 2.23}$ & $\textbf{86.76}_{\pm 0.73}$ & $56.58_{\pm 1.47}$ & $\underline{55.27}_{\pm 0.91}$ & $70.64_{\pm 2.85}$ & $73.40_{\pm 1.88}$ \\
\bottomrule
\end{tabular}}
\vspace{-0.4cm}
\end{table}

\subsection{Different logit mask strategies for GCL masks}

\begin{table}[h]
\vspace{-0.2cm}
\centering
\caption{\rebt{Performance of different logit mask strategies for GCL methods. All results are reported as an average of five parallel runs ($\pm$ standard deviation) with different random seeds, over Sup-21K.}}
\label{tab:mask_types}
\vspace{-0.2cm}
\renewcommand\arraystretch{1.05}
\resizebox{0.98\textwidth}{!}{
\begin{tabular}{ll|ll|ll|ll}
\toprule
\multirow{2}{*}{\textbf{Mask Type}} & \multirow{2}{*}{\textbf{Method}} & \multicolumn{2}{c|}{\textbf{CIFAR-100}} & \multicolumn{2}{c|}{\textbf{ImageNet-R}} & \multicolumn{2}{c}{\textbf{CUB-200}} \\
& & $A_{\rm{auc}} (\%,\uparrow)$ & $A_{\rm{last}} (\%,\uparrow)$ & $A_{\rm{auc}} (\%,\uparrow)$ & $A_{\rm{last}} (\%,\uparrow)$ & $A_{\rm{auc}} (\%,\uparrow)$ & $A_{\rm{last}} (\%,\uparrow)$ \\
\midrule
\multirow{6}{*}{No Mask} & L2P & $62.74_{\pm 4.39}$ & $56.08_{\pm 2.10}$ & $34.58_{\pm 1.23}$ & $26.18_{\pm 4.69}$ & $54.83_{\pm 2.65}$ & $47.94_{\pm 4.64}$ \\
& DualPrompt & $66.68_{\pm 5.25}$ & $61.98_{\pm 4.09}$ & $41.62_{\pm 1.43}$ & $\underline{36.08}_{\pm 1.39}$ & $56.68_{\pm 2.57}$ & $50.71_{\pm 4.21}$ \\
& CODA-P & $66.15_{\pm 5.22}$ & $58.42_{\pm 1.39}$ & $40.71_{\pm 3.50}$ & $30.56_{\pm 5.15}$ & $56.44_{\pm 2.84}$ & $49.50_{\pm 6.04}$ \\
& MVP & $68.18_{\pm 4.85}$ & $63.96_{\pm 1.48}$ & $38.79_{\pm 1.12}$ & $32.01_{\pm 2.80}$ & $54.74_{\pm 2.01}$ & $\underline{52.88}_{\pm 3.08}$ \\
& MISA & $\underline{69.85}_{\pm 3.73}$ & $\underline{65.13}_{\pm 1.70}$ & $\underline{45.15}_{\pm 1.75}$ & $35.91_{\pm 3.48}$ & $\underline{57.87}_{\pm 2.49}$ & $52.15_{\pm 4.27}$ \\
\rowcolor{gray!20} & FlyPrompt (ours) & $\textbf{78.73}_{\pm 3.55}$ & $\textbf{83.62}_{\pm 0.50}$ & $\textbf{51.39}_{\pm 1.80}$ & $\textbf{48.72}_{\pm 1.06}$ & $\textbf{69.22}_{\pm 3.04}$ & $\textbf{73.07}_{\pm 2.34}$ \\
\midrule
\multirow{6}{*}{Random Mask} & L2P & $62.46_{\pm 4.54}$ & $54.67_{\pm 1.39}$ & $33.10_{\pm 1.34}$ & $24.73_{\pm 4.52}$ & $52.92_{\pm 2.51}$ & $45.05_{\pm 4.89}$ \\
& DualPrompt & $65.48_{\pm 4.45}$ & $59.66_{\pm 1.70}$ & $36.94_{\pm 1.77}$ & $29.22_{\pm 1.66}$ & $55.07_{\pm 2.21}$ & $47.83_{\pm 5.00}$ \\
& CODA-P & $65.58_{\pm 5.23}$ & $57.05_{\pm 1.97}$ & $39.23_{\pm 2.80}$ & $28.91_{\pm 5.22}$ & $55.64_{\pm 2.27}$ & $48.00_{\pm 5.50}$ \\
& MVP & $67.75_{\pm 4.95}$ & $\underline{63.21}_{\pm 0.78}$ & $39.45_{\pm 1.42}$ & $\underline{32.70}_{\pm 3.96}$ & $54.72_{\pm 3.14}$ & $\underline{50.01}_{\pm 3.81}$ \\
& MISA & $\underline{68.23}_{\pm 3.82}$ & $61.55_{\pm 2.02}$ & $\underline{40.67}_{\pm 1.93}$ & $29.48_{\pm 4.32}$ & $\underline{56.06}_{\pm 2.19}$ & $48.12_{\pm 4.86}$ \\
\rowcolor{gray!20} & FlyPrompt (ours) & $\textbf{78.32}_{\pm 3.48}$ & $\textbf{81.88}_{\pm 0.88}$ & $\textbf{51.67}_{\pm 1.30}$ & $\textbf{47.26}_{\pm 1.25}$ & $\textbf{68.63}_{\pm 2.82}$ & $\textbf{69.52}_{\pm 4.03}$ \\
\midrule
\multirow{6}{*}{Seen-Class Mask} & L2P & $62.22_{\pm 4.39}$ & $53.36_{\pm 2.03}$ & $33.80_{\pm 1.22}$ & $25.20_{\pm 4.65}$ & $53.10_{\pm 2.60}$ & $45.55_{\pm 4.96}$ \\
& DualPrompt & $65.29_{\pm 4.62}$ & $57.74_{\pm 2.53}$ & $37.31_{\pm 1.80}$ & $29.72_{\pm 1.49}$ & $55.25_{\pm 2.43}$ & $47.67_{\pm 4.65}$ \\
& CODA-P & $65.63_{\pm 5.40}$ & $56.75_{\pm 1.51}$ & $40.13_{\pm 2.46}$ & $29.35_{\pm 5.01}$ & $55.80_{\pm 2.58}$ & $47.71_{\pm 5.55}$ \\
& MVP & $67.72_{\pm 4.87}$ & $\underline{62.99}_{\pm 0.95}$ & $39.57_{\pm 1.44}$ & $\underline{32.72}_{\pm 4.00}$ & $54.72_{\pm 3.14}$ & $\underline{50.14}_{\pm 3.80}$ \\
& MISA & $\underline{68.34}_{\pm 3.90}$ & $61.44_{\pm 2.54}$ & $\underline{41.17}_{\pm 1.85}$ & $29.97_{\pm 4.14}$ & $\underline{56.44}_{\pm 2.42}$ & $48.58_{\pm 4.74}$ \\
\rowcolor{gray!20} & FlyPrompt (ours) & $\textbf{78.75}_{\pm 3.52}$ & $\textbf{82.87}_{\pm 0.82}$ & $\textbf{52.39}_{\pm 1.46}$ & $\textbf{48.02}_{\pm 1.00}$ & $\textbf{69.28}_{\pm 2.95}$ & $\textbf{70.91}_{\pm 2.83}$ \\
\midrule
\multirow{6}{*}{Batch Seen-Class Mask} & L2P & $76.23_{\pm 2.73}$ & $79.11_{\pm 1.43}$ & $44.40_{\pm 1.03}$ & $42.03_{\pm 1.72}$ & $64.30_{\pm 2.18}$ & $61.42_{\pm 2.13}$ \\
& DualPrompt & $76.04_{\pm 3.32}$ & $76.62_{\pm 0.74}$ & $46.13_{\pm 1.94}$ & $40.80_{\pm 1.04}$ & $65.03_{\pm 2.24}$ & $62.43_{\pm 1.78}$ \\
& CODA-P & $79.13_{\pm 3.06}$ & $\underline{80.91}_{\pm 0.70}$ & $\underline{51.87}_{\pm 2.81}$ & $\underline{48.09}_{\pm 2.75}$ & $\underline{66.01}_{\pm 2.20}$ & $\underline{62.90}_{\pm 2.46}$ \\
& MVP & $67.74_{\pm 4.96}$ & $63.22_{\pm 0.69}$ & $39.50_{\pm 1.41}$ & $32.63_{\pm 3.95}$ & $54.69_{\pm 3.14}$ & $50.07_{\pm 3.86}$ \\
& MISA & $\underline{80.35}_{\pm 2.39}$ & $80.75_{\pm 1.24}$ & $51.52_{\pm 2.09}$ & $45.08_{\pm 1.43}$ & $65.40_{\pm 3.01}$ & $60.20_{\pm 1.82}$ \\
\rowcolor{gray!20} & FlyPrompt (ours) & $\textbf{83.24}_{\pm 2.23}$ & $\textbf{86.76}_{\pm 0.73}$ & $\textbf{56.58}_{\pm 1.47}$ & $\textbf{55.27}_{\pm 0.91}$ & $\textbf{70.64}_{\pm 2.85}$ & $\textbf{73.40}_{\pm 1.88}$ \\
\bottomrule
\end{tabular}}
\vspace{-0.4cm}
\end{table}

\rebt{(1) \textbf{No Mask}, standard softmax over all output classes. (2) \textbf{Random Mask}, for each sample $(\boldsymbol{x}, y)$, set $m_y = 0$ and assign $m_c = 0$ or $-\infty$ randomly with 0.5 probability for each previously seen class $c \ne y$. (3) \textbf{Seen-Class Mask}, setting $m_c = 0$ for all previously seen classes, $-\infty$ otherwise. (4) \textbf{Batch Seen-Class Mask} (used), setting $m_c = 0$ only for the classes present in the current batch $\boldsymbol{y}$.}

\subsection{Complete results of different EMA decay rates for temporal ensemble}\label{sec:ema_head_all}

\begin{table}[htbp]
\vspace{-0.3cm}
\centering
\caption{Performance comparison of different EMA decay rates for TE\textsuperscript{2} across all PTMs. All results are reported as an average of five parallel runs ($\pm$ standard deviation) with different seeds.}
\label{tab:ema_head_all_results}
\vspace{-0.2cm}
\resizebox{0.8\textwidth}{!}{
\begin{tabular}{ll|cc|cc}
\toprule
\multirow{2}{*}{\textbf{PTM}} & \multirow{2}{*}{\textbf{EMA Decay Rate}} & \multicolumn{2}{c|}{\textbf{CIFAR-100}} & \multicolumn{2}{c}{\textbf{ImageNet-R}} \\
& & $A_{\rm{auc}} (\%,\uparrow)$ & $A_{\rm{last}} (\%,\uparrow)$ & $A_{\rm{auc}} (\%,\uparrow)$ & $A_{\rm{last}} (\%,\uparrow)$ \\
\midrule
\multirow{6}{*}{Sup-21K} & online & $81.90_{\pm 2.20}$ & $84.23_{\pm 1.32}$ & $54.91_{\pm 1.32}$ & $52.58_{\pm 1.36}$ \\
& 0.9 & $82.81_{\pm 2.28}$ & $86.36_{\pm 0.54}$ & $\underline{56.36}_{\pm 1.52}$ & $55.09_{\pm 0.89}$ \\
& 0.99 & $82.84_{\pm 2.51}$ & $\underline{86.41}_{\pm 0.39}$ & $55.94_{\pm 1.65}$ & $54.67_{\pm 0.89}$ \\
& 0.999 & $81.80_{\pm 2.37}$ & $84.39_{\pm 0.83}$ & $55.15_{\pm 1.39}$ & $53.52_{\pm 0.80}$ \\
& 0.9,0.99 & $\textbf{83.24}_{\pm 2.23}$ & $\textbf{86.76}_{\pm 0.73}$ & $\textbf{56.58}_{\pm 1.47}$ & $\underline{55.27}_{\pm 0.91}$ \\
& 0.9,0.99,0.999 & $\underline{82.99}_{\pm 2.22}$ & $86.24_{\pm 0.79}$ & $56.35_{\pm 1.72}$ & $\textbf{55.50}_{\pm 0.77}$ \\
\midrule
\multirow{6}{*}{Sup-21K/1K} & online & $71.28_{\pm 2.58}$ & $69.73_{\pm 5.78}$ & $53.12_{\pm 2.19}$ & $44.69_{\pm 3.65}$ \\
& 0.9 & $75.59_{\pm 2.93}$ & $77.39_{\pm 6.14}$ & $61.56_{\pm 1.48}$ & $\underline{57.35}_{\pm 1.63}$ \\
& 0.99 & $77.96_{\pm 2.15}$ & $79.71_{\pm 3.67}$ & $60.96_{\pm 1.94}$ & $56.32_{\pm 1.87}$ \\
& 0.999 & $74.13_{\pm 1.64}$ & $74.68_{\pm 2.93}$ & $53.96_{\pm 1.30}$ & $46.55_{\pm 1.17}$ \\
& 0.9,0.99 & $\textbf{78.48}_{\pm 1.31}$ & $\underline{80.39}_{\pm 3.54}$ & $\underline{62.01}_{\pm 2.32}$ & $56.55_{\pm 3.94}$ \\
& 0.9,0.99,0.999 & $\underline{78.44}_{\pm 1.38}$ & $\textbf{80.55}_{\pm 4.12}$ & $\textbf{62.59}_{\pm 2.22}$ & $\textbf{58.00}_{\pm 1.79}$ \\
\midrule
\multirow{6}{*}{iBOT-21K} & online & $67.01_{\pm 3.85}$ & $65.38_{\pm 7.24}$ & $45.32_{\pm 1.46}$ & $35.45_{\pm 4.60}$ \\
& 0.9 & $72.43_{\pm 1.56}$ & $75.46_{\pm 4.96}$ & $\underline{56.37}_{\pm 2.06}$ & $\underline{52.37}_{\pm 0.87}$ \\
& 0.99 & $\underline{75.03}_{\pm 0.78}$ & $\underline{78.08}_{\pm 3.35}$ & $55.93_{\pm 2.35}$ & $51.64_{\pm 0.53}$ \\
& 0.999 & $68.96_{\pm 3.22}$ & $69.30_{\pm 2.94}$ & $43.78_{\pm 2.21}$ & $34.64_{\pm 2.90}$ \\
& 0.9,0.99 & $\textbf{75.58}_{\pm 1.70}$ & $\textbf{79.36}_{\pm 3.47}$ & $\textbf{57.75}_{\pm 2.12}$ & $\textbf{54.39}_{\pm 1.29}$ \\
& 0.9,0.99,0.999 & $74.16_{\pm 2.47}$ & $76.61_{\pm 2.26}$ & $55.94_{\pm 3.12}$ & $52.12_{\pm 0.79}$ \\
\midrule
\multirow{6}{*}{iBOT-1K} & online & $61.38_{\pm 2.32}$ & $60.58_{\pm 7.91}$ & $50.79_{\pm 1.43}$ & $41.92_{\pm 1.76}$ \\
& 0.9 & $68.01_{\pm 1.18}$ & $71.99_{\pm 4.62}$ & $60.52_{\pm 1.51}$ & $\underline{56.98}_{\pm 1.17}$ \\
& 0.99 & $\textbf{71.50}_{\pm 1.00}$ & $\textbf{75.20}_{\pm 3.10}$ & $\underline{60.66}_{\pm 1.56}$ & $56.57_{\pm 0.70}$ \\
& 0.999 & $65.88_{\pm 3.51}$ & $67.07_{\pm 1.95}$ & $50.27_{\pm 1.40}$ & $42.71_{\pm 2.03}$ \\
& 0.9,0.99 & $\underline{70.14}_{\pm 1.76}$ & $\underline{74.84}_{\pm 4.26}$ & $\textbf{61.50}_{\pm 1.66}$ & $\textbf{57.18}_{\pm 1.36}$ \\
& 0.9,0.99,0.999 & $67.93_{\pm 3.07}$ & $70.69_{\pm 3.50}$ & $59.60_{\pm 1.88}$ & $55.35_{\pm 1.34}$ \\
\midrule
\multirow{6}{*}{DINO-1K} & online & $58.61_{\pm 3.26}$ & $60.76_{\pm 6.77}$ & $47.35_{\pm 2.24}$ & $41.33_{\pm 2.04}$ \\
& 0.9 & $62.95_{\pm 4.13}$ & $69.65_{\pm 6.54}$ & $\underline{56.83}_{\pm 1.47}$ & $\underline{53.68}_{\pm 0.83}$ \\
& 0.99 & $\textbf{66.42}_{\pm 2.51}$ & $\textbf{73.03}_{\pm 3.61}$ & $56.67_{\pm 1.74}$ & $53.23_{\pm 0.77}$ \\
& 0.999 & $60.37_{\pm 4.38}$ & $64.06_{\pm 2.37}$ & $46.38_{\pm 1.51}$ & $39.69_{\pm 2.07}$ \\
& 0.9,0.99 & $\underline{65.92}_{\pm 2.74}$ & $\underline{72.66}_{\pm 4.52}$ & $\textbf{57.29}_{\pm 2.40}$ & $\textbf{54.72}_{\pm 1.89}$ \\
& 0.9,0.99,0.999 & $65.27_{\pm 2.98}$ & $70.83_{\pm 4.32}$ & $55.66_{\pm 1.57}$ & $51.91_{\pm 1.73}$ \\
\midrule
\multirow{6}{*}{MoCo-1K} & online & $57.90_{\pm 5.29}$ & $62.20_{\pm 10.03}$ & $42.81_{\pm 0.83}$ & $35.46_{\pm 3.32}$ \\
& 0.9 & $61.96_{\pm 6.50}$ & $69.83_{\pm 10.05}$ & $51.47_{\pm 1.64}$ & $47.88_{\pm 1.49}$ \\
& 0.99 & $\textbf{65.95}_{\pm 4.40}$ & $\textbf{73.28}_{\pm 6.96}$ & $50.75_{\pm 1.89}$ & $47.42_{\pm 1.29}$ \\
& 0.999 & $61.26_{\pm 3.27}$ & $66.42_{\pm 5.07}$ & $42.33_{\pm 1.54}$ & $36.82_{\pm 2.14}$ \\
& 0.9,0.99 & $\underline{64.12}_{\pm 5.18}$ & $\underline{71.51}_{\pm 8.48}$ & $\textbf{52.32}_{\pm 1.50}$ & $\textbf{49.06}_{\pm 1.35}$ \\
& 0.9,0.99,0.999 & $64.03_{\pm 4.47}$ & $69.92_{\pm 6.13}$ & $\underline{51.64}_{\pm 2.59}$ & $\underline{48.69}_{\pm 0.68}$ \\
\bottomrule
\end{tabular}
\vspace{-0.2cm}
}
\end{table}

\subsection{Comparison of different routing algorithms on random expanded features}\label{sec:routing_algorithms}

\begin{table}[h]
\centering
\caption{\rebt{Comparison of routing algorithms based on random expanded features in FlyPrompt. $M$: expansion dimension (default 10,000); $T$: number of experts (default 5); $H$: hidden dimension of MLP ($H=512$ is used); $K$: number of nearest neighbors ($K=10$ is used). Time cost is reported in seconds per batch on CIFAR-100. All performance results are reported as an average of five parallel runs ($\pm$ standard deviation) with different random seeds over Sup-21K.}}
\label{tab:routing_algorithms}
\renewcommand\arraystretch{1.05}
\resizebox{\textwidth}{!}{
\begin{tabular}{l|ccc|cc|cc|cc}
\toprule
\multirow{2}{*}{\shortstack[c]{\textbf{Routing} \\ \textbf{Algorithm}}} & \multirow{2}{*}{\shortstack[c]{\textbf{Train} \\ \textbf{Time}}} & \multirow{2}{*}{\shortstack[c]{\textbf{Inference} \\ \textbf{Time}}} & \multirow{2}{*}{\shortstack[c]{\textbf{Inference} \\ \textbf{Complexity}}} & \multicolumn{2}{c|}{\textbf{CIFAR-100}} & \multicolumn{2}{c|}{\textbf{ImageNet-R}} & \multicolumn{2}{c}{\textbf{CUB-200}} \\
& & & & $A_{\rm{auc}} (\%,\uparrow)$ & $A_{\rm{last}} (\%,\uparrow)$ & $A_{\rm{auc}} (\%,\uparrow)$ & $A_{\rm{last}} (\%,\uparrow)$ & $A_{\rm{auc}} (\%,\uparrow)$ & $A_{\rm{last}} (\%,\uparrow)$ \\ \midrule
Prototype Similarity & 5.58 & 0.90 & $O(MT)$ & $80.67_{\pm 2.48}$ & $83.80_{\pm 1.15}$ & $54.29_{\pm 1.72}$ & $52.36_{\pm 1.12}$ & $67.00_{\pm 2.77}$ & $66.66_{\pm 1.56}$ \\
Na\"ive Bayes & 5.30 & 0.93 & $O(MT)$ & $\underline{82.73}_{\pm 2.17}$ & $\underline{85.51}_{\pm 0.97}$ & $55.85_{\pm 1.83}$ & $\underline{53.84}_{\pm 1.40}$ & $\underline{69.08}_{\pm 2.91}$ & $\underline{69.63}_{\pm 1.18}$ \\
MLP & 7.03 & 1.00 & $O(MH+HT)$ & $81.75_{\pm 2.09}$ & $82.76_{\pm 1.98}$ & $\underline{56.31}_{\pm 1.29}$ & $53.70_{\pm 0.96}$ & $68.53_{\pm 2.39}$ & $67.92_{\pm 1.91}$ \\
K-Means & 6.11 & 1.49 & $O(KMT)$ & $82.22_{\pm 2.04}$ & $85.27_{\pm 0.83}$ & $54.93_{\pm 1.94}$ & $53.08_{\pm 1.43}$ & $68.33_{\pm 2.71}$ & $68.24_{\pm 2.59}$ \\
\rowcolor{gray!20} Ridge Regression (ours) & 4.96 & 0.92 & $O(MT)$ & $\textbf{83.24}_{\pm 2.23}$ & $\textbf{86.76}_{\pm 0.73}$ & $\textbf{56.58}_{\pm 1.47}$ & $\textbf{55.27}_{\pm 0.91}$ & $\textbf{70.64}_{\pm 2.85}$ & $\textbf{73.40}_{\pm 1.88}$ \\
\bottomrule
\end{tabular}}
\end{table}

\rebt{(1) \textbf{Prototype Similarity}, cosine similarity to each expert's mean feature.
(2) \textbf{Naïve Bayes}, assuming Gaussian-distributed features per expert.
(3) \textbf{MLP}, a two-layer MLP router, as in HiDe~\citep{wang2023hierarchical}.
(4) \textbf{K-Means}, clusters each expert's features and routes based on the nearest center.
(5) \textbf{Ridge Regression (Ours)}, the REAR analytic router trained once over accumulated statistics.}

\clearpage
\subsection{Complete results of different aggregation methods for temporal ensemble}\label{sec:ensemble_all}

\begin{table}[htbp]
\centering
\caption{Performance comparison of different aggregation choices for TE\textsuperscript{2} across all PTMs. All results are reported as an average of five parallel runs ($\pm$ standard deviation) with different random seeds.}
\label{tab:ensemble_all_results}
\resizebox{\textwidth}{!}{
\begin{tabular}{ll|cc|cc|cc}
\toprule
\multirow{2}{*}{\textbf{PTM}} & \multirow{2}{*}{\textbf{Ensemble Method}} & \multicolumn{2}{c|}{\textbf{CIFAR-100}} & \multicolumn{2}{c|}{\textbf{ImageNet-R}} & \multicolumn{2}{c}{\textbf{CUB-200}} \\
& & $A_{\rm{auc}} (\%,\uparrow)$ & $A_{\rm{last}} (\%,\uparrow)$ & $A_{\rm{auc}} (\%,\uparrow)$ & $A_{\rm{last}} (\%,\uparrow)$ & $A_{\rm{auc}} (\%,\uparrow)$ & $A_{\rm{last}} (\%,\uparrow)$ \\
\midrule
\multirow{6}{*}{Sup-21K} & Mean & $81.34_{\pm 1.64}$ & $85.11_{\pm 1.03}$ & $52.71_{\pm 1.36}$ & $53.24_{\pm 1.22}$ & $68.49_{\pm 2.57}$ & $\underline{73.95}_{\pm 1.90}$ \\
& Max Prob & $82.29_{\pm 2.25}$ & $84.95_{\pm 1.20}$ & $55.56_{\pm 1.38}$ & $53.53_{\pm 1.40}$ & $68.00_{\pm 2.50}$ & $66.56_{\pm 1.60}$ \\
& Min Entropy & $81.92_{\pm 2.19}$ & $84.23_{\pm 1.32}$ & $55.05_{\pm 1.31}$ & $52.88_{\pm 1.38}$ & $66.78_{\pm 2.53}$ & $64.73_{\pm 1.36}$ \\
& SoftMax+Mean & $82.30_{\pm 1.82}$ & $85.98_{\pm 0.80}$ & $\underline{56.16}_{\pm 1.56}$ & $\textbf{55.53}_{\pm 0.89}$ & $\textbf{70.77}_{\pm 3.00}$ & $\textbf{74.86}_{\pm 1.54}$ \\
& SoftMax+Max & $\textbf{83.24}_{\pm 2.23}$ & $\textbf{86.76}_{\pm 0.73}$ & $\textbf{56.58}_{\pm 1.47}$ & $\underline{55.27}_{\pm 0.91}$ & $\underline{70.64}_{\pm 2.85}$ & $73.40_{\pm 1.88}$ \\
& SoftMax+Min & $\underline{83.11}_{\pm 2.34}$ & $\underline{86.50}_{\pm 0.64}$ & $55.94_{\pm 1.41}$ & $54.24_{\pm 1.34}$ & $69.86_{\pm 2.80}$ & $71.51_{\pm 1.79}$ \\
\midrule
\multirow{6}{*}{Sup-21K/1K} & Mean & $\textbf{79.44}_{\pm 1.14}$ & $\textbf{82.82}_{\pm 1.63}$ & $60.84_{\pm 2.09}$ & $\textbf{56.97}_{\pm 2.72}$ & $\underline{56.64}_{\pm 4.59}$ & $\underline{60.39}_{\pm 3.95}$ \\
& Max Prob & $72.58_{\pm 2.99}$ & $71.31_{\pm 7.52}$ & $54.28_{\pm 2.06}$ & $46.44_{\pm 3.23}$ & $47.05_{\pm 3.41}$ & $44.42_{\pm 3.59}$ \\
& Min Entropy & $71.27_{\pm 2.58}$ & $69.73_{\pm 5.82}$ & $53.05_{\pm 2.11}$ & $44.80_{\pm 3.42}$ & $45.36_{\pm 3.15}$ & $42.82_{\pm 4.31}$ \\
& SoftMax+Mean & $77.72_{\pm 1.48}$ & $\underline{81.77}_{\pm 3.15}$ & $60.35_{\pm 1.86}$ & $\underline{56.63}_{\pm 2.05}$ & $\textbf{57.68}_{\pm 5.02}$ & $\textbf{62.60}_{\pm 4.05}$ \\
& SoftMax+Max & $\underline{78.48}_{\pm 1.31}$ & $80.39_{\pm 3.54}$ & $\textbf{62.01}_{\pm 2.32}$ & $56.55_{\pm 3.94}$ & $54.42_{\pm 4.67}$ & $55.50_{\pm 3.55}$ \\
& SoftMax+Min & $78.30_{\pm 1.25}$ & $80.07_{\pm 3.68}$ & $\underline{60.89}_{\pm 2.15}$ & $55.59_{\pm 3.37}$ & $54.12_{\pm 4.61}$ & $54.75_{\pm 3.57}$ \\
\midrule
\multirow{6}{*}{iBOT-21K} & Mean & $\textbf{76.59}_{\pm 1.33}$ & $\textbf{81.40}_{\pm 2.48}$ & $54.92_{\pm 1.86}$ & $51.82_{\pm 1.97}$ & $\textbf{29.31}_{\pm 5.17}$ & $\textbf{37.60}_{\pm 4.77}$ \\
& Max Prob & $68.51_{\pm 4.08}$ & $67.60_{\pm 8.11}$ & $46.74_{\pm 1.39}$ & $37.64_{\pm 4.20}$ & $24.14_{\pm 3.53}$ & $28.53_{\pm 3.46}$ \\
& Min Entropy & $67.03_{\pm 3.85}$ & $65.44_{\pm 7.22}$ & $45.33_{\pm 1.36}$ & $35.86_{\pm 4.55}$ & $23.44_{\pm 3.39}$ & $27.40_{\pm 3.60}$ \\
& SoftMax+Mean & $74.69_{\pm 1.77}$ & $\underline{80.58}_{\pm 2.75}$ & $54.41_{\pm 1.95}$ & $\underline{52.58}_{\pm 1.10}$ & $\underline{29.09}_{\pm 6.33}$ & $36.28_{\pm 7.54}$ \\
& SoftMax+Max & $\underline{75.58}_{\pm 1.70}$ & $79.36_{\pm 3.47}$ & $\textbf{57.75}_{\pm 2.12}$ & $\textbf{54.39}_{\pm 1.29}$ & $28.86_{\pm 5.84}$ & $\underline{36.79}_{\pm 7.58}$ \\
& SoftMax+Min & $74.87_{\pm 1.89}$ & $77.60_{\pm 4.71}$ & $\underline{55.98}_{\pm 1.90}$ & $52.03_{\pm 1.61}$ & $27.47_{\pm 5.43}$ & $34.57_{\pm 5.12}$ \\
\midrule
\multirow{6}{*}{iBOT-1K} & Mean & $\textbf{70.96}_{\pm 1.13}$ & $\textbf{76.38}_{\pm 4.03}$ & $58.15_{\pm 1.53}$ & $54.24_{\pm 1.39}$ & $37.64_{\pm 5.03}$ & $\underline{44.46}_{\pm 3.01}$ \\
& Max Prob & $62.92_{\pm 2.41}$ & $63.09_{\pm 8.29}$ & $52.02_{\pm 1.32}$ & $44.15_{\pm 1.01}$ & $31.53_{\pm 3.46}$ & $34.27_{\pm 4.92}$ \\
& Min Entropy & $61.36_{\pm 2.30}$ & $60.73_{\pm 7.76}$ & $50.74_{\pm 1.30}$ & $42.27_{\pm 1.46}$ & $30.53_{\pm 3.62}$ & $32.82_{\pm 4.98}$ \\
& SoftMax+Mean & $67.24_{\pm 1.92}$ & $72.80_{\pm 4.63}$ & $56.14_{\pm 1.33}$ & $53.35_{\pm 0.95}$ & $\underline{38.08}_{\pm 5.57}$ & $44.13_{\pm 4.27}$ \\
& SoftMax+Max & $\underline{70.14}_{\pm 1.76}$ & $\underline{74.84}_{\pm 4.26}$ & $\textbf{61.50}_{\pm 1.66}$ & $\textbf{57.18}_{\pm 1.36}$ & $\textbf{38.54}_{\pm 5.72}$ & $\textbf{45.00}_{\pm 4.19}$ \\
& SoftMax+Min & $69.72_{\pm 1.64}$ & $73.76_{\pm 5.07}$ & $\underline{59.87}_{\pm 1.52}$ & $\underline{55.39}_{\pm 0.90}$ & $37.31_{\pm 5.53}$ & $41.86_{\pm 4.01}$ \\
\midrule
\multirow{6}{*}{DINO-1K} & Mean & $\textbf{66.72}_{\pm 1.89}$ & $\textbf{73.94}_{\pm 3.66}$ & $54.00_{\pm 2.34}$ & $51.60_{\pm 2.21}$ & $\textbf{37.91}_{\pm 6.38}$ & $\underline{44.02}_{\pm 2.34}$ \\
& Max Prob & $59.76_{\pm 3.26}$ & $62.61_{\pm 7.14}$ & $48.12_{\pm 2.33}$ & $42.17_{\pm 2.21}$ & $31.36_{\pm 3.40}$ & $34.50_{\pm 4.80}$ \\
& Min Entropy & $58.51_{\pm 3.22}$ & $60.79_{\pm 6.67}$ & $47.14_{\pm 2.31}$ & $40.95_{\pm 2.13}$ & $30.15_{\pm 3.34}$ & $32.83_{\pm 4.91}$ \\
& SoftMax+Mean & $64.79_{\pm 3.29}$ & $\underline{72.87}_{\pm 4.57}$ & $52.98_{\pm 1.49}$ & $51.38_{\pm 0.95}$ & $37.22_{\pm 7.26}$ & $43.74_{\pm 5.34}$ \\
& SoftMax+Max & $\underline{65.92}_{\pm 2.74}$ & $72.66_{\pm 4.52}$ & $\textbf{57.29}_{\pm 2.40}$ & $\textbf{54.72}_{\pm 1.89}$ & $\underline{37.38}_{\pm 5.86}$ & $\textbf{44.66}_{\pm 2.35}$ \\
& SoftMax+Min & $65.48_{\pm 2.69}$ & $71.88_{\pm 5.38}$ & $\underline{55.69}_{\pm 2.51}$ & $\underline{52.55}_{\pm 1.90}$ & $36.48_{\pm 5.73}$ & $41.43_{\pm 2.14}$ \\
\midrule
\multirow{6}{*}{MoCo-1K} & Mean & $\textbf{64.29}_{\pm 6.09}$ & $\textbf{72.17}_{\pm 8.65}$ & $49.69_{\pm 1.28}$ & $46.86_{\pm 0.90}$ & $\textbf{27.92}_{\pm 5.19}$ & $\textbf{33.32}_{\pm 3.58}$ \\
& Max Prob & $58.70_{\pm 5.09}$ & $63.89_{\pm 10.06}$ & $44.07_{\pm 0.90}$ & $37.06_{\pm 3.05}$ & $21.35_{\pm 3.06}$ & $24.21_{\pm 4.51}$ \\
& Min Entropy & $57.84_{\pm 5.27}$ & $62.21_{\pm 9.97}$ & $42.79_{\pm 0.82}$ & $35.51_{\pm 3.34}$ & $20.59_{\pm 2.95}$ & $22.75_{\pm 4.42}$ \\
& SoftMax+Mean & $62.90_{\pm 6.00}$ & $\underline{71.71}_{\pm 8.20}$ & $50.56_{\pm 1.57}$ & $\underline{47.95}_{\pm 0.70}$ & $\underline{26.95}_{\pm 5.22}$ & $\underline{31.50}_{\pm 3.85}$ \\
& SoftMax+Max & $\underline{64.12}_{\pm 5.18}$ & $71.51_{\pm 8.48}$ & $\textbf{52.32}_{\pm 1.50}$ & $\textbf{49.06}_{\pm 1.35}$ & $25.49_{\pm 4.53}$ & $30.44_{\pm 4.50}$ \\
& SoftMax+Min & $63.92_{\pm 5.03}$ & $71.29_{\pm 8.50}$ & $\underline{51.46}_{\pm 1.44}$ & $47.93_{\pm 1.27}$ & $25.26_{\pm 4.52}$ & $29.33_{\pm 3.88}$ \\
\bottomrule
\end{tabular}
}
\end{table}

\subsection{More Results on scalability and efficiency of FlyPrompt}

\begin{table}[h]
\centering
\caption{\rebt{Parameter counts, storage and computational complexity breakdown of FlyPrompt components. $M$: expansion dimension (default 10,000); $T$: number of experts (default 5); $l$: prompt length (default 20); $d$: embedding dimension (default 768). Results are reported in millions.}}
\label{tab:components_cost}
\renewcommand\arraystretch{1.05}
\resizebox{0.9\textwidth}{!}{
\begin{tabular}{l|c|c|c|c|c}
\toprule
\textbf{Components} & \textbf{Total Param.} & \textbf{Trainable Param.} & \textbf{Storage} & \textbf{Storage Cost} & \textbf{Computation Cost} \\
\midrule
G matrix & 0.00 & 0.00 & 100 & $O(M^2)$ & $O(M^3)$ \\
Q matrix & 0.00 & 0.00 & 0.05 & $O(MT)$ & $O(MT)$ \\
Router Head & 0.05 & 0.00 & 0.05 & $O(MT)$ & $O(MT)$ \\
Prompts & 0.38 & 0.38 & 0.38 & $O(ld)$ & $O(l^2d)$ \\
TE$^2$ heads & 0.77 & 0.08 & 0.77 & $O(dT)$ & $O(dT)$ \\
\bottomrule
\end{tabular}}
\end{table}

\clearpage
\section{Pseudo-code of FlyPrompt}\label{sec:pseudo_code}

\subsection{Pseudo-code: online REAR updates and \texorpdfstring{\TEsquare}{TESquare} aggregation}

\definecolor{lightgreen}{HTML}{098842}

\begin{algorithm}[H]
\caption{FlyPrompt: online REAR maintenance and \TEsquare inference}
\label{alg:flyprompt}
\begin{spacing}{1.40}
\begin{algorithmic}[1]
\STATE \textbf{Inputs}: sessions $\{\mathcal{D}_t\}_{t=1}^T$, backbone $f_{\bm{\theta}}$, random matrix $\bm{R}{\in}\mathbb{R}^{d\times M}$, ridge $\lambda$, EMA decays $\{\alpha_j\}_{j=1}^n$, online iterations $k$.
\STATE \textbf{Initialize}: $\bm{G}{\gets} \bm{0}_{M\times M}$, $\bm{Q}{\gets} \bm{0}_{M\times T}$, prompt set $\mathcal{P}{\gets}\emptyset$, online head $(\bm{W},\bm{b})$, logit mask $\bm{m}{\gets} \bm{0}$
\STATE
\STATE \textbf{(1) Online Training Phase}
\FOR{$t{=}1$ to $T$}
  \STATE Set expert $E_t$: $\bm{p}_t {\gets} \tfrac{1}{t-1}\sum_{i=1}^{t-1}\bm{p}_i$ if $t{>}1$ else random
  \STATE $\mathcal{P} \gets \mathcal{P} \cup\{\bm{p}_t\}$
  \STATE Initialize EMA heads: $(\bm{W}_t^{(j)},\bm{b}_t^{(j)}){\gets}(\bm{W},\bm{b}),~\forall j\in\{1,\dots,n\}$
  \FOR{each batch $(\bm{X},\bm{y}){\subset}\mathcal{D}_t$}
     \rebt{\STATE Set logit mask $\bm{m}$: for any class $c\in \bm{y}$, $m_c{\leftarrow}0$, and for $c^{\prime}\notin \bm{y}$, $m_{c^\prime}{\leftarrow} - \infty$}
     \FOR{\rebt{1 to $k$}}
        \STATE \rebt{Update $(\bm{W},\bm{b})$ and $\bm{p}_t$ by minimizing $\operatorname{CE}(f_{\bm{\theta}}(\bm{X};\bm{p}_t)\bm{W}^\top{+}\bm{b}{+}\bm{m},\,\bm{y})$}
        \STATE \rebt{EMA for $E_t$: $\bm{W}_t^{(j)}{\leftarrow} \alpha_j\bm{W}_t^{(j)}{+}(1{-}\alpha_j)\bm{W}$, $\bm{b}_t^{(j)}{\leftarrow} \alpha_j\bm{b}_t^{(j)}{+}(1{-}\alpha_j)\bm{b}$, $\forall j$}
     \ENDFOR
     \rebt{\STATE $\bm{H}{\gets} f_{\bm{\theta}}(\bm{X};\bm{p}_t)$; $\bm{\Phi}{\gets} \sigma(\bm{H}\bm{R})$}
     \rebt{\STATE Update REAR stats: $\bm{G}{\leftarrow} \bm{G}{+}\bm{\Phi}^\top\bm{\Phi}$; $\bm{Q}{\leftarrow} \bm{Q}{+}\bm{\Phi}^\top \bm{C}_t$} \hfill \textcolor{lightgreen}{(one-hot $\bm{C}_t$ for expert $E_t$)}
  \ENDFOR
\ENDFOR
\rebt{\STATE Update closed-form router offline: $\widehat{\bm{U}}^\top {\leftarrow} (\bm{G}{+}\lambda\bm{I})^{-1}\bm{Q}$} \hfill \textcolor{lightgreen}{(only once after the training phase)}
\STATE
\STATE \textbf{(2) Inference Phase}
\FOR{$\bm{x}$ from the test dataset}
    \STATE Get routing score: $\bm{s}(\bm{x}){\gets} \sigma(f_{\bm{\theta}}(\bm{x})\bm{R})\widehat{\bm{U}}^\top$
    \STATE Select the expert: $e{\gets} \arg\max_{t\le T}s_t(\bm{x})$
    \STATE Get online head outputs: $\bm{z}^{(0)}{\gets} f_{\bm{\theta}}(\bm{x};\bm{p}_e)\bm{W}^\top{+}\bm{b}$
    \STATE Get EMA heads output: $\bm{z}^{(j)}{\gets} f_{\bm{\theta}}(\bm{x};\bm{p}_e)\bm{W}_e^{(j)\top}{+}\bm{b}_e^{(j)}$
    \STATE Get aggregated ensemble output: $\hat{\bm{z}}(\bm{x}){\gets} \max_{j\in\{0,\dots,n\}}\operatorname{softmax}(\bm{z}^{(j)}{+}\bm{m})$
    \STATE Final output: $\hat{y}(\bm{x}){=}\arg\max_c \hat{z}_c(\bm{x})$
\ENDFOR
\end{algorithmic}
\end{spacing}
\end{algorithm}

\subsubsection{Complexity}
\begin{itemize}
  \item Memory for REAR: storing $\bm G\in\mathbb R^{M\times M}$ and $\bm{Q}\in \mathbb{R}^{M\times T}$ is $O(M^2)$.
  \item Time complexity of solving the analytic router: $O(M^3)$ (matrix inverse and multiply).
  \item Per-sample cost: forming $\bm\varphi(\bm x)=\sigma(\bm h^\top\bm R)$ costs $O(dM)$ (matrix multiply).
\end{itemize}

\end{document}